\title{The Dimension of Self-Directed Learning}
\DeclareMathOperator*{\argmin}{arg\,min}
\begin{document}

\maketitle

\begin{abstract}
Understanding the self-directed learning complexity has been an important problem that has captured the attention of the online learning theory community since the early 1990s.
Within this framework, the learner is allowed to adaptively choose its next data point in making predictions unlike the setting in adversarial online learning.

In this paper, we study the self-directed learning complexity in both the binary and multi-class settings, and we develop a dimension, namely $SDdim$, that exactly characterizes the self-directed learning mistake-bound for any concept class.
The intuition behind $SDdim$ can be understood as a two-player game called the ``labelling game''.
Armed with this two-player game, we calculate $SDdim$ on a whole host of examples with notable results on axis-aligned rectangles, VC dimension $1$ classes, and linear separators.
We demonstrate several learnability gaps with a central focus on self-directed learning and offline sequence learning models that include either the best or worst ordering.
Finally, we extend our analysis to the self-directed binary agnostic setting where we derive upper and lower bounds.
\end{abstract}

\begin{keywords}%
  Self-Directed Learning, Online Learning, Mistake-Bounds
\end{keywords}

\section{Introduction}

Imagine a setting where you are asked to solve a 1000-piece jigsaw puzzle designed by your friend. 
One-by-one, you select an unused piece and place it where you think it appropriately belongs.
If the piece is misplaced, your friend will tell you its correct position.
After knowing the right answer, you select another unused piece and repeat this cycle until you solve the jigsaw puzzle.
Your goal is to make the least number of mistakes while solving the puzzle.
Which parts of the puzzle do you prioritize? 
What order of pieces do you select and predict positions for?
How do you gauge the current condition of the jigsaw puzzle to determine the next piece chosen?

These nuanced questions highlight the necessity to study a different variation of online learning.
If one were to view yourself as a learning algorithm, then choosing a different piece at each stage of the puzzle represents adaptively selecting a sequence of instances, the predicted and correct positions of the pieces are the predicted and true labels respectively, and the mistakes made while solving the puzzle resemble a mistake-bound. 
Such an online learning model, described as \textit{self-directed learning},
allows the learner, from a specified subset of the instance space, to choose its next example to predict on.
After each prediction, the learner observes the true label and the cycle repeats.
The goal of the self-directed learner boils down to making predictions on an adaptively chosen sequence of instances to minimize mistakes.
The study of self-directed learning can be traced back to \citet{goldman1994power} who framed it as a natural variation of the standard membership query model proposed by \citet{angluin:87} where queries are only counted if the prediction is incorrect.
\citet{goldman1994power} launched a new line of research into the self-directed complexity by analyzing it on a wide variety of concept classes.
Then, \citet{ben1995self} extended this study by formally deriving self-directed mistake-bounds and resolving open questions between the self-directed complexity and the VC-dimension. 

How does self-directed learning fit within the larger scope of online learning theory?
Observing the past couple decades of online learning literature, it's clear that its highly popular counterpart, adversarial online learning, has dominated attention within this field.
The seminal work by Littlestone \citep{littlestone:88} introduced a combinatorial characterization that captures the mistake-bound of adversarial online learning under the binary realizable setting.
Inspired by the Littlestone dimension, characterizing the mistake-bounds of different online learning frameworks garnered much attention in the mid-1990s.
\citet{ben1995self,ben1997online} introduced further variations of the online learning model which are termed as offline learning models. 
In these models, the entire sequence of instances is either chosen by the learner or adversary before the prediction process starts.
To understand their relationship to the self-directed model, \citet{ben1995self,ben1997online} formulated their mistake-bounds to understand the power of adaptiveness in choosing the sequence of instances "on-the-fly".

In this paper, we study the self-directed learning complexity in both the binary and multi-class settings, and we develop a dimension, namely $SDdim$, that exactly characterizes the self-directed learning mistake-bound for any concept class. 
Our key objective in studying the self-directed learning setting lies in understanding the behavior within the multi-class setting to create a complete theory of self-directed learning under the realizable setting. 
Another primary interest is to find a neat combinatorial characterization of the optimal mistake-bound for self-directed learning which is defined as $M_{sd}(H)$.
The quantity $M_{sd}(H)$ can be understood as the best (over all algorithms) worst-case mistake-bound for the concept class $H$ (over all subsets of the domain $\mathcal{X}$ and any choice of the target concept).

We have organized the paper into the following sections.
In Section $2$, we formally setup the self-directed scenario in the multi-class setting.
Section $3$ describes a two-player game called the \textit{labelling game} we designed to easily compute $SDdim$.
The main result of our paper is stated in the following Theorem $1$ with Section $4$ devoted to its proof:
\begin{theorem}
    $SDdim(H) = M_{sd}(H)$.
    \label{thm:SDdim=m_sd}
\end{theorem}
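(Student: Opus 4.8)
The plan is to prove the equality by establishing the two matching inequalities $M_{sd}(H) \le SDdim(H)$ and $M_{sd}(H) \ge SDdim(H)$, each by induction on the value of $SDdim$. Because $SDdim$ is defined through the labelling game, both directions reduce to translating a strategy for one player of the game into a strategy for the corresponding party in the self-directed prediction protocol. The recursive structure of the game --- where one round consists of the instance-choosing player committing to an instance together with a predicted label, and the labelling player responding with a consistent true label that restricts the version space --- mirrors exactly one round of the self-directed protocol, so induction on the game value is the natural engine. The base case is $SDdim = 0$, where the version space is label-determined on every admissible instance and no mistake can be forced.

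For the upper bound $M_{sd}(H) \le SDdim(H)$, I would extract an explicit self-directed learner from the optimal labelling-game strategy of the instance-choosing player. At each round, given the current version space $V$ of concepts consistent with the labels seen so far, the learner selects the instance $x$ prescribed by the game and predicts the label $\hat y$ whose restricted version space $V_{x \to \hat y}$ has the largest dimension value; this is the multi-class analogue of predicting the ``majority'' branch. The key inductive claim is that whenever the adversary forces a mistake by returning some $y \ne \hat y$, the restricted class $V_{x \to y}$ has $SDdim$ at most $SDdim(V) - 1$, whereas a correct prediction leaves a subproblem of dimension at most $SDdim(V)$. Summing along any run then caps the total number of mistakes at $SDdim(H)$.

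For the lower bound $M_{sd}(H) \ge SDdim(H)$, I would exhibit an adversary that forces $SDdim(H)$ mistakes against every learner by playing the labelling role optimally: whatever instance $x$ and prediction $\hat y$ the learner commits to, the adversary returns the consistent label $y$ maximizing the sum of the mistake indicator $\mathbf{1}\{y \ne \hat y\}$ and $SDdim(V_{x \to y})$. A short case analysis shows this guarantees at least $SDdim$ mistakes: either some label $y \ne \hat y$ keeps the continuation dimension at $d-1$, yielding a mistake now plus $d-1$ inductively, or the learner's prediction coincides with a strictly dominant maximal branch, in which case the adversary declines the immediate mistake but preserves full dimension $d$ for the recursion. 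One then verifies that these adaptively chosen labels are jointly realizable by a single concept in $H$, so the forced transcript is a legitimate self-directed run on some target.

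The main obstacle I anticipate is faithfully handling the learner's control over instance selection, which is precisely what separates self-directed learning from the adversarial Littlestone setting and forces the game to be a genuine min-max over both instances \emph{and} labels rather than a max over an adversary-built mistake tree. Concretely, in the upper-bound direction one must check that the instance prescribed by the game is always realizable from the allowed subset and that predicting the maximum-dimension branch never leaves a larger subproblem after a mistake; in the multi-class setting this requires controlling the gap between the top branch and the second-largest branch, since two distinct labels could otherwise both appear to preserve the full dimension. Verifying realizability of the adversary's adaptive labelling and reconciling it with the learner's freedom to choose $x$ is where the argument must be most careful.
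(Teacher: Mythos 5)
Your overall architecture coincides with the paper's: prove the two inequalities separately, get $M_{sd}(H,S) \le SDdim(H,S)$ by running an SOA-style learner that queries a well-chosen point and predicts the label whose restricted version space has maximal dimension (so that any mistake strictly decreases $SDdim$), get $M_{sd}(H,S) \ge SDdim(H,S)$ by an adversary argument, and drive both by induction; this is exactly the paper's SD-SOA together with its Lemmas on monotonicity, point existence, and the adversary direction. Both of your key inductive claims are in fact true. However, there is a genuine gap at precisely the spot you flag as ``the main obstacle'': the claim that a mistake always drops $SDdim$ by at least one is \emph{false} for an arbitrary choice of instance --- it holds only if the learner queries a point $x$ for which at most one label $y$ satisfies $SDdim(H_{(x,y)}, S\setminus\{x\}) \ge SDdim(H,S)$ --- and you never establish that such a point exists; you only observe that its failure would break the argument. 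This existence statement is the linchpin of the whole equivalence (the paper's Lemma on $SDdim$ points), and it has a short proof you are missing: if every $x \in S$ admitted two distinct labels $y_x \neq y'_x$ both preserving the dimension, then placing all of $S$ in the root node of a self-directed tree, with each point's two outgoing edges labelled by that pair, yields a realizable tree of minimal depth $SDdim(H,S)+1$, contradicting the maximality of $SDdim(H,S)$. Without this, predicting the top branch buys nothing when the second branch also has full dimension, and your upper bound does not close.

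A smaller but analogous omission affects your lower bound: the dichotomy ``either some wrong label keeps continuation dimension $d-1$, or the learner's predicted label keeps dimension $d$'' is asserted via ``a short case analysis'' but never justified, and it does not follow from the minimax definition alone. The clean justification is to locate the queried point $x$ inside a realizable $d$-tree witnessing $SDdim(H,S)=d$: either $x$ lies in $O_{root}$, in which case its recorded label preserves dimension $d$ (this gives your second case when that label equals $\hat y$, and gives the first case with room to spare otherwise), or $x$ lies in the root node, in which case its two designated edge labels each lead to subtrees of depth at least $d-1$ and at least one of them differs from $\hat y$. With that dichotomy, your greedy adversary (maximizing the mistake indicator plus the continuation dimension) does force $d$ mistakes, and realizability of its adaptive labelling follows because the dimension remaining nonnegative keeps the version space nonempty throughout. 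So your plan is the right one --- it is essentially the paper's plan --- but both directions lean on tree-structural facts that your sketch anticipates rather than proves, and the point-existence lemma in particular is the idea that cannot be skipped.
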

In Section $5$, we showcase our dimension $SDdim$ by computing it on a wide variety of learning theoretic examples and exploring learnability gaps.
Finally in Section $6$, we extend our analysis of self-directed learning to the binary agnostic setting.
To tie in the central themes presented, we summarize the following contributions of our paper:
\begin{itemize}
    \item We provide the first formal study of self-directed learning under a multi-class classification framework.
    \item We develop a two player game termed as the \textit{labelling game} that accurately captures the learning complexity in the self-directed case.
    \item We construct a game-tree that represents an efficient characterization of the exact mistake-bound model for self-directed learning that holds for all subsets $S \subseteq \mathcal{X}$.
    \item We propose an algorithm called the SD-SOA that makes a number of mistakes bounded by the quantity $SDdim(H)$.
    \item We compare the learning complexity of self-directed learning with other forms of online learning, offline learning, and the PAC-learnable case by providing both multi-class and binary examples.
    \item We showcase a straightforward computation of $M_{sd}(H_d) = 2d$ where $H_d$ corresponds to the concept class of axis-aligned rectangles in $\mathbb{R}^d$. 
    \item On $VC(H)=1$ classes, we present a direct approach in proving $M_{sd}(H)=1$.  
    \item We construct a concept class $H$ exhibiting a learnability gap between $M_{sd}(H)$ and $M_{best}(H)$ in the multi-class setting where $M_{best}$ is the offline learning model allowing the learner to choose the entire sequence of instances before the prediction process starts.
    \item We extend our results to the agnostic setting and analyze the expected regret with an upper bound of $O\left(\sqrt{VC(H)T\ln(T)}\right)$ and a lower bound of $\Omega\left(\sqrt{VC(H)T}\right)$ respectively.
\end{itemize}

\section{Formal Setup}

\subsection{Basic Definitions}

In this section, we formally present the notion of the self-directed learning mistake-bound. This framework is quite similar to the online learning framework used by Littlestone \citep{littlestone:88} and Ben-David et al. \citep{ben1995self}. 
We define $\mathcal{X}$ and $\mathcal{Y}$ to be arbitrary, non-empty sets where $\mathcal{X}$ is referred to as the instance space and $\mathcal{Y}$ is the label space.
Throughout the paper, we assume that $|\mathcal{Y}| \geq 2$.
We define $H$ to be an arbitrary, non-empty set of functions from $\mathcal{X} \rightarrow \mathcal{Y}$ called the concept class. 
In the special case of $\mathcal{Y} = \{0, 1\}$, we refer to this as binary classification.

\subsubsection{Self-Directed Setting}

In this section, we describe in detail the learning environment. 
Assuming a non-empty $H$ and a finite subset $S \subseteq \mathcal{X}$, one full round in the self-directed learning follows the sequence described below:
\begin{enumerate}
    \item The learner chooses some $x \in S$. 
    \item The learner makes a prediction on $x$ by selecting a label $y \in \mathcal{Y}$.
    \item The learner receives the true label from the environment.
    \item Update $S = S \setminus \{ x \}$.
\end{enumerate}

The next round then proceeds in the same exact fashion. 
Self-directed learning follows this iterative cycle until the learner has labelled all the points in $S$.
In this paper (up to Section $\ref{section:agnostic}$), we focus on realizable self-directed learning which assumes that there exists a function $f^* \in H$ that is consistent with the sequence of instances and true labels.

\subsubsection{Self-Directed mistake-bound}

A self-directed learning algorithm $\mathcal{A}$ operating on some finite input space $S \subseteq \mathcal{X}$ can adaptively choose a sequence of points to query.
Unlike the adversarial online learning model where the learner is provided instances by the environment to predict upon, the self-directed algorithm chooses its instances ``on-the-fly'' so the query sequence can depend on the labels of previous instances.
Since the self-directed learning model is studied under the realizable setting, there exists a target concept consistent with the entire sequence of points and labels. 
The number of mistakes that the self-directed learning algorithm $\mathcal{A}$ makes given a finite input space $S$ and a target concept $f^*$ is denoted as $M_{sd}(\mathcal{A}, S, f^*)$.
Then, the mistake-bound of algorithm $\mathcal{A}$ on a fixed $S$ for the concept class $H$ is denoted as 
\begin{align*}
    M_{sd}(\mathcal{A}, H, S) = \max_{f^* \in H} M_{sd}(\mathcal{A}, S, f^*).
\end{align*}

\noindent For completeness, we define $M_{sd}(\mathcal{A}, S, H) = -1$ when $H = \emptyset$.
Then, we consider the self-directed mistake-bound on $H$ and some finite $S \subseteq \mathcal{X}$ by taking the minimum over all learning algorithms $\mathcal{A}$.
\begin{align*}
    M_{sd}(H, S) = \min_{\mathcal{A}} M_{sd}(\mathcal{A}, H, S) = \min_{\mathcal{A}} \max_{f^* \in H} M_{sd}(\mathcal{A}, S, f^*)
\end{align*}
\noindent We define the optimal mistake-bound $M_{sd}(H)$ as the maximum over all finite subsets in $\mathcal{X}$.
\begin{align*}
    M_{sd}(H) = \max_{S \subseteq \mathcal{X}, |S| < \infty} M_{sd}(H, S)
\end{align*}
If $H = \emptyset$, then $M_{sd}(H) = -1$. 
Additionally, for any non-empty $H$ and $S = \emptyset$, $M_{sd}(H, \emptyset) = 0$ since any $h \in H$ is realizable on $S$.
Colloquially, $M_{sd}$ is simply referred to as the mistake-bound of self-directed learning.

\subsection{Relation to Previous Literature}

A central theme within our work is to understand the worst-case mistake-bound $M_{sd}(H)$ over all finite subsets $S \subseteq \mathcal{X}$.
Prior work within the self-directed literature has exclusively focused on characterizing the subset-dependent mistake-bound $M_{sd}(H, S)$. 
For example, the initial work of \citeauthor{goldman1994power} (\citeyear{goldman1994power}, Theorem $4$) proved that $M_{sd}(H^d_n, S^d_n) = 2$ where $S^d_n$ represents a $d$-dimensional grid containing $n^d$ equally spaced points and $H^d_n$ is the concept class of axis-aligned rectangles built on $S^d_n$.
In Section \ref{section:rectangles}, we show that $M_{sd}(H^d_n) = VC(H^d_n)$ when considering all finite subsets $S$.
Similarly, \citeauthor{ben1995self} (\citeyear{ben1995self}, Theorem $9$) found an example of an $(H, S)$ separating $M_{sd}(H, S)$ and $M_{best}(H, S)$, where $M_{best}(H, S)$ is the mistake-bound of the non-adaptive version of self-directed learning where the order of examples is finalized before the prediction process starts.
However, in the worst-case analysis over all finite $S$, the $H$ in that result does not yield a separation between $M_{sd}(H)$ and $M_{best}(H)$.
In contrast, our Theorem \ref{thm:m_best_learnability} constructs an $H$ that shows a separation between $M_{sd}(H)$ and $M_{best}(H)$ (specifically for the multi-class setting).

A central theme within our work is to understand the worst-case mistake-bound $M_{sd}(H)$ over all finite subsets $S \subseteq \mathcal{X}$.
\citet{ben1995self} characterized different modes of online learning through the rank of different families of binary trees which require as input a finite subset $S \subseteq \mathcal{X}$. 
In the case of adversarial online learning, one can compute the worst-case mistake-bound by finding the maximum rank of these binary trees over all possible, finite subsets $S \subseteq \mathcal{X}$.
However, the Littlestone dimension of a concept class $H$, or $LD(H)$, is a direct characterization of this exact worst-case mistake-bound that solely relies on $H$.
In a similar vein, our motivation is to discover a combinatorial characterization, $SDdim$, that directly computes $M_{sd}(H)$ and is easy-to-use in practice.

We also the extend the research on self-directed learning to the multi-class setting; previous literature on self-directed learning was only considered in the binary class setting.
Additionally, we provide an analysis of the self-directed setting under the binary agnostic setting where we prove upper and lower bounds on the regret.

\section{Labelling Game}

To tackle the puzzle of self-directed learning in the multi-class setting, we developed a two-player game such that the game dynamics mimic the self-directed mistake-bound $M_{sd}(H, S)$.
Our two-player game, the labelling game, is played between Player A, the adversary, and Player B, the learner, where the number of rounds is synonymous with the value of $M_{sd}(H, S)$.
Assume that the labelling game is executed on some non-empty concept class $H$ and finite subset $S \subseteq \mathcal{X}$ with the label set $\mathcal{Y}$.
In the binary setting, the labelling game works in the following fashion:
\begin{enumerate}
    \item Player A is allowed to label some set of points $C \subseteq S$. Update $S = S \setminus C$.
    \item Player B is then allowed to label any single point from $S$. Without loss of generality, let $x \in S$ be the point Player B has chosen with the label of $y$. Update $S = S \setminus \{ x \}$.
    \item Repeat steps 1-2 until the entire dataset is labelled. 
\end{enumerate}

\noindent In the multi-class setting, the labelling game works in the following fashion:
\begin{enumerate}
    \item Player A is allowed to label some set of points $C \subseteq S$. Update $S = S \setminus C$.
    \item For each $x \in S$, Player A designates two labels $y^x_1, y^x_2 \in \mathcal{Y}$ where $y^x_1 \neq y^x_2$.
    \item Player B is then allowed to label any single point from $S$ with one of the two designated labels. Without loss of generality, let $x \in S$ be the point Player B has chosen with the label of $y^x_1$. Update $S = S \setminus \{ x \}$.
    \item Repeat steps 1-3 until the entire dataset is labelled. 
\end{enumerate}

In the labelling game, the objective of Player B is to create some unrealizable sequence of points and labels as fast as possible whereas Player A wants to execute the game for as long as possible. 
Therefore, Player A is incentivized to select points and labels that are realizable with respect to $H$ and eliminate any favorable choices for Player B.
The game terminates when all of $S$ has been labelled.      
The crucial part of this game lies in the number of rounds Player A can extend the game against Player B's actions.
A round is defined as a complete cycle between Player A and Player B.
The payout of the game is defined as the number of rounds the labelling game takes on until the entire dataset has been labelled .
If the labelling of the dataset is unrealizable, then the payout of the game is equal to $-1$.
$SDdim(H, S)$ is then defined as the minimax value of the labelling game's payout on $H$ given the finite subset $S \subseteq \mathcal{X}$.
Naturally, we let $SDdim(H) = \max_{S \subseteq \mathcal{X}, |S| < \infty} SDdim(H, S)$.
If $S = \emptyset$, then there are no points for Player $A$ to select so a full round cannot even be completed and $SDdim(H, \emptyset) = 0$.
For the case that $H$ is an empty concept class, we define $SDdim(\emptyset, S) = -1$ for any $S \subseteq \mathcal{X}$ since any sequence of points and labels is unrealizable.

The labelling game above has been described for a fixed subset $S \subseteq \mathcal{X}$.
To extend the labelling game to generalize over the instance space $\mathcal{X}$, we allow the adversary to pick any finite subset $S \subseteq \mathcal{X}$ as an initialization step before the labelling game proceeds as usual.
By incorporating this simple modification, the labelling game can be easily extended to calculate $M_{sd}(H)$ where $M_{sd}(H) = \max_{S' \subseteq \mathcal{X}, |S'| < \infty} M_{sd}(H, S')$.

\subsection{Self-Directed Trees}
While for binary and multi-class classification $SDdim(H)$ can be defined in terms of a min-max analysis of the labelling game, we can alternatively formulate $SDdim(H)$ in terms of self-directed trees that aid in proving the upper and lower bounds for Theorem \ref{thm:SDdim=m_sd}.

\begin{definition}[Self-Directed Tree]
A \textit{self-directed tree} $T$ is defined as a tuple $(\mathcal{V}, E, O_{root})$ on $(H, S)$ where $\mathcal{V}$ is the collection of nodes, $E$ is a collection of edges, and $O_{root}$ is a set pertaining to the root node. Let $H$ be the concept class, $S$ a finite subset of the instance space $\mathcal{X}$, and $\mathcal{Y}$ the label set.

\begin{itemize}
    \item The root node is represented by $V_1 \in \mathcal{V}$ and $V_1 \subseteq S$. Then, $O_{root} \subseteq (S \setminus V_1 ) \times \mathcal{Y}$ where $\{ x':\exists y' \in \mathcal{Y}, (x', y') \in O_{root} \} = S \setminus V_1$.
    \item $\forall V \in \mathcal{V}$, if $V' = \mathrm{Parent}(V)$, then $V \subseteq V'$.
    \item $\forall e \in E$ where $e=(V', V)$ with $V' = \mathrm{Parent}(V)$, the edge weight is defined as $\omega(e) = ((x,y), O)$ where $x \in V'$ and $y \in \mathcal{Y}$. The set $O \subseteq V' \times \mathcal{Y}$ where $\{ x \} \cup \{ x':\exists y' \in \mathcal{Y}, (x', y') \in O \} = V' \setminus V$.
    \item Every vertex $V \in \mathcal{V}$ contains $2 \cdot |V|$ outgoing edges. For each $x \in V$, there exist edges $e, e'$ containing the property that $\omega(e) = ((x, y_x), O_{e})$ and $\omega(e') = ((x, y'_x), O_{e'})$ with $y_x, y'_x \in \mathcal{Y}, y_x \neq y'_x$.
    \item A branch $(V_1, ..., V_n)$ of length $n$, a root-to-leaf path, is realized by some function $h \in H$ if the following two conditions are met:
    \begin{itemize}
        \item For each pair $(x, y) \in O_{root}$, $h(x) = y$.
        \item For each $i$ such that $1 \leq i < n$, denote edge $e_i = (V_i, V_{i+1})$ with $\omega(e_i) = ((x_i, y_i), O_{e_i})$. It holds that $h(x_i) = y_i$ and $\forall~(x_m, y_m) \in O_{e_i}, h(x_m) = y_m$. 
    \end{itemize}
\end{itemize}

\end{definition}

A k-self-directed tree, or a \textit{k-tree}, is tree of this type such that the minimal length of any root-to-leaf path is $k$.

\begin{definition} 
    $\mathcal{T}^k(H, S)$ is the collection that represents all the $k$-trees $T$ built from $S$ such that every branch $b$ in $T$ is realized by a function $h_b \in H$.
\end{definition}

The design of the self-directed tree, specifically those in $\mathcal{T}^k(H, S)$, is constructed to mirror the game sequence of the labelling game on some $H$, $S \subseteq \mathcal{X}$, and $\mathcal{Y}$.
The set $O_{root}$ corresponds to Player A selecting a subset of points from $S$ to label before Player B's first move.
The edges in the self-directed tree correspond to Player B's selection of a point and label with the set $O$ corresponding to Player A's response of selecting a subset of points to label after Player B's move.
As a result, the notion of a single round corresponds to one edge in the mistake-tree.
In Appendix \ref{appendix:section_four}, we formally show the equivalency between the value of $SDdim(H, S)$ and the largest realizable $k$-tree on any $(H, S)$. 
\subsection{\textit{SDdim}}

The key interest in analyzing the labelling game is to gauge how many rounds the game will take on before terminating.
As mentioned previously, $SDdim$ was defined as the minimax value of the labelling game's payout where the payout is equal to $-1$ if the labelling of points results in an unrealizable sequence.
If one can build a $k$-tree on a non-empty $H$ and finite $S \subseteq \mathcal{X}$ with the additional property that all branches in the tree are realizable, then the labelling game on $(H, S)$ lasts at minimum $k$ rounds.
Therefore, we can alternatively formulate $SDdim$ as the depth of $k$-trees whose branches are realizable.

Now, we formally connect $SDdim$ to the payout of the game defined by these self-directed trees.
Given a $H$ and some finite $S \subseteq \mathcal{X}$ on which the labelling game is played upon, we can equivalently define
\begin{align}
    SDdim(H, S) = \max \{k \in \mathbb{N} \cup \{0\}: \mathcal{T}^k (H, S) \neq \emptyset \}
    \label{eq: 1}
\end{align}
\noindent as the largest $k$-tree one can build from $S$ on $H$. 
To observe the largest payout possible of the labelling game played on $H$, we define
\begin{align}
    SDdim(H) = \max_{S \subseteq \mathcal{X}, |S| < \infty} \max \{k \in \mathbb{N} \cup \{0\}: \mathcal{T}^k (H, S) \neq \emptyset \}.
    \label{eq: 2}
\end{align}
\noindent
We let $SDdim(\emptyset) = -1$ and for any non-empty $H$ and $S = \emptyset$, $SDdim(H, \emptyset) = 0$.

\subsubsection{Self-directed Standard Optimal Algorithm (SD-SOA)}

The SD-SOA is quite similar to the SOA designed by Littlestone \citep{littlestone:88}.
It keeps track of the current version space and point set which are denoted as $(H', S') \subseteq (H, S)$ respectively.

\begin{algorithm}
\caption{SD-SOA($H$, $S$, $\mathcal{Y}$)}\label{alg:cap}
\begin{algorithmic}[1]
\REQUIRE $H \neq \emptyset$
\REQUIRE $S \subseteq \mathcal{X}$
\STATE $H' = H$
\STATE $S' = S$
\FOR{$i=1,...,|S|$}
    \STATE Let $(x, y) = \argmin\limits_{(x', y') \in (S' \times \mathcal{Y})} \max\limits_{y'' \in \mathcal{Y} \setminus \{y'\}} SDdim(H'_{(x', y'')}, S' \setminus \{x'\})$
    \STATE Predict $(x, y)$
    \STATE Receive true label $y^*$
    \STATE Update $H' = H'_{(x, y*)}, S' = S' \setminus \{x\}$
\ENDFOR
\end{algorithmic}
\end{algorithm}

At each step, the SD-SOA selects the point with the best second-worst label and predicts in the direction of that point's worst label. 
The core property of this algorithm is that every time the SD-SOA's prediction is wrong, the value of $SDdim$ decreases at least by $1$.
This guarantee is provided by Lemma \ref{lemma:SDdim_point}, and the performance of this algorithm is analyzed in Lemma \ref{lemma:soa} showing that $M_{SD-SOA}(H, S) \leq SDdim(H, S)$.
\\

\section{Main Results} 

Theorem \ref{thm:SDdim=m_sd}, stated in the introduction, is a result that immediately follows from the stronger result proved in Theorem \ref{thm:sddim(H,S)}.
It's important to note that all the results below hold for $S \subseteq \mathcal{X}$ that is assumed to be finite.

\begin{theorem}
    For any $S \subseteq \mathcal{X}$ and concept class $H$, $SDdim(H, S)=M_{SD-SOA}(H, S)= M_{sd}(H, S)$.
    \label{thm:sddim(H,S)}
\end{theorem}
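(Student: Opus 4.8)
The plan is to prove the three-way equality by closing a cycle of inequalities,
\[
M_{sd}(H,S) \;\le\; M_{SD-SOA}(H,S) \;\le\; SDdim(H,S) \;\le\; M_{sd}(H,S),
\]
which forces all three quantities to coincide. Two of these links are comparatively light and the genuine content sits in the third. The leftmost inequality is immediate from the definition of $M_{sd}$ as a minimum over all algorithms: since SD-SOA is one particular algorithm, $M_{sd}(H,S)=\min_{\mathcal{A}}M_{sd}(\mathcal{A},H,S)\le M_{SD-SOA}(H,S)$.

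For the middle inequality I would simply invoke Lemma \ref{lemma:soa}, noting that it rests on Lemma \ref{lemma:SDdim_point}. The content I need from the latter is that the quantity minimized on line~4 of the algorithm satisfies $\min_{(x',y')}\max_{y''\neq y'} SDdim(H'_{(x',y'')}, S'\setminus\{x'\}) \le SDdim(H',S')-1$. Because SD-SOA predicts in the direction of the worst label at the point attaining this minimum, every mistake strictly decreases the current $SDdim$ by at least one, while correct predictions cannot increase it (the dimension is non-increasing under restriction of $H'$ and deletion of a point). Since the process starts at $SDdim(H,S)$ and the dimension stays nonnegative throughout in the realizable case, the total number of mistakes is at most $SDdim(H,S)$.

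The main obstacle is the lower bound $SDdim(H,S)\le M_{sd}(H,S)$: against an arbitrary self-directed algorithm I must exhibit a realizable target forcing at least $k:=SDdim(H,S)$ mistakes. Here I would pass to the tree formulation. By the equivalence established in Appendix \ref{appendix:section_four} together with the identity $SDdim(H,S)=\max\{k\in\mathbb{N}\cup\{0\}:\mathcal{T}^k(H,S)\neq\emptyset\}$, there is a realizable $k$-tree $T\in\mathcal{T}^k(H,S)$. The adversary simulates a descent through $T$: it maintains a current node $V$ (initially the root $V_1$) and commits the labels of the points in $S\setminus V_1$ (recorded in $O_{root}$) and of each traversed edge's set $O$, revealing them when queried. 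When the learner queries $x$ and predicts $\hat{y}$, there are two cases. If $x$ is already committed, the adversary returns its committed label at no cost. If $x$ is live, i.e. $x\in V$, the tree supplies two outgoing edges at $V$ carrying distinct labels $y_x\neq y'_x$ for $x$, at least one of which differs from $\hat{y}$ (using $|\mathcal{Y}|\ge 2$); the adversary returns that label, forcing a mistake, and descends along the corresponding edge, thereby committing $x$ together with that edge's $O$-set. The edge condition $\{x\}\cup\{x':\exists y',\ (x',y')\in O\}=V'\setminus V$ guarantees the bookkeeping invariant that at every moment each point of $S$ is either live or already committed, so these two cases are exhaustive for any query order. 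Each query of a live point forces one mistake and advances the walk by one edge, so the number of mistakes equals the length of the traversed root-to-leaf branch, which is at least $k$ because $T$ is a $k$-tree. Since every branch of $T$ is realized by some $h_b\in H$ and the adversary only ever reveals labels lying on the traversed branch, the entire labelled sequence is consistent with $h_b$, so the play is realizable and the $\ge k$ forced mistakes are legitimate; hence $M_{sd}(H,S)\ge k$.

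The delicate point — and the step I expect to demand the most care — is not the counting or the realizability check (both immediate once the walk is set up), but verifying that this adversary is well defined against \emph{every} algorithm rather than a single fixed query order. Because the learner may freely reorder its queries and may interrogate committed points in between branch decisions, I must argue that the live/committed dichotomy is preserved under arbitrary interleavings and that a mistake can always be forced at a live point. Once the cycle of three inequalities is closed we obtain $SDdim(H,S)=M_{SD-SOA}(H,S)=M_{sd}(H,S)$, and taking the maximum over all finite $S\subseteq\mathcal{X}$ recovers Theorem \ref{thm:SDdim=m_sd}.
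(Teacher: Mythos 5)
Your proposal is correct, and two of its three links coincide with the paper's own proof: $M_{sd}(H,S) \leq M_{SD-SOA}(H,S)$ is immediate from the definition of $M_{sd}$ as a minimum over algorithms, and $M_{SD-SOA}(H,S) \leq SDdim(H,S)$ is exactly Lemma~\ref{lemma:soa} (your sketch of why it follows from Lemmas~\ref{lemma:monotonicity} and~\ref{lemma:SDdim_point} is accurate). Where you genuinely diverge is the hard direction $SDdim(H,S) \leq M_{sd}(H,S)$: the paper proves this as Lemma~\ref{lemma:one} by induction on $(H,S)$, first establishing by contradiction that some point $w \in S$ has at most one label attaining $M_{sd}(H_{(w,y)}, S \setminus \{w\}) = M_{sd}(H,S)$, then casing on whether $w$ lies in $O_{root}$ or in the root node of a tree realizing $SDdim(H,S)$ and invoking the inductive hypothesis. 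You instead give a direct, non-inductive adversary simulation: fix a realizable $k$-tree with $k = SDdim(H,S)$, answer queries to committed points with their committed labels, and at every query of a live point force a mistake and descend one edge; the edge condition $\{x\} \cup \{x' : \exists y',\ (x',y') \in O\} = V' \setminus V$ preserves the live/committed partition under arbitrary interleavings, the game can terminate only at an empty node (which is necessarily a leaf), and realizability of every branch makes the forced labels consistent with some $h_b \in H$. This is in spirit the argument the paper uses in Appendix~\ref{appendix:section_four} (Lemma~\ref{lemma:k_tree_leq_sd_dim}) to relate realizable $k$-trees to the \emph{labelling game}, transplanted directly to the learner/adversary prediction game. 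Your route buys an explicit, constructive adversary and avoids induction entirely for this direction; the bookkeeping invariant you flag as the delicate point is indeed the crux, and your resolution of it is sound. The paper's route buys stylistic uniformity (both of its lemmas are inductions over $(H,S)$) at the cost of the somewhat fiddly existence claim for $w$. One small caveat: the paper is inconsistent about whether a branch's ``length'' counts nodes or edges, but your count (mistakes $=$ edges traversed $\geq k$) matches the intended semantics in which one round of the game corresponds to one edge of the tree, so no gap arises.
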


\begin{lemma}
For any $S \subseteq \mathcal{X}$, $SDdim(H, S) \leq M_{sd}(H, S)$.
\label{lemma:one}
\end{lemma}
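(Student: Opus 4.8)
The plan is to read the inequality as an adversarial lower bound: I would use a realizable tree of depth $k := SDdim(H,S)$ to construct an adversary that forces an arbitrary self-directed algorithm to err on every ``live'' query. By \eqref{eq: 1}, $\mathcal{T}^k(H,S)$ is non-empty, so I would fix a $k$-tree $T=(\mathcal{V},E,O_{root})$ all of whose root-to-leaf branches are realized by functions in $H$. The goal is to show that for \emph{every} algorithm $\mathcal{A}$ there is a target $f^\ast \in H$ with $M_{sd}(\mathcal{A},S,f^\ast)\ge k$; minimizing over $\mathcal{A}$ then gives $M_{sd}(H,S)\ge k$, which is the claim.

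The adversary would play Player A's role from the labelling game by walking down $T$. It maintains a current node $V$, initialized to the root $V_1$, together with a partial labelling of $S$ initialized to $O_{root}$ (so the points of $S\setminus V_1$ are ``committed'' at the start). When $\mathcal{A}$ queries a point $x$ with prediction $\hat y$, the response rule is: if $x$ is already committed, return its committed label and do not move in the tree; if instead $x\in V$, consider the two outgoing edges $e,e'$ of $V$ associated with $x$, whose true labels $y_x \neq y'_x$ are distinct by the tree definition, follow whichever edge carries a label different from $\hat y$, return that label, append the edge's set $O_e$ to the committed labelling, and advance $V$ to the corresponding child. The key observation I would stress is that this response is \emph{always} a mistake: if $\hat y \in \{y_x,y'_x\}$ the adversary takes the other of the two, and if $\hat y \notin \{y_x,y'_x\}$ either edge already witnesses an error. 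Hence each query of a live point forces a mistake and descends exactly one edge of $T$.

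Next I would count the mistakes and check realizability. Since each node of $T$ has $2|V|$ children, its leaves are precisely the nodes with empty live set, and each traversed edge deletes from the live set both $x$ and the points recorded in $O_e$; consequently the live queries of $\mathcal{A}$ trace out one root-to-leaf branch $b$, and the number of forced mistakes equals the number of rounds (edges) along $b$, which is at least $k$ because $T$ is a $k$-tree. For realizability I would take $f^\ast = h_b\in H$ to be a function realizing the traversed branch $b$: the branch-realization conditions say exactly that $h_b$ agrees with every pair in $O_{root}$, with each edge label $y_i$, and with each pair of every $O_{e_i}$ along $b$, and these are precisely the answers the adversary returned on all of $S$ (every point is committed or live-queried by the time the branch reaches a leaf). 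So the whole transcript is consistent with $f^\ast$, and $\mathcal{A}$ makes at least $k$ mistakes against it.

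The hard part, I expect, is not the counting but reconciling the two-label labelling game with an unrestricted learner: in the game Player B must pick one of two designated labels, whereas a genuine self-directed algorithm may predict any label in $\mathcal{Y}$. The resolution sketched above is that any prediction outside the offered pair is automatically wrong, so the adversary forces an error on every live query no matter what $\mathcal{A}$ does. The second delicate point is global consistency with a single concept: labels are committed incrementally, and $O_{root}$ is fixed before the branch is even determined, yet one $f^\ast$ must explain them all. This goes through because the realization conditions force the realizing function of \emph{every} branch to agree with the pre-labelling sets that branch passes through, so the function realizing the branch actually traversed is automatically consistent with all answers. Verifying that leaves have empty live sets and that the round-to-edge (hence mistake) correspondence is exact is then routine bookkeeping from the self-directed tree definition.
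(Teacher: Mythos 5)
Your proof is correct, but it takes a genuinely different route from the paper's. You convert a realizable $k$-tree directly into an adversary for the self-directed learning game: the adversary walks down the tree, answers already-committed points with their committed labels, and on each live query follows an edge whose label disagrees with the prediction, so every live query forces a mistake; the traversed root-to-leaf branch (of length at least $k$) then supplies, via its realizing function $h_b$, a single target $f^\ast \in H$ consistent with the entire transcript. The paper instead argues by induction on $(H,S)$: it first establishes the existence of a point $w \in S$ for which at most one label $y'_w$ attains $M_{sd}(H_{(w,y'_w)}, S\setminus\{w\}) = M_{sd}(H,S)$ (otherwise an adversary could force $M_{sd}(H,S)+1$ mistakes), then case-splits on whether $w$ lies in $O_{root}$ or in the root node of an optimal tree, applying the inductive hypothesis to the resulting version space $H_{(w,y)}$ in each case. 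Your argument is the standard ``mistake-tree $\Rightarrow$ adversary'' construction (analogous to Littlestone-dimension lower bounds) and makes the mechanism of forced mistakes explicit; it is essentially the learning-game analogue of the paper's Appendix Lemma \ref{lemma:k_tree_leq_sd_dim}, which does the same thing for the labelling game. The paper's induction, in turn, buys uniformity with the rest of Section 4 (the same inductive template is reused for Lemma \ref{lemma:soa}) and avoids the transcript bookkeeping you rightly flag as the delicate part. One small caveat in your version: passing from your adaptive adversary to a fixed target $f^\ast$ with $M_{sd}(\mathcal{A},S,f^\ast)\ge k$ implicitly uses that $\mathcal{A}$ is deterministic, so that replaying $\mathcal{A}$ against $f^\ast$ reproduces the same transcript; this is the standard convention in the mistake-bound model (and the paper's own proof relies on the same kind of adaptive-adversary reasoning), but it deserves a sentence.
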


\begin{proof}
A proof by induction will be established on the pair $(H, S)$ taking $S = \emptyset$ to be the base case.
In the base case, we show that $\forall H' \subseteq H$, $SDdim(H', \emptyset) = M_{sd}(H', \emptyset)$.
For any non-empty $H'$, $M_{sd}(H', \emptyset) = 0$, and the largest self-directed tree that can be constructed with $S = \emptyset$ has depth $0$ so $SDdim(H', \emptyset) = M_{sd}(H', \emptyset) = 0$.
If $H' = \emptyset$, then by definition $SDdim(\emptyset, \emptyset) = -1$ and $M_{sd}(\emptyset, \emptyset) = -1$.

Now, we apply the inductive step on $(H', S')$ when $H' \subseteq H$ and $S' \subset S$, then $SDdim(H', S') \leq M_{sd}(H', S')$.
The rest of the proof is now devoted to showing that $SDdim(H, S) \leq M_{sd}(H, S)$.

There must exist a $w \in S$ such that at most one $y_w' \in \mathcal{Y}$ has the largest value of $M_{sd}(H_{(w, y_{w'})}, S \setminus \{w \}) = M_{sd}(H, S)$.
If this were not true, then $\forall x \in S$, either $\exists y_x \in \mathcal{Y}$ where $M_{sd}(H_{(x, y_x)}, S \setminus \{x \}) > M_{sd}(H, S)$ or $\exists y_x, y'_x \in \mathcal{Y}$ where $M_{sd}(H_{(x, y_x)}, S \setminus \{x \}) = M_{sd}(H_{(x, y'_{x})}, S \setminus \{x \}) = M_{sd}(H, S)$.
If the learner chooses some $x \in S$ that fits the former scenario and any label $y \in \mathcal{Y}$, the adversary can always respond with $y_x$ as the true label with the guarantee that $M_{sd}(H_{(x, y_x)}, S \setminus \{x \}) > M_{sd}(H, S)$.
In the latter scenario, regardless of the learner's prediction $y \in \mathcal{Y}$ on $x \in S$, the adversary can always respond with a mistake saying that one of $y_x$ or $y'_x$ is the true label.
The version space on either $H_{(x, y_x)}$ or $H_{(x, y'_x)}$ has $M_{sd}(H_{(x, y_x)}, S \setminus \{x \}) = M_{sd}(H_{(x, y'_{x})}, S \setminus \{x \}) = M_{sd}(H, S)$.
Therefore, the learner will be forced to make at least $M_{sd}(H, S) + 1$ mistakes.

Let $T$ be the self-directed tree that realizes depth $SDdim(H, S)$.
In the tree $T$, $w$ can either be placed in $O_{root}$ or the root node.
If the point $w$ is placed in $O_{root}$ with label $y$, all branches of $T$ are realizable by the concepts in $H_{(w, y)}$.
Then $M_{sd}(H_{(w, y)}, S \setminus \{w \}) \leq M_{sd}(H, S)$, and using the inductive step, we establish that $SDdim(H, S) = SDdim(H_{(w, y)}, S \setminus \{w \}) \leq M_{sd}(H_{(w, y)}, S \setminus \{w \}) \leq M_{sd}(H, S)$.

If the point $w$ is placed in the root node, then it admits two distinct labels $y^1_w, y^2_w \in \mathcal{Y}$ from the two edges.
Then, the two corresponding subtrees will have depth at least $SDdim(H, S) - 1$. 
Additionally, note that each of the branches in these subtrees is realized by their respective version space $H_{(w, y^i_w)}$ for $i \in \{1, 2\}$.
It follows that 
$\forall i \in \{1, 2\}, SDdim(H_{(w, y^i_w)}, S \setminus \{w \}) \geq SDdim(H, S) - 1$.
At least one of the two labels cannot be $y'_w$ so without loss of generality, letting $y^2_w \neq y'_w$, we apply the inductive step to achieve $SDdim(H, S) \leq SDdim(H_{(w,y^2_w)}, S \setminus \{w \}) + 1 \leq M_{sd}(H_{(w,y^2_w)}, S \setminus \{w \}) + 1 \leq  M_{sd}(H, S)$.
\end{proof}

\noindent Below, Lemmas $\ref{lemma:monotonicity}$ and $\ref{lemma:SDdim_point}$ are used to describe additional properties of $SDdim$ that are useful in proving Lemma $\ref{lemma:soa}$.
The details about Lemmas $\ref{lemma:monotonicity}$ and $\ref{lemma:SDdim_point}$ are located in Appendix \ref{appendix:section_four}.

\begin{lemma}
    \label{lemma:monotonicity}
    Let $H$ and $S$ be any concept class and point set. 
    For any $n \in \mathbb{N}$ and \\ $\forall \sigma = \{(x_1, y_1), ..., (x_n, y_n) \} \in (S \times \mathcal{Y})^{n}$, $SDdim(H_{\sigma}, S \setminus \{x_1, ..., x_n \}) \leq SDdim(H, S)$. 
\end{lemma}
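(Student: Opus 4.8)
The plan is to reduce the statement to the single-point case $n=1$ and then prove that case directly from the self-directed tree characterization in Equation~\eqref{eq: 1}.

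First I would set up an induction on $n$, with $n=1$ as the base case. Writing $\sigma' = \{(x_1,y_1),\dots,(x_{n-1},y_{n-1})\}$, one has the identities $H_{\sigma} = (H_{\sigma'})_{(x_n,y_n)}$ and $S\setminus\{x_1,\dots,x_n\} = (S\setminus\{x_1,\dots,x_{n-1}\})\setminus\{x_n\}$. Thus, provided the $n=1$ case holds for \emph{every} concept class and point set, applying it to the pair $(H_{\sigma'},\, S\setminus\{x_1,\dots,x_{n-1}\})$ with the single constraint $(x_n,y_n)$ gives
\begin{align*}
SDdim(H_{\sigma},\, S\setminus\{x_1,\dots,x_n\}) \le SDdim(H_{\sigma'},\, S\setminus\{x_1,\dots,x_{n-1}\}),
\end{align*}
and the inductive hypothesis bounds the right-hand side by $SDdim(H,S)$. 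So it suffices to show $SDdim(H_{(x,y)}, S\setminus\{x\}) \le SDdim(H,S)$ for an arbitrary $(x,y)\in S\times\mathcal{Y}$.

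For the single-point case I would argue by exhibiting a realizable tree on $(H,S)$ of the right depth. Let $k = SDdim(H_{(x,y)}, S\setminus\{x\})$; if $H_{(x,y)}=\emptyset$ then $k=-1$ and the bound is immediate, so assume $H_{(x,y)}\neq\emptyset$ and fix a tree $T'\in\mathcal{T}^k(H_{(x,y)}, S\setminus\{x\})$ witnessing the maximum in Equation~\eqref{eq: 1}. I would build $T$ on $(H,S)$ by keeping every node, edge, and edge weight of $T'$ unchanged, keeping the same root node $V_1 = V_1'$, and enlarging the root set to $O_{root} = O_{root}' \cup \{(x,y)\}$. Since $x\notin S\setminus\{x\}$, the point $x$ appears nowhere in $T'$, so $x\notin V_1$ and the covering condition $\{x'':\exists y'',(x'',y'')\in O_{root}\} = S\setminus V_1$ is restored by the insertion of $(x,y)$; every remaining structural constraint (parent containment, the $2|V|$ outgoing edges per vertex, and the edge-set covering conditions) is inherited verbatim from $T'$.

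The step that carries the argument is realizability. Each branch $b$ of $T'$ is realized by some $h_b \in H_{(x,y)}$, and because $h_b(x)=y$ by definition of the conditioned version space, $h_b$ also satisfies the single new pair $(x,y)\in O_{root}$ and hence realizes the corresponding branch of $T$. As $h_b\in H_{(x,y)}\subseteq H$, every branch of $T$ is realized by a function in $H$, so $T\in\mathcal{T}^k(H,S)$. Since the tree structure is unchanged, the minimal root-to-leaf length of $T$ equals that of $T'$, namely $k$, so $SDdim(H,S)\ge k$ by Equation~\eqref{eq: 1}, completing the single-point case. I expect the main obstacle to be the careful bookkeeping of the tree definition's covering conditions when inserting $(x,y)$ into $O_{root}$; the realizability transfer itself is clean precisely because conditioning the version space on $(x,y)$ is exactly what licenses adding $(x,y)$ as an adversary pre-labeling (equivalently, in the labelling game Player~A simply labels $x$ with $y$ in its opening move and then follows its strategy on $S\setminus\{x\}$).
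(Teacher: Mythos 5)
Your proposal is correct and matches the paper's own argument in its essential idea: both proofs witness the inequality by taking a realizable tree for the conditioned pair $(H_{\sigma}, S \setminus \{x_1,\dots,x_n\})$ and promoting it to a realizable tree on $(H,S)$ by placing the conditioning pairs into $O_{root}$, with realizability transferring because every realizing function lies in $H_{\sigma} \subseteq H$ and agrees with $\sigma$. The only difference is cosmetic — the paper absorbs all of $\sigma$ into $O_{root}$ in one step (phrased as a contradiction), whereas you peel off one pair at a time by induction on $n$ and check the structural bookkeeping more explicitly.
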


\begin{lemma}
    \label{lemma:SDdim_point}
    Let $H$ and $S \subseteq \mathcal{X}$ be a non-empty concept class and point set respectively. 
    There exists a point $x \in S$ such that at most one label $y_x \in \mathcal{Y}$ may have the property that $SDdim(H_{(x, y_x)}, S \setminus \{x \}) \geq SDdim(H, S)$.
\end{lemma}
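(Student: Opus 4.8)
The plan is to argue by contradiction, mirroring the $M_{sd}$-based argument already used inside the proof of Lemma \ref{lemma:one}, but carried out entirely at the level of self-directed trees so as to avoid any circular dependence on Theorem \ref{thm:sddim(H,S)}. Write $d = SDdim(H, S)$ and suppose the conclusion fails, i.e. that \emph{every} point $x \in S$ admits at least two distinct labels $y \in \mathcal{Y}$ with $SDdim(H_{(x,y)}, S \setminus \{x\}) \geq d$. By the monotonicity of Lemma \ref{lemma:monotonicity} we always have $SDdim(H_{(x,y)}, S \setminus \{x\}) \leq d$, so each such inequality is in fact an equality; I would fix for every $x$ two witnessing labels $y_x^1 \neq y_x^2$ and, using the tree reformulation \eqref{eq: 1}, a realizable $d$-tree $T_x^i \in \mathcal{T}^d(H_{(x,y_x^i)}, S \setminus \{x\})$ for $i \in \{1,2\}$.

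Next I would graft these subtrees into a single tree $T$ of depth $d+1$. Take the root to be $V_1 = S$ with $O_{root} = \emptyset$. For each $x \in S$ and each $i \in \{1,2\}$, attach an edge carrying the pair $(x, y_x^i)$ whose set $O$ is exactly the root-labelling $O_{root}^{T_x^i}$ of the subtree $T_x^i$; the child of this edge is declared to be the root of $T_x^i$, and all of $T_x^i$ is hung beneath it. This produces $2|S|$ edges out of $V_1$, two per point carrying distinct labels, as the tree definition requires. The structural verification is that, writing $R$ for the root of $T_x^i$, the edge condition $\{x\} \cup \{x' : \exists y', (x',y') \in O\} = V_1 \setminus V$ forces $V = R$, since the points labelled by $O_{root}^{T_x^i}$ are precisely $(S\setminus\{x\}) \setminus R$; hence the containment and edge-set clauses are met, and every root-to-leaf branch of $T$ consists of the single top edge $(x, y_x^i)$ followed by a branch of $T_x^i$.

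For realizability I would observe that a branch of $T_x^i$ is realized by some $h \in H_{(x,y_x^i)}$, so $h(x) = y_x^i$ and $h$ satisfies the root constraints in $O_{root}^{T_x^i}$; these are exactly the constraints imposed by the grafted top edge, so the same $h$ realizes the whole branch of $T$. Since every root-to-leaf path of $T_x^i$ has length at least $d$, prepending one edge makes every root-to-leaf path of $T$ have length at least $d+1$, so $T \in \mathcal{T}^{d+1}(H,S)$ and therefore $SDdim(H, S) \geq d+1$ by \eqref{eq: 1}, contradicting $d = SDdim(H,S)$. I would note this single construction also settles the base case $d = 0$, where each $T_x^i$ degenerates to a bare realizable root (which exists iff $H_{(x,y_x^i)} \neq \emptyset$) and the grafted object is a realizable $1$-tree.

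The main obstacle I anticipate is the bookkeeping in the grafting step: I must check that every structural clause of the self-directed tree definition holds simultaneously at the new root, in particular the $O$-set covering condition, the nested containment of each child in its parent, and the requirement of exactly two edges per point with distinct labels, and that realizability is genuinely preserved under concatenation of the top edge with a subtree branch. Everything beyond that is a direct application of the monotonicity of Lemma \ref{lemma:monotonicity} together with the characterization \eqref{eq: 1} of $SDdim$ as the largest realizable $k$-tree.
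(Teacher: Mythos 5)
Your proposal is correct and takes essentially the same approach as the paper's proof: assume for contradiction that every $x \in S$ has two distinct labels preserving $SDdim$, place all of $S$ in the root with two edges per point, hang the witnessing depth-$SDdim(H,S)$ subtrees beneath those edges, and conclude that the resulting realizable tree has minimal depth $SDdim(H,S)+1$, contradicting Equation \eqref{eq: 1}. The only difference is that you spell out the structural bookkeeping (the $O$-set covering condition, containment, and branch realizability under grafting) and the degenerate $d=0$ case, which the paper's terser proof leaves implicit.
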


\begin{lemma}
\label{lemma:soa}
Let $H$ be some non-empty concept class defined on the instance space $\mathcal{X}$ and a non-empty $S \subseteq \mathcal{X}$. 
Then, $M_{SD-SOA}(H, S) \leq SDdim(H, S)$.
\end{lemma}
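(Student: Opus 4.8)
The plan is to run a potential-function argument in which the potential is simply the value of $SDdim$ of the current version space, and to show that each mistake made by SD-SOA is ``paid for'' by a unit drop in this potential. Concretely, I would track the state of the algorithm after $i$ rounds as a pair $(H_i, S_i)$ with $(H_0, S_0) = (H, S)$, set $\Phi_i = SDdim(H_i, S_i)$, and let $m_i$ be the number of mistakes made through round $i$. The central claim, proved by induction on $i$, is the invariant
\[
m_i + \Phi_i \leq \Phi_0 = SDdim(H, S).
\]
Granting this, the lemma follows immediately: since we work in the realizable setting, every $H_i$ contains the target concept and is therefore nonempty, so $\Phi_i \geq 0$ throughout; after all $|S|$ points are labelled we have $S_{|S|} = \emptyset$ and hence $\Phi_{|S|} = 0$, giving $m_{|S|} \leq SDdim(H, S)$. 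As the adversary's label choices were arbitrary, this bound holds for every target concept, so taking the maximum over $f^* \in H$ yields $M_{SD-SOA}(H, S) \leq SDdim(H, S)$.

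For the inductive step I would split on whether round $i+1$ produces a mistake. Suppose the prediction $(x, y)$ is correct, so the true label is $y$ and the updated state is $(H_i)_{(x, y)}$ on $S_i \setminus \{x\}$. Then $m_{i+1} = m_i$, while the monotonicity Lemma \ref{lemma:monotonicity} (applied with the single pair $(x,y)$) gives $\Phi_{i+1} = SDdim((H_i)_{(x, y)}, S_i \setminus \{x\}) \leq SDdim(H_i, S_i) = \Phi_i$, so the invariant is preserved. The only remaining case is a mistake, where $m_{i+1} = m_i + 1$; here I must show $\Phi_{i+1} \leq \Phi_i - 1$ so that the increment in $m$ is exactly cancelled.

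This strict decrease on a mistake is the crux of the argument and the step I expect to be the main obstacle, since it is exactly where the $\argmin$ rule of SD-SOA must be tied to Lemma \ref{lemma:SDdim_point}. By Lemma \ref{lemma:SDdim_point} there is a point $\hat{x} \in S_i$ such that all labels except at most one satisfy $SDdim((H_i)_{(\hat{x}, y'')}, S_i \setminus \{\hat{x}\}) < \Phi_i$; because $SDdim$ is integer-valued this means $\leq \Phi_i - 1$. Choosing the single exceptional label $\hat{y}$ as the prediction for $\hat{x}$ therefore forces $\max_{y'' \neq \hat{y}} SDdim((H_i)_{(\hat{x}, y'')}, S_i \setminus \{\hat{x}\}) \leq \Phi_i - 1$. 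Since SD-SOA selects $(x, y)$ precisely to minimize this ``second-worst'' quantity $\max_{y'' \neq y'} SDdim((H_i)_{(x', y'')}, S_i \setminus \{x'\})$ over all candidate pairs $(x', y')$, the pair it actually chooses can only do at least as well, so $\max_{y'' \neq y} SDdim((H_i)_{(x, y'')}, S_i \setminus \{x\}) \leq \Phi_i - 1$. On a mistake the true label $y^*$ is some $y'' \neq y$, whence $\Phi_{i+1} = SDdim((H_i)_{(x, y^*)}, S_i \setminus \{x\}) \leq \Phi_i - 1$, which is what was needed. Assembling the two cases completes the induction, and evaluating the invariant at $i = |S|$ delivers the stated bound.
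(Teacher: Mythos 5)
Your proposal is correct and takes essentially the same route as the paper's own proof: both arguments rest on Lemma~\ref{lemma:monotonicity} for the correct-prediction case and on Lemma~\ref{lemma:SDdim_point} combined with the SD-SOA's $\argmin$ rule to force $SDdim$ to drop by at least one on every mistake. The only difference is bookkeeping—you unroll the paper's induction on the pair $(H,S)$ into a potential-function invariant $m_i + \Phi_i \leq \Phi_0$ over rounds—which is a presentational rather than substantive distinction.
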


\begin{proof}
A proof by induction will be established on the pair $(H, S)$ taking $S = \emptyset$ to be the base case.
In the base case, we show that $\forall H' \subseteq H$, $M_{SD-SOA}(H', \emptyset) = SDdim(H', \emptyset)$.
For any non-empty $H'$, $SDdim(H', \emptyset) = 0$, and since $S=\emptyset$, there are no points for the SD-SOA to query so $M_{SD-SOA}(H', \emptyset) = 0$.
If $H' = \emptyset$, then $SDdim(\emptyset, \emptyset) = -1$ and the mistake-bound of the SD-SOA is defined such that $M_{SD-SOA}(\emptyset, \emptyset) = -1$.

In the inductive step, we assume that for any $(H', S')$ where $H' \subseteq H$ and $S' \subset S$, the $M_{SD-SOA}(H', S') \leq SDdim(H', S')$.
The rest of the proof is now devoted to showing that $M_{SD-SOA}(H, S) \leq SDdim(H, S)$.

Let $x$ be the first point predicted by the SD-SOA and let $y_x$ be the label it predicts.
According to Algorithm $\ref{alg:cap}$, the SD-SOA algorithm, it picks a point whose second-worst label has the smallest value of $SDdim$.
The combination of Lemmas $\ref{lemma:monotonicity}$ and $\ref{lemma:SDdim_point}$ prove the existence of a point $w \in S$ such that at most one $y \in \mathcal{Y}$ has $SDdim(H_{(w, y)}, S \setminus \{w \}) = SDdim(H, S)$ and every other $y' \neq y$ has $SDdim(H_{(w,y')}, S \setminus \{w\}) < SDdim(H,S)$.
As a result, point $x$ satisfies this property as well (otherwise this $w$ would have been preferred over $x$ by the SD-SOA, by definition of the SD-SOA).
If the SD-SOA's prediction is correct, then the inductive hypothesis shows that $M_{SD-SOA}(H_{(x, y_x)}, S \setminus \{x \}) \leq SDdim(H_{(x, y_x)}, S \setminus \{x \}) \leq SDdim(H, S)$ (by Lemma~\ref{lemma:monotonicity}).
If the SD-SOA makes a mistake, then it is guaranteed that the value of $SDdim(H_{(x,y'_x)}, S \setminus \{x\}) \leq SDdim(H,S) - 1$ regardless of the true label $y'_x \neq y_x$ (by the property of $x$ established above).
By the inductive hypothesis, $M_{SD-SOA}(H_{(x, y'_x)}, S \setminus \{x \}) \!+\! 1 \leq SDdim(H_{(x, y'_x)}, S \setminus \{x \}) \!+\! 1 \leq SDdim(H, S)$.
\end{proof}

\begin{proof}[Proof of Theorem \ref{thm:sddim(H,S)}]
Lemma \ref{lemma:one} shows us that $SDdim(H, S) \leq M_{sd}(H, S)$.
From Lemma \ref{lemma:soa}, we know that for any $S \subseteq \mathcal{X}$, $M_{SD-SOA}(H, S) \leq SDdim(H, S)$.
For any $S \subseteq \mathcal{X}$, $M_{sd}(H, S)$ is the optimal self-directed learning mistake-bound, so $M_{sd}(H, S) \leq M_{SD-SOA}(H, S)$.
Therefore, $SDdim(H, S) \leq M_{sd}(H, S) \leq M_{SD-SOA}(H, S) \leq SDdim(H, S)$, so $SDdim(H, S) = M_{SD-SOA}(H, S) = M_{sd}(H, S)$.
\end{proof}
\section{Examples and Learnability Gaps}
In this section, we motivate the study of the self-directed complexity by applying it to popular examples found throughout the learning theory literature. 
In each example, we showcase the ease of calculating the self-directed complexity using the dimension $SDdim$ to characterize $M_{sd}$.

\paragraph{Example 1: Singleton Classifiers} Let $\mathcal{X} = \mathbb{N}$, the set of all natural numbers, $H = \{\mathbbm{1}_{\{n\}}: n \in \mathbb{N} \}$, and $\mathcal{Y}= \{0, 1\}$. $\mathbbm{1}_{\{n\}}$ is an indicator function on $n$.
To compute $M_{sd}(H)$, we play the labelling game on $H$.
From the subset of points selected by the adversary, the learner can label any point a $1$.
The adversary is then either forced to label all of the remaining points $0$ or risk leaving a point for the learner to label leading to an unrealizable sequence. 
Therefore, $M_{sd}(H) = SDdim(H) = 1$.

\paragraph{Example 2: Threshold Classifiers} \label{example2}Let the instance space $\mathcal{X} = [ 0, \infty)$ be on the real line, $H = \{ h_{a}: a \in [ 0, \infty) \}$, and $\mathcal{Y} = \{0,1\}$.
Each $h_a$ is a threshold function defined as $\mathbbm{1}_{[x < a]}$ which is $1$ if $x < a$ and $0$ elsewhere \citep{shalev2014understanding}. 
We now proceed to play the labelling game on $H$ to compute $M_{sd}(H)$.
In the beginning, the adversary is allowed to select any subset of points $S \subseteq \mathcal{X}$ to be given to the learner.
The optimal move for the learner would be to pick the leftmost point available, referred to as $x_{left}$, and label it $0$.
The adversary is now forced to label all the remaining points $0$ to maintain a realizable sequence.
If the adversary didn't label all the remaining points, the learner can label any remaining point a $1$ leading to an unrealizable sequence. 
As a result, the labelling game lasts a single round so $M_{sd}(H) = SDdim(H) = 1$.

\paragraph{Example 3: $k-$Interval Classifiers} Let $H_k$ be the class of $k-$interval classifiers over $\mathcal{X} = [0, \infty)$ with $\mathcal{Y} = \{ 0, 1\}$.
We show that $M_{sd}(H_k) = 2k$ by playing the labelling game and computing the value of $SDdim(H_k)=2k$.
Refer to the Appendix \ref{appendix:learnability_gaps} for a detailed description for the proof of the mistake-bound.

\subsection{Learnability Gaps from $M_{sd}$}

Before we dive into the learnability gaps, we focus on defining the other online learning mistake-bound model, $M_{online}$, and the two offline mistake-bound models $M_{worst}$ and $M_{best}$ on some concept class $H$ and instance space $\mathcal{X}$.
In the online learning scenario, the learner simply receives instances to make predictions on, so $M_{online}(H)$ is the mistake-bound model specific to this learning environment.
In the offline learning model $M_{worst}(H)$, called the worst-sequence model, the adversary chooses a sequence of instances that a learner sees before the prediction process starts.
It's counterpart, $M_{best}(H)$ which is the best-sequence model, the learner is allowed to choose a sequence of instances (without looking at the true labels) it wants to predict on.
We also look at the relationship between $VC(H)$ and $M_{sd}(H)$.
If $S$ is the largest shattered set on $H$, then the labelling game continues for a minimum of $VC(H)$ rounds since there exists a function $h \in H$ consistent with any labelling of $S$.
Then, it follows that $M_{sd}(H) \geq VC(H)$.
As a natural consequence, we get the following ordering of the mistake-bounds and VC-dimension for any $H$ and $\mathcal{X}$:
\begin{align}
    M_{online}(H) \geq M_{worst}(H) \geq M_{best}(H) \geq M_{sd}(H) \geq VC(H).
    \label{eq:all_mistake_bounds}
\end{align}
For a detailed technical description of these mistake-bounds, please refer to Appendix \ref{appendix:learnability_gaps}.

\subsubsection{$M_{best}$ vs $M_{sd}$} 

In this section, we explore the learnability gap between $M_{best}$ and $M_{sd}$, and also showcase a simple yet interesting result when $M_{best}(H) = 1$. 
Corollary \ref{corollary:m_best_vs_m_sd} is built from Theorem \ref{thm:m_best_learnability} to showcase the learnability gap between the two learning models.
For further technical details of the proofs, refer to Appendix \ref{appendix:m_best_proofs}.

\begin{theorem}
    For every $H$, instance space $\mathcal{X}$, and $\mathcal{Y}$, $M_{sd}(H) = 1$ if and only if $M_{best}(H) = 1$.
    \label{thm:m_best}
\end{theorem}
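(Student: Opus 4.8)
The plan is to prove the two implications separately, using the identification $SDdim(H)=M_{sd}(H)$ from Theorem~\ref{thm:sddim(H,S)} together with the ordering of mistake-bounds in \eqref{eq:all_mistake_bounds}.

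The implication $M_{best}(H)=1\Rightarrow M_{sd}(H)=1$ is the easy one. From \eqref{eq:all_mistake_bounds} I get $M_{sd}(H)\le M_{best}(H)=1$, so it only remains to rule out $M_{sd}(H)=0$. Since $M_{best}(H)=1>0$ we must have $|H|\ge 2$ (a single concept makes every mistake-bound vanish), so there are $h_1,h_2\in H$ and a point $x\in\mathcal{X}$ with $h_1(x)\ne h_2(x)$. On the singleton instance set $\{x\}$ the self-directed learner is forced to commit to one label, and the adversary answers with whichever of $h_1,h_2$ disagrees; hence $M_{sd}(H,\{x\})\ge 1$ and $M_{sd}(H)=1$.

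For the converse, I would assume $M_{sd}(H)=1$, equivalently $SDdim(H)\le 1$. Again \eqref{eq:all_mistake_bounds} gives $M_{best}(H)\ge 1$, so the entire content is the upper bound $M_{best}(H)\le 1$. The plan is to show, for every finite $S$, that there is one fixed ordering of $S$ on which the optimal online learner errs at most once (this is exactly $M_{best}(H,S)\le 1$), arguing by induction on $|S|$. If $SDdim(H,S)=0$ then $\mathcal{T}^1(H,S)=\emptyset$, which forces $H$ to induce a unique labelling of $S$, so every order incurs $0$ mistakes. If $SDdim(H,S)=1$, I would apply Lemma~\ref{lemma:SDdim_point} to extract a point $x_1\in S$ and a distinguished label $y_1$ such that every label $y'\ne y_1$ satisfies $SDdim(H_{(x_1,y')},S\setminus\{x_1\})=0$, while $SDdim(H_{(x_1,y_1)},S\setminus\{x_1\})\le 1$ by Lemma~\ref{lemma:monotonicity}. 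The inductive hypothesis then supplies an order $\pi'$ of $S\setminus\{x_1\}$ that is good for the class $H_{(x_1,y_1)}$, and I would place $x_1$ first, followed by $\pi'$.

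It then remains to check that this order achieves at most one mistake, using the learner that predicts $y_1$ on $x_1$. If the true label of $x_1$ is $y_1$, there is no mistake and the tail is exactly $\pi'$ against version space $H_{(x_1,y_1)}$, on which the inductive strategy guarantees at most one further error. If the true label is some $y'\ne y_1$, the learner errs once on $x_1$, but now the version space is $H_{(x_1,y')}$ with $SDdim=0$, hence a unique labelling of the remaining points, so the learner is correct thereafter. Either way the total is at most one, giving $M_{best}(H,S)\le 1$ for all finite $S$ and therefore $M_{best}(H)\le 1$. The step I expect to require the most care is this verification: the real danger is an error on $x_1$ compounded by a second forced error inside $\pi'$, and it is avoided precisely because committing to the unique depth-preserving label $y_1$ means any mistake on $x_1$ collapses $SDdim$ to $0$ and freezes the rest of the labelling, while the only branch that preserves $SDdim=1$ is error-free on $x_1$. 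The one auxiliary fact I would pin down is that $SDdim(H',S')=0$ entails a unique realizable labelling of $S'$, which is immediate from the definition of $SDdim$ via emptiness of $\mathcal{T}^{1}(H',S')$.
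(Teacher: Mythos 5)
Your proof is correct, and for the nontrivial direction ($M_{sd}(H)=1 \Rightarrow M_{best}(H)=1$) it reaches the same destination as the paper by a more self-contained route. The paper's proof leans on Theorem~\ref{thm:sddim(H,S)}: since $M_{SD\text{-}SOA}(H)=1$, the offline learner fixes in advance the query sequence the SD-SOA would produce \emph{if the adversary always confirmed its predictions}; after the first mistake, the fact that the SD-SOA can make no further mistakes forces all still-consistent concepts to agree on every remaining point, so the fixed order costs nothing thereafter. Your induction on $|S|$ re-derives exactly this mistake-free path from first principles, using Lemma~\ref{lemma:SDdim_point} to pick the distinguished point/label and Lemma~\ref{lemma:monotonicity} to control the surviving branch --- i.e., you unroll the ingredients of the paper's Lemma on the SD-SOA rather than citing it as a black box. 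What your version buys is that it makes explicit the ``freezing'' fact the paper leaves implicit: that $SDdim(H',S')=0$ forces a unique realizable labelling of $S'$. Two small points of care: (i) that fact is true but not quite ``immediate'' from $\mathcal{T}^1(H',S')=\emptyset$ --- one should note that two distinct consistent labellings of $S'$ yield a realizable $1$-tree whose root is their disagreement set, with $O_{root}$ given by their common labels, and whose edge sets $O$ are filled in by the two concepts themselves; and (ii) after a mistake the quantity $SDdim(H_{(x_1,y')},S\setminus\{x_1\})$ may be $-1$ rather than $0$ (empty version space), but then $y'$ cannot be the true label under realizability, so your case analysis stands. Your forward direction ($M_{best}=1 \Rightarrow M_{sd}=1$) matches the paper's argument essentially verbatim, differing only in that you phrase the lower bound as a one-point adversary argument where the paper builds a depth-$1$ self-directed tree on $\{x\}$.
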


\begin{theorem}
For an $n \geq 3$, let the instance space 
$\mathcal{X}_n = \{x_1, x_2, ..., x_{2^n}\}$
where each $x_i \in \mathcal{X}_n$ is some distinct element. 
Then, for each $n \geq 3$, there exists a concept class $H_n$ with a finite label space $\mathcal{Y}_n$ such that 
$M_{best}(H_n) \geq n$ and $M_{sd}(H_n) = 2$.
\label{thm:m_best_learnability}
\end{theorem}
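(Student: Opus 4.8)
The plan is to exhibit an explicit ``pointer-to-announcer'' concept class in which a single well-chosen adaptive query locates the one point that reveals the entire target, while any fixed, label-blind query order lets the adversary hide that point and extract the target one bit at a time. Concretely, identify the domain with $x_1,\dots,x_{2^n}$ and fix a map $\beta\colon\{2,\dots,2^n\}\to\{1,\dots,n\}$ in which every index has at least two preimages; such a $\beta$ exists precisely because $2^n-1\ge 2n$ for $n\ge 3$, which is exactly where the hypothesis $n\ge 3$ enters. For each pointer $p\in\{2,\dots,2^n\}$ and vector $v\in\{0,1\}^n$ define $h_{p,v}$ by $h_{p,v}(x_1)=p$, by $h_{p,v}(x_p)=v$ (the ``announcer''), and by $h_{p,v}(x_q)=v_{\beta(q)}$ for $q\notin\{1,p\}$. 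Set $H_n=\{h_{p,v}\}$ with the finite label space $\mathcal{Y}_n=\{2,\dots,2^n\}\cup\{0,1\}^n\cup\{0,1\}$.

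For the bound $M_{sd}(H_n)=2$, I would prove both directions. For the upper bound, I give a self-directed strategy: query $x_1$ first, incurring at most one mistake and learning the pointer $p$; then query the announcer $x_p$, incurring at most one more mistake and learning $v$ outright; every remaining point $x_q$ then carries the now-known label $v_{\beta(q)}$, so no further mistakes occur, giving $M_{sd}(H_n)\le 2$. For the matching lower bound I argue that no single first query can pin down the target: querying $x_1$ reveals only $p$ while leaving all $2^n\ge 2$ values of $v$ consistent (so a mistake at $x_p$ remains forceable), and querying any other point reveals at most one bit of $v$ and nothing about $p$. This is cleanly packaged as a realizable $2$-tree in the labelling game, yielding $SDdim(H_n)\ge 2$ and hence $M_{sd}(H_n)=2$ via Theorem~\ref{thm:SDdim=m_sd}.

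For $M_{best}(H_n)\ge n$, I fix an arbitrary order $\pi$ that the learner commits to on $S=\mathcal{X}_n$. Since in the best-sequence model the adversary may choose the target after seeing $\pi$, it sets $p$ so that the announcer $x_p$ is the \emph{last} non-$x_1$ point of $\pi$, forcing every bit-point to be queried before the announcer. Because each $\beta$-fiber has size at least two, deleting the single announcer point still leaves, for every $i\in\{1,\dots,n\}$, some bit-$i$ point preceding the announcer. At the first time such a point is predicted, the coordinate $v_i$ is still unconstrained (knowing $p$ conveys no information about $v$, and distinct coordinates are independent across $H_n$), so the adversary answers the bit opposite to the learner's prediction, committing $v_i$ consistently and forcing a mistake. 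Ranging $i$ over all $n$ coordinates produces $n$ distinct forced mistakes, so $M_{best}(H_n)\ge n$, which together with the previous paragraph yields the claimed separation.

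The main obstacle is the $M_{best}$ lower bound, since it must hold \emph{uniformly over every} pre-committed order $\pi$: the delicate point is verifying that the adversary can always realize ``one mistake per bit'' through a single consistent concept $h_{p,v}$, which rests on two facts that I would check carefully — that learning the pointer $p$ never helps the learner predict any bit-point, and that hiding the announcer as the last point cannot annihilate any bit index. The latter is precisely what forces each $\beta$-fiber to have size at least two and thereby pins the constraint to $n\ge 3$. A secondary but necessary check is the sharpness of $M_{sd}=2$, i.e.\ ruling out a clever one-mistake adaptive strategy; I would settle this most safely by writing down the explicit realizable $2$-tree rather than arguing informally about first queries.
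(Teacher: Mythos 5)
There is a genuine gap, and it is fatal to the construction rather than to a single step: the claim $M_{sd}(H_n)=2$ is false for your class. Recall that in this paper $M_{sd}(H_n)=\max_{S\subseteq\mathcal{X},|S|<\infty}M_{sd}(H_n,S)$, so the adversary chooses which finite subset the learner must label, and it will simply choose $S=\{x_2,\dots,x_{2^n}\}$, omitting the pointer $x_1$. Your upper-bound strategy (``query $x_1$ first, then the announcer'') is then unavailable, and in fact your own $M_{best}$ lower-bound argument ports directly into the adaptive setting on this $S$: as long as the adversary has not yet answered with a vector label, any unlabelled point $x_q$ whose fiber bit $v_{\beta(q)}$ is not yet revealed admits at least three consistent labels --- bit $0$, bit $1$ (placing the announcer at some other unlabelled point, which exists since each fiber has size $\geq 2$), or a vector (placing the announcer at $x_q$ itself). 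Hence whatever the learner predicts there, the adversary can answer differently and consistently, forcing a mistake. The adversary's strategy is: answer the first query into each fiber with the bit opposite the prediction (or any bit, if a vector was predicted), concede all repeat queries within a determined fiber, and reveal the announcer only on the last remaining point, where its label is forced and costs nothing. Since every fiber has at least two points, every fiber's first touch occurs strictly before the last query, so all $n$ fibers yield forced mistakes and $M_{sd}(H_n)\geq n$. There is no separation at all.

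This is precisely the failure mode the paper's construction is engineered to avoid, and the contrast is instructive. In the paper, the ``global key'' (which threshold ordering is in play) is not localized at a special point that can be stripped out of $S$; instead every label of every point is a pair $(P,y)\in\hat{P}\times\{0,1\}$ carrying the full permutation $P$. Whichever finite $S$ the adversary selects and whichever point gets answered first, that single answer reveals the ordering, after which the learner runs the one-mistake threshold strategy (label the point with largest $P$-value as $1$, forcing the adversary to close out the game). If you want to repair your construction, the announcer/pointer information has to be replicated into the labels of all points (e.g., make every bit-point's label also encode $p$, or encode the identity of the target as the paper does via $P$), rather than sitting at designated domain elements whose absence from $S$ collapses the learner's advantage.
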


\begin{corollary}
    \label{corollary:m_best_vs_m_sd}
    There exists an instance space $\mathcal{X}$, label space $\mathcal{Y}$, and concept class $\mathcal{H}$ such that $M_{best}(\mathcal{H}) = \infty$ and $M_{sd}(\mathcal{H}) = 2$.
\end{corollary}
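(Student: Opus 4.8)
The plan is to build $\mathcal{H}$ as a \emph{disjoint union} of the classes $H_n$ supplied by Theorem \ref{thm:m_best_learnability}, one for every $n \ge 3$. Concretely, I would take the instance spaces $\mathcal{X}_n$ to be pairwise disjoint and the label spaces $\mathcal{Y}_n$ pairwise disjoint, set $\mathcal{X} = \bigcup_{n \ge 3}\mathcal{X}_n$ and $\mathcal{Y} = \{\star\} \cup \bigcup_{n \ge 3}\mathcal{Y}_n$ for a fresh null symbol $\star$, and let $\mathcal{H}$ contain, for each $n$ and each $h \in H_n$, the concept that equals $h$ on $\mathcal{X}_n$ and equals $\star$ everywhere off $\mathcal{X}_n$. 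The disjointness of the label spaces is the crucial design choice: observing any non-$\star$ label instantly reveals which block $\mathcal{X}_n$ the target inhabits. It then remains to prove the two claims $M_{best}(\mathcal{H}) = \infty$ and $M_{sd}(\mathcal{H}) = 2$.

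For $M_{best}(\mathcal{H}) = \infty$ I would use monotonicity under restriction to a single block. For any $n$, restricting attention to a finite $S \subseteq \mathcal{X}_n$ makes the relevant behaviour depend only on $\mathcal{H}|_{\mathcal{X}_n} = H_n \cup \{\text{all-}\star\} \supseteq H_n$; since $M_{best}$ is monotone in the concept class (any adversary strategy against $H_n$ is still available against the larger $\mathcal{H}$), we get $M_{best}(\mathcal{H}) \ge M_{best}(H_n) \ge n$, and letting $n \to \infty$ yields $M_{best}(\mathcal{H}) = \infty$. The lower bound $M_{sd}(\mathcal{H}) \ge 2$ is the same restriction argument phrased through self-directed trees: for $S \subseteq \mathcal{X}_n$, every realizable $k$-tree for $(H_n,S)$ is also realizable for $(\mathcal{H},S)$, because each branch realized by some $h \in H_n$ is realized by the corresponding extended concept of $\mathcal{H}$. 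Hence by the characterization \eqref{eq: 1} and Theorem \ref{thm:sddim(H,S)} we conclude $M_{sd}(\mathcal{H}) \ge M_{sd}(H_n) = 2$.

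The substance is the matching upper bound $M_{sd}(\mathcal{H}) \le 2$, for which I would exhibit a self-directed algorithm in two phases. In the first phase the learner predicts the null label $\star$ on each point it queries. On every block other than the target's, $\star$ is the correct label, so these predictions are free; the first mistake therefore necessarily occurs at some $x_0 \in \mathcal{X}_n$ in the target block $n$, costs exactly one mistake, and — by disjointness of the label spaces — pins down both $n$ and the value $f^*(x_0)$. In the second phase the learner runs the optimal within-block strategy (the SD-SOA of Algorithm \ref{alg:cap}) on the version space $(H_n)_{(x_0, f^*(x_0))}$ over the remaining points of $\mathcal{X}_n$, while continuing to answer $\star$ — always correctly — on all other blocks. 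By Lemma \ref{lemma:soa} this second phase makes at most $SDdim\!\left((H_n)_{(x_0, f^*(x_0))}, \cdot\right)$ mistakes.

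The main obstacle is to guarantee that the two phases together cost at most $2$ rather than $3$. The identification mistake reveals a label chosen by the \emph{adversary}, and Lemma \ref{lemma:monotonicity} alone only gives $SDdim\!\left((H_n)_{(x_0, f^*(x_0))}, \cdot\right) \le SDdim(H_n) = 2$, which would permit a total of three. The resolution is to have the learner query, inside each block, a distinguished point first, and to verify that the $H_n$ produced by Theorem \ref{thm:m_best_learnability} possesses a point $x_0$ for which \emph{every} realizable label drops the residual complexity to $SDdim \le 1$ — equivalently, that the depth-$2$ self-directed strategy witnessing $M_{sd}(H_n) = 2$ has a universally reducing first query with no ``sticky'' label that the adversary could exploit to keep the complexity at $2$. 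This is precisely where the explicit structure of the construction, and not merely the values $M_{best}(H_n) \ge n$ and $M_{sd}(H_n) = 2$, must be invoked; granting it, the two phases cost $1 + 1 = 2$, which together with the lower bound gives $M_{sd}(\mathcal{H}) = 2$ and $M_{best}(\mathcal{H}) = \infty$.
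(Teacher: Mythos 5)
Your construction and the paper's are siblings: both glue together the classes $\mathcal{H}_n$ of Theorem \ref{thm:m_best_learnability} on pairwise disjoint copies $\mathcal{X}_n$, and both get $M_{best}(\mathcal{H}) = \infty$ by the same restriction/monotonicity argument (the paper gets $M_{sd}(\mathcal{H}) \geq 2$ from Theorem \ref{thm:m_best} rather than your tree-restriction argument, but that difference is cosmetic). The difference is the glue. The paper makes each concept $h^i_m \in \mathcal{H}_n$ label every point \emph{outside} $\mathcal{X}_n$ with a label $(n, P_i, m)$ that uniquely identifies the concept, so any off-block label collapses the version space to a single function; the upper bound is then run through the labelling game, re-invoking the game analysis from the proof of Theorem \ref{thm:m_best_learnability} when $S$ lies in one block, and using the identity-revealing labels to end the game within two rounds when $S$ meets two blocks. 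You instead make off-block labels a single null symbol $\star$, which is uninformative but perfectly predictable, and argue algorithmically: predict $\star$ until the first mistake, then solve the revealed block. Both glues work; yours handles mixed-block $S$ more cleanly (off-block queries are simply free), while the paper's makes the endgame trivial after any off-block label is seen.

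The step you leave unproven (``granting it'') is exactly the right thing to worry about, and it does hold for this construction, but it must actually be proved: as you note, the black-box values $M_{sd}(\mathcal{H}_n)=2$ and $M_{best}(\mathcal{H}_n)\geq n$ are not enough. Indeed, for a general class with $M_{sd}=2$ the adversary could answer your identification query with a ``sticky'' label that keeps the residual complexity at $2$, for a total of three mistakes; the SD-SOA escapes this by predicting the sticky label itself (Lemma \ref{lemma:SDdim_point}), a defense your phase~1 forfeits by predicting $\star$. So without the structural verification your proof is incomplete. The verification is short, though: every label in $\mathcal{Y}_n$ carries the permutation $P$ as its first coordinate, so after the first in-block example $(x_0,(P,b))$ the version space is contained in $\{h^P_m : \mathbbm{1}[P(k_0) \leq m] = b\}$, a chain (a threshold class under the order $P$); for a chain, the endgame of Example \ref{example2} --- equivalently the second round in the paper's proof of Theorem \ref{thm:m_best_learnability} --- gives residual $SDdim \leq 1$, since labelling the remaining point with the largest $P$-value as $(P,1)$ forces every other point, and Lemma \ref{lemma:soa} then bounds your phase~2 by one mistake. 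With that paragraph added, your argument is correct; the paper never states this fact separately only because it reuses the game analysis inside the proof of Theorem \ref{thm:m_best_learnability} rather than its statement.
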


\noindent \textbf{Open Problem 1.} For some fixed $d \in \mathbb{N}$ and $\forall n \in \mathbb{N}$, does there exist a concept class $H^n_d$ over some instance space $\mathcal{X}$ in the binary class setting, $\mathcal{Y} = \{0,1\}$, such that $M_{sd}(H) = d$ and $M_{best}(H) = n + d$?
\\

\subsubsection{Other Learnability Gaps}

In this section, we briefly summarize each of the learnability gaps when compared to the self-directed learning complexity. 
Refer to Appendix \ref{appendix:learnability_gaps} for more detailed derivations.

\begin{itemize}
    \item \textbf{$\mathbf{M_{online}}$ vs. $\mathbf{M_{sd}}$} To describe this learnability gap, we use the concept class of threshold classifiers $H$ on $\mathcal{X}=[0,a)$ where $a \in \mathbb{R}_{>0}$. On $H$, we show that $M_{online}(H) = \infty$ and from Example 2, it is known that $M_{sd}(H)=2$.
    \item \textbf{$\mathbf{M_{worst}}$ vs. $\mathbf{M_{sd}}$} \cite{ben1997online} proved a result showing that $M_{worst} = \Omega(\sqrt{\log M_{online}})$ so on the concept class of threshold classifiers $H$, we show that $M_{worst}(H) = \infty$ while $M_{sd}(H) = 2$.
    \item \textbf{Query Learning vs. $\mathbf{M_{sd}}$} We explore the learnability gap between query learning and self-directed learning due to their inherent similarities.
    On the concept class of singletons $H$ where $\mathcal{X} = \mathbb{N}$, we show that $QC_{MQ}(H) = \infty$ while $M_{sd}(H) = 1$.
    \item \textbf{$\mathbf{VC}$ vs $\mathbf{M_{sd}}$}     The Vapnik-Chervonenkis dimension, $VC$, is a fundamental dimension within learning theory that characterizes concept classes that are PAC-learnable \citep{vapnik:71,vapnik:74}.
    \cite{ben1995self} demonstrated a learnability gap between $VC$ dimension and $M_{sd}$ by constructing a concept class $H^d_n$ such that for any $d \geq 3$ and $n \geq 1$, $VC(H^d_n) = d$ and $M_{sd}(H^d_n) = n + d$.
    We restate the example as given in \cite{ben1995self} and provide a detailed explanation in Appendix \ref{appendix:learnability_gaps}.
    \item \textbf{$\mathbf{TD}$ vs $\mathbf{M_{sd}}$} $TD$ refers to the minimum teaching dimension which is defined as the size of the worst minimum teaching set over some $H$ (\cite{hegedus:95,hanneke:07a}). To describe the learnability gap, we use the concept class $H^d_n$ constructed by \cite{ben-david:95} to show that $TD(H^d_n) = d + 1$.
\end{itemize}

\subsection{Families of Concept Classes}

\subsubsection{Axis-Aligned Rectangles in $\mathbf{\mathbb{R}^d}$} 
\label{section:rectangles}
Let the concept class $H_d$ represent the class of axis-aligned rectangles and the instance space $\mathcal{X} = \mathbb{R}^d$ for some $d \in \mathbb{N}$. We formally define $H_d$ as $H_d = \{h_{(c_{1_l}, c_{1_r}),..., (c_{k_l}, c_{k_r})}: c_{1_l} \leq c_{1_r}, ..., c_{k_l} \leq c_{k_r}  \}$ \citep{shalev2014understanding}.
Each classifier $h_{(c_{1_l}, c_{1_r}), ..., (c_{k_l}, c_{k_r})}$ is then defined in the following way:

\[ h_{(c_{1_l}, c_{1_r}), ..., (c_{k_l}, c_{k_r})}(x_1, ..., x_k) = \begin{cases} 
          1 & \text{if}~ c_{1_l} \leq x_1 \leq c_{1_r}, ..., c_{k_l} \leq x_k \leq c_{k_r} \\ 
          0 & \text{otherwise.} \\ 
       \end{cases}
    \]

\citet{goldman1994power} explored axis-aligned rectangles on a specific subset $S \subseteq \mathcal{X}$ that was equivalent to a mesh-grid.
Our analysis is the first known result that computes $M_{sd}(H_d)$.

\begin{theorem}
    For any $d \in \mathbb{N}$ and letting $H_d$ be the concept class of axis-aligned rectangles on $\mathcal{X} = \mathbb{R}^d$, $M_{sd}(H_d) = 2d$.
\end{theorem}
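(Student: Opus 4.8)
The plan is to combine Theorem~\ref{thm:sddim(H,S)} with the general ordering in~\eqref{eq:all_mistake_bounds} to reduce the statement to a matching pair of bounds on $SDdim(H_d)$, handling the two inequalities separately. For the lower bound $M_{sd}(H_d)\ge 2d$, I would invoke $M_{sd}(H)\ge VC(H)$ from~\eqref{eq:all_mistake_bounds} together with the standard fact that $VC(H_d)=2d$ (a set of $2d$ points, two extremal points per axis, is shattered by axis-aligned rectangles, while no set of $2d+1$ points is). Taking $S$ to be such a shattered set, every labelling of $S$ is realizable, so the labelling game lasts at least $VC(H_d)$ rounds and $SDdim(H_d)\ge SDdim(H_d,S)\ge 2d$. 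This direction is routine.

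The substance is the upper bound $M_{sd}(H_d)\le 2d$, which by Theorem~\ref{thm:sddim(H,S)} it suffices to prove as $M_{sd}(H_d,S)\le 2d$ for every finite $S$; for this I would exhibit a single self-directed algorithm $\mathcal A$ making at most $2d$ mistakes, so that $M_{sd}(H_d,S)\le M_{sd}(\mathcal A,H_d,S)\le 2d$. The first step is a structural reduction. Writing $P\subseteq S$ for the set of points the target $f^*=\mathbf 1_{R^*}$ labels $1$, and $\widehat R$ for the axis-parallel bounding box (coordinatewise min/max hull) of $P$, realizability gives $\widehat R\subseteq R^*$, hence $P=S\cap R^*\supseteq S\cap\widehat R\supseteq P$, i.e. $P=S\cap\widehat R$. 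Thus the entire labelling of $S$ is determined by the $2d$ numbers $L_i=\min_{x\in P}x_i$ and $U_i=\max_{x\in P}x_i$, and the learner's only task is to discover these $2d$ faces of $\widehat R$.

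The algorithm maintains the bounding box $\widehat R_t$ of the positively-labelled points seen so far and predicts $1$ on a queried point iff it lies inside $\widehat R_t$. It chooses queries by targeting one still-extendable face at a time: to pin $U_i$ it repeatedly queries the unlabelled point of largest $i$-th coordinate among those with $x_i>U_i(\widehat R_t)$ (symmetrically for $L_i$), moving on once no unlabelled point can extend that face. Predictions inside the hull are always correct because $\widehat R_t\subseteq R^*$, and correctly-predicted $0$'s cost nothing, so the only mistakes are queried points predicted $0$ that turn out positive. I would then charge each of the $2d$ faces at most one such mistake: when the algorithm attacks $U_i$ it queries the globally $i$-maximal unlabelled point lying beyond the current hull, so if that point is positive it is the positive of globally largest $i$-th coordinate, which sets $U_i$ to its final value once and for all, after which no later positive can extend $U_i$ again. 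Summing the one-per-face bound over the $d$ coordinates and two sides yields at most $2d$ mistakes.

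The step I expect to be the main obstacle is exactly this charging argument, because of the coupling between coordinates: a point's label depends jointly on all $d$ coordinates, so unlike the one-dimensional interval case (where the positives form a single contiguous block and a left-to-right scan crosses each of the two boundaries once) there is no fixed query order under which the positives have only $2d$ runs. The crucial device is to query, for the face currently under attack, the point that is extremal in the very coordinate along which it exceeds the hull; this guarantees that every surprising positive is a coordinatewise global extreme and therefore finalizes a face in one shot, preventing the adversary from forcing repeated small extensions of the same boundary. Verifying that this targeting rule always has a legal move, that every point is eventually labelled, and that points outside the final hull are correctly predicted $0$ are the remaining routine details.
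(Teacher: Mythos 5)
Your proof is correct, but your upper bound takes a genuinely different route from the paper's. The paper never leaves the labelling game: Player B simply labels, one per round, the extremal unlabelled point in each of the $2d$ axis directions with a $1$; the bounding box of those $2d$ points then contains every still-unlabelled point, so Player A must immediately label everything or face an unrealizable sequence, giving $SDdim(H_d) \leq 2d$, which transfers to $M_{sd}$ through Theorem~\ref{thm:sddim(H,S)}. You instead exhibit an explicit self-directed learner (maintain the hull of observed positives, predict $1$ inside it, attack each face by querying the coordinatewise-extremal unlabelled point beyond it) and charge at most one mistake per face; this bounds $M_{sd}(H_d, S)$ directly from the definition $M_{sd}(H,S) \leq M_{sd}(\mathcal{A}, H, S)$, so your appeal to Theorem~\ref{thm:sddim(H,S)} is actually superfluous. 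The two arguments are duals of one another: your ``surprising positive while attacking face $U_i$'' is precisely Player B's game move of labelling the $i$-extremal unlabelled point $1$, and your finalization claim (a surprising positive is a coordinatewise global extreme among positives) is the game-side observation that this move pins that face permanently. What each buys: in the game formulation every round is a mistake by fiat, so the paper needs no charging argument at all --- the bookkeeping you correctly identify as the main obstacle (one mistake per face despite the coupling between coordinates) is absorbed into the game semantics, which is why the paper's proof is only a few lines; your version is self-contained and constructive, producing an actual $2d$-mistake algorithm rather than inferring its existence from the minimax equivalence. The lower bounds coincide: both use $VC(H_d) = 2d$ together with $M_{sd}(H) \geq VC(H)$ from~\eqref{eq:all_mistake_bounds}.
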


\begin{proof}
To calculate $M_{sd}(H_d)$, we turn to the labelling game and design an optimal strategy for the learner.
Let $S \subseteq \mathcal{X}$ be any subset of points selected by the adversary.
Without loss of generality, the learner will select the $d^{th}$ dimension and pick the leftmost point in that dimension and label it a $1$.
After the adversary's turn, the learner will proceed to pick the rightmost point in the $d^{th}$ dimension and label it a $1$.
The learner then follows this strategy for the remaining $d - 1$ dimensions which would imply labelling $2d - 2$ additional points.
The learner purposefully targeted the extreme points in every dimension to create the largest possible rectangle in $\mathbb{R}^d$ containing any unlabelled points.
So, any unlabelled point lying within the axis-aligned rectangle constructed on those $2d$ points must be labelled as a $1$.
If the adversary proceeds to label all the remaining unlabelled points, then the game ends in $2d$ rounds.
If the adversary intentionally leaves even a single point unlabelled, then the learner will label that point a $0$ immediately producing an unrealizable sequence.
Therefore, $SDdim(H_d) \leq 2d$.
Since $VC(H_d) = 2d$ and $VC(H_d) \leq SDdim(H_d)$, then $M_{sd}(H_d) = SDdim(H_d) = VC(H_d) = 2d$. 
\end{proof}

\subsubsection{VC-Dimension $1$ Classes} We now extend the study of self-directed learning to the entire family of concept classes that contain $VC(H) = 1$. 
This family of concept classes has been originally studied by \citet{ben-david:15}, and it proves the existence of non-trivial structures that characterize all concept classes that contain $VC(H) \leq 1$.
We focus on $VC(H)=1$ concept classes with the proof of Theorem $\ref{thm:vc_one}$ located in Appendix \ref{appendix:vc_one}.

\begin{theorem}
    For any concept class $H$, $M_{sd}(H) = 1$ if and only if $VC(H) = 1$.
    \label{thm:vc_one}
\end{theorem}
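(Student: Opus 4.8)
The plan is to prove the two implications separately, noting that the forward direction is almost immediate from the general ordering \eqref{eq:all_mistake_bounds}. If $M_{sd}(H)=1$, then $VC(H)\le M_{sd}(H)=1$; moreover $VC(H)\neq 0$, since $VC(H)=0$ means all concepts of $H$ agree on every point, forcing the learner to make no mistakes and hence $M_{sd}(H)=0\neq 1$. Thus $VC(H)=1$. The real content lies in the reverse implication, and since $M_{sd}(H)\ge VC(H)=1$ already holds, it suffices to prove the upper bound $M_{sd}(H)\le 1$ whenever $VC(H)\le 1$; equivalently, $M_{sd}(H,S)\le 1$ for every finite $S$ (working in the binary setting $\mathcal{Y}=\{0,1\}$ relevant to $VC$).

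I would fix a finite $S$, pass to the induced labelings $C=\{h|_S:h\in H\}\subseteq\{0,1\}^S$, and argue by strong induction on $|S|$ using the recursive behaviour of the SD-SOA. If some point of $S$ is \emph{forced} (only one label is realizable), the learner queries it, never errs, and we recurse on a class of the same VC dimension over a strictly smaller point set. Otherwise every point admits both labels, and everything reduces to the following structural claim: \textbf{(Extreme-point Lemma)} any $C\subseteq\{0,1\}^S$ with $VC(C)\le 1$ and $|C|\ge 2$ has a point $x$ and a label $b$ realized by \emph{exactly one} concept of $C$. Granting this, the learner queries such an $x$ and predicts the \emph{other} label $\bar b$. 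If the prediction is correct, the learner recurses on $H_{(x,\bar b)}$, which still has VC dimension $\le 1$ but lives on $|S|-1$ points, so by the inductive hypothesis it costs at most one further mistake and the total is at most one. If the prediction is wrong, the true label is $b$, and $H_{(x,b)}$ consists of a single labeling of $S\setminus\{x\}$, so no further mistakes occur. This is exactly the SD-SOA move: $b$ is the label whose restricted version space has $SDdim=0$, so $\bar b$ is the worst label at the point with the best second-worst label.

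The main obstacle is the Extreme-point Lemma, which I would attack through the one-inclusion graph $G(C)$ whose vertices are the concepts in $C$ and whose edges join concepts at Hamming distance one. The decisive observation is that for each coordinate $x$ there is \emph{at most one} edge in direction $x$: the set of $S\setminus\{x\}$-labelings that extend to both values at $x$ has VC dimension at most $VC(C)-1\le 0$, hence at most one element. Combined with the fact that $VC(C)\le 1$ forces $G(C)$ to be acyclic (a cycle projects onto a shattered pair of coordinates), this makes $G(C)$ a forest whose edges carry distinct coordinate labels. In the maximum case $|C|=|S|+1$, $G(C)$ is a spanning tree using each coordinate exactly once, and a leaf $h$ with its incident edge-coordinate $x_e$ yields a $(1,|C|-1)$ split of $C$, so $h$ is the unique concept taking its value at $x_e$, the desired extreme point. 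The delicate part is the non-maximum, possibly disconnected case, where a leaf of one tree component need not give a globally unique value, since another component may repeat that value at $x_e$; I expect to settle this either by embedding $C$ into a maximum $VC$-$1$ class or, more robustly, by invoking the tree representation of $VC$-$1$ classes of \citet{ben-david:15}, which linearizes the components and again exposes an extreme point. This structural step is where the real work sits; the inductive reduction of the previous paragraph is routine once it is in hand.
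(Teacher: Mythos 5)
Your forward direction and the inductive skeleton of the reverse direction are both sound: granting your Extreme-point Lemma, querying the uniquely-realized point $x$ and predicting the other label $\bar b$ does give $M_{sd}(H,S)\le 1$ by induction on $|S|$ (a correct prediction recurses on a $VC\le 1$ class over fewer points; a mistake collapses the version space on $S\setminus\{x\}$ to a single labeling). This is also a genuinely different route from the paper's, which never touches one-inclusion graphs: the paper invokes the tree-ordering characterization of $VC=1$ classes from \citet{ben-david:15} and plays the labelling game directly, having the learner label a $\preceq$-maximal point of $S$ with $1$, after which every remaining label is forced and the game ends in one round.

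However, there is a genuine gap, and you flag it yourself: your Extreme-point Lemma is only established when the one-inclusion graph $G(C)$ is connected (there your leaf argument is correct, since the unique edge in direction $x_e$ is the only edge crossing the partition of $C$ by the value at $x_e$, so the leaf's side of that partition is the leaf alone). In the disconnected case the argument fails outright: for $C=\{000,001,110\}$ (which has $VC(C)=1$), the vertex $000$ is a leaf of its component with incident direction $x_3$, yet $110$ also takes the value $0$ at $x_3$; and for $C=\{00,11\}$ there are no edges, hence no leaves at all. Your two proposed repairs are of unequal strength: embedding an arbitrary VC-$1$ class into a maximum VC-$1$ class is itself a nontrivial theorem you would need to prove or cite, whereas the \citet{ben-david:15} tree representation closes the gap in a few lines --- take $h^*\in C$ whose positive set, an initial segment $\{z\in S: z\preceq y^*\}$, is maximal under inclusion among the positive sets of concepts in $C$ (such $h^*$ exists since at most one concept is identically $0$ and $|C|\ge 2$); any $h'$ with $h'(y^*)=1$ has positive set an initial segment containing $y^*$, hence containing $(h^*)^{-1}(1)$, hence equal to it by maximality, so $(y^*,1)$ is realized by $h^*$ alone. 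Be aware that this repair imports exactly the structural fact the paper exploits directly (its learner labels the $\preceq$-maximal point $1$, forcing all other labels), so once you fill the gap this way your argument becomes an SOA-style repackaging of the paper's proof rather than a fully independent one; only the connected case stands on its own as a different argument.
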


\subsubsection{Linear Separators}

In this section, we consider the class of linear separators, and we prove a lower bound on the self-directed complexity on this family of functions.
Let $\mathcal{X} = \mathbb{R}^d$, $\mathcal{Y} = \{0, 1\}$, and $\mathcal{H}_d = \{x \mapsto sign(h_{\mathbf{w},b}(x)): h_{\mathbf{w},b}, \mathbf{w} \in \mathbb{R}^d, b \in \mathbb{R} \}$.
The function $h_{\mathbf{w},b}(x) = \left( \sum_{i=1}^d w_i x_i \right) + b$ represents an affine function parameterized by $\mathbf{w} \in \mathbb{R}^d$ and $b \in \mathbb{R}$.
It is widely known that the VC-dimension of a class of linear separators on $\mathbb{R}^d$, $\mathcal{H}_d$, has $VC(\mathcal{H}_d) = d + 1$ (\cite{shalev2014understanding}).

Below, we state two theorems in showing a lower bound of $M_{sd}(\mathcal{H}_d) = SDdim(\mathcal{H}_d) \geq 4 \lfloor \frac{d}{3} \rfloor$.
Our first theorem presents an example of a point configuration in $2$ dimensions (diagram in Fig. \ref{fig:linear-separators} and Fig. \ref{fig:lower-bound-configurations}) such that $SDdim(\mathcal{H}_{2}) \geq 4$.
Then, our second theorem uses this result and extrapolates it for any arbitrary number of dimensions to show that $SDdim(\mathcal{H}_d) \geq 4 \lfloor \frac{d}{3} \rfloor$.

\begin{theorem}
    In $\mathbb{R}^2$, $M_{sd}(\mathcal{H}_{2}) \geq 4$. 
    \label{thm:two-d-linear-separators}
\end{theorem}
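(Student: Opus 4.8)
I need to prove that $M_{sd}(\mathcal{H}_2) \geq 4$, where $\mathcal{H}_2$ is linear separators (halfspaces) in $\mathbb{R}^2$.

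**What does this mean?** By Theorem 1 and the labelling game characterization, $M_{sd}(\mathcal{H}_2) = SDdim(\mathcal{H}_2)$. To show $SDdim(\mathcal{H}_2) \geq 4$, I need to exhibit a finite point set $S \subseteq \mathbb{R}^2$ on which the labelling game lasts at least 4 rounds. Equivalently, I need to build a realizable 4-tree, OR (more intuitively) I need to design a point configuration where the LEARNER (Player B) can force at least 4 rounds.

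Wait — let me reconsider who wants what. $SDdim = M_{sd}$ is the learner's mistake bound. The labelling game: Player A (adversary) wants to extend the game (keep realizable, force many rounds), Player B (learner) wants to create unrealizability fast. $SDdim$ is the minimax payout = number of rounds. So a LOWER bound of 4 means: Player A has a strategy guaranteeing the game lasts ≥ 4 rounds no matter what Player B does.

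So actually: I want to show the **adversary** can keep the game realizable for 4 rounds. Let me think again — in the examples (singletons, thresholds), the learner forces termination in 1 round, giving $M_{sd}=1$. For a LOWER bound, I need the game to last long, meaning Player A (adversary) successfully maintains realizability through 4 rounds.

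**The cleanest approach: exhibit a shattered-like configuration with tree structure.** Since the paper references Fig. \ref{fig:linear-separators} and Fig. \ref{fig:lower-bound-configurations}, I'd use an explicit point configuration.

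---

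The plan is to exhibit a specific finite point set $S \subseteq \mathbb{R}^2$ together with an adversary (Player A) strategy in the labelling game that forces at least four rounds, whence $SDdim(\mathcal{H}_2) \geq 4$ and, by Theorem \ref{thm:SDdim=m_sd}, $M_{sd}(\mathcal{H}_2) = SDdim(\mathcal{H}_2) \geq 4$. Equivalently, I will construct a realizable $4$-tree in $\mathcal{T}^4(\mathcal{H}_2, S)$, using the configuration depicted in Fig.~\ref{fig:linear-separators} and Fig.~\ref{fig:lower-bound-configurations}. The key geometric fact I will lean on is that halfspaces in the plane are extremely flexible on points in convex position: for a set of points arranged so that no single labelling is immediately ruled out, the learner cannot pin the target down quickly, and each round the adversary can reply consistently while leaving enough freedom for the remaining points.

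First I would fix the point set $S$ to be the explicit configuration in the figures (a small constant number of points in $\mathbb{R}^2$, chosen in sufficiently general position). Then I would describe Player A's strategy round-by-round. On the adversary's opening move, A labels the ``safe'' subset $C \subseteq S$ of points whose labels are forced or harmless, leaving a core of undetermined points. On each subsequent round, after Player B picks a point $x$ with label $y$, Player A must respond so that the induced constraints remain jointly satisfiable by some $h_{\mathbf w,b}\in\mathcal H_2$, and so that the residual subgame still admits at least three, then two, then one more forced round. The heart of the argument is a case analysis verifying that, whichever point and label B selects, a consistent halfspace remains; this is where I would invoke the separability of the chosen configuration.

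The main obstacle I anticipate is the case analysis in the middle rounds: I must certify that after B's choices the accumulated label constraints are \emph{simultaneously realizable} by a single linear separator, for \emph{every} branch of B's play, not just one. Concretely, for each partial labelling that can arise along a root-to-leaf path of the tree, I need to exhibit (or argue existence of) a separating hyperplane, which amounts to checking that the $1$-labelled and $0$-labelled point sets are linearly separable (e.g.\ via disjoint convex hulls). The cleanest way to discharge this is to pick the configuration so symmetrically that the separability checks reduce to a handful of representative cases up to symmetry; I would therefore invest effort in choosing coordinates that make the convex-hull separation self-evident, and then verify that every branch of the $4$-tree is realized by some $h\in\mathcal H_2$, which by definition of $\mathcal{T}^k(\mathcal H_2,S)$ yields $SDdim(\mathcal H_2,S)\geq 4$.
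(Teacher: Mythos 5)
Your framework is the right one---and it matches the paper's: to get a lower bound you give the adversary (Player A) a strategy in the labelling game that guarantees a realizable labelling over at least $4$ rounds, equivalently a realizable $4$-tree, on some finite $S \subseteq \mathbb{R}^2$. You also correctly untangle the minimax roles, which is the first place such an argument can go wrong. However, what you have written is a plan for a proof, not a proof: you never specify the point configuration, never state the adversary's strategy concretely, and never perform the case analysis that you yourself identify as ``the heart of the argument.'' Every load-bearing step is deferred (``I would fix\ldots'', ``I would describe\ldots'', ``I would invoke\ldots''). In the paper, that missing content is precisely the substance of the theorem: an explicit $8$-point configuration, an adversary strategy that labels \emph{nothing} for the first two rounds, and then---depending on which two points Player B labelled and how---steers the position into one of four canonical partial labellings (up to rotation, reflection, and label complementation), each of which is verified to sustain two further rounds with every remaining unlabelled point still able to take either label consistently.

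There is also a substantive soft spot in the one geometric claim you do make: that halfspaces are ``extremely flexible on points in convex position,'' so the adversary can reply consistently ``while leaving enough freedom.'' Since $VC(\mathcal{H}_2)=3$, no four points are shattered, so an adversary that simply stays passive and keeps all points free cannot survive four rounds; alternating labels around a convex polygon are not linearly separable. The whole difficulty of the theorem is that the adversary must intervene adaptively---choosing \emph{which} points to lock down, and when, as a function of Player B's moves---so that after its move every still-unlabelled point remains ambiguous. Your proposed opening move (label the ``safe or forced'' points) is vacuous at the start of the game (no labels are forced yet) and gives no guidance in the middle game. Until the configuration is fixed and the branch-by-branch realizability checks (reduced by symmetry, as the paper does) are actually carried out, the claimed bound $SDdim(\mathcal{H}_2) \geq 4$ is not established.
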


\begin{proof}(\textbf{Sketch})

To show a lower bound of $M_{sd}(\mathcal{H}_2) \geq 4$, we show that $SDdim(\mathcal{H}_2) \geq 4$ by showing that there is a strategy for the adversary to extend the labelling game for a minimum of $4$ rounds.
Let a subset $S \subset \mathbb{R}^2$ contain $8$ points as the configuration of Fig. \ref{fig:linear-separators}(a) depicts.
In the first two rounds of the labelling game, the adversary will not label any points on its turn.
This implies that after two rounds of the game, only two points are labelled and those labels are selected by the learner. 
Then, the adversary will select labels for an appropriate set of points such that it matches one of the four configurations in Fig. \ref{fig:lower-bound-configurations} up to rotation, reflection, and/or taking complements of the labels (interchanging $1$s and $0$s).
From here, we show that the labelling game is guaranteed to extend for an additional $2$ rounds for a total of $4$ rounds.

Now, it's important to note that the learner has ${8 \choose 2}$ or $28$ different pairs of points it can label in the first two rounds of the game.
However, the learner can also label any pair of points in $4$ possible ways: $(0, 0)$, $(0, 1)$, $(1, 0)$, and $(1, 1)$.
So, there are $112$ distinct labelling combinations of any $2$ points from this $8$ point configuration.

\begin{figure}[h]
    \centering
    \begin{subfigure}
        \centering
        \includegraphics[width=0.4\textwidth]{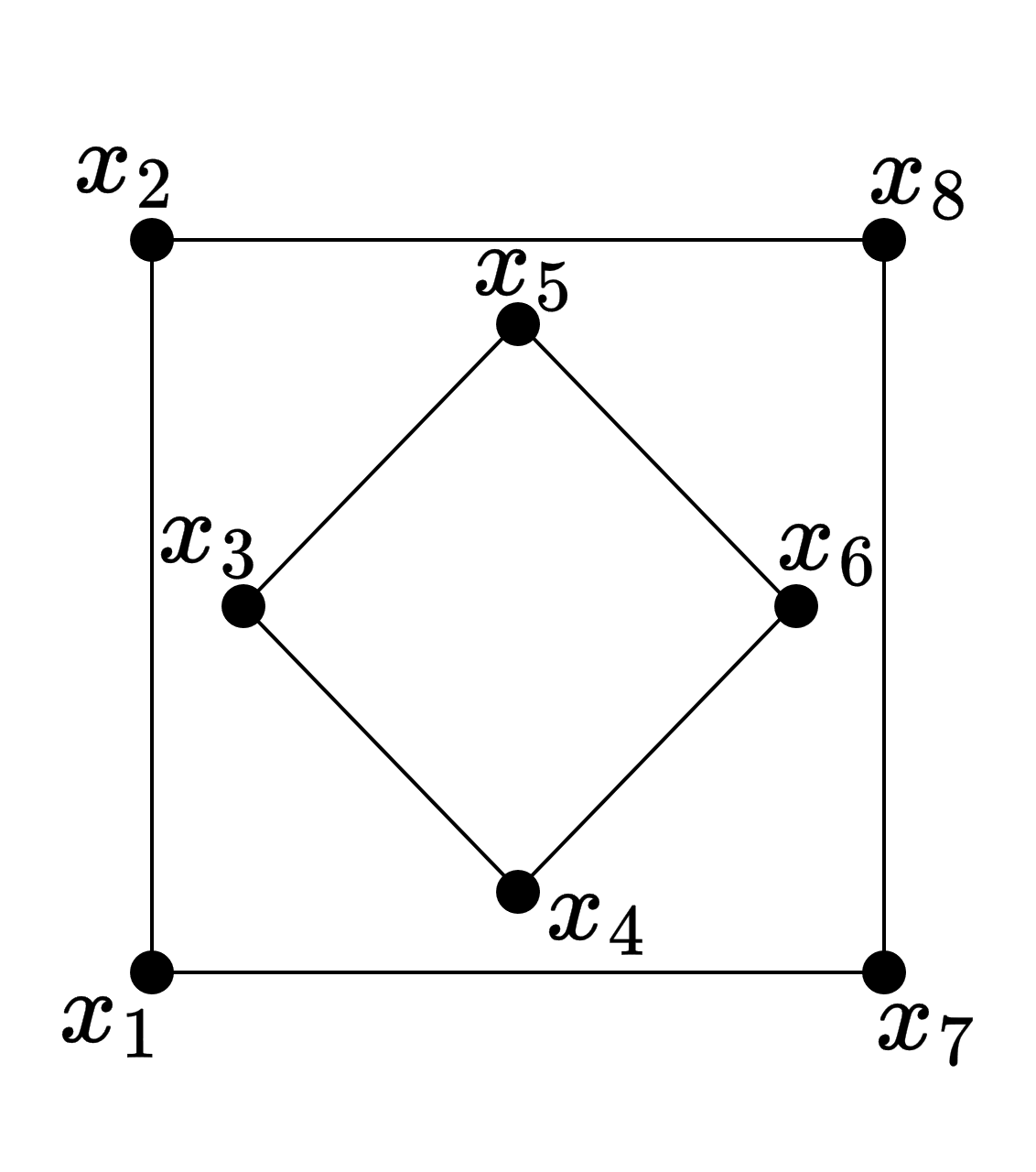}
        \label{base-diagram}
    \end{subfigure}
    \begin{subfigure}
        \centering
        \includegraphics[width=0.4\textwidth]{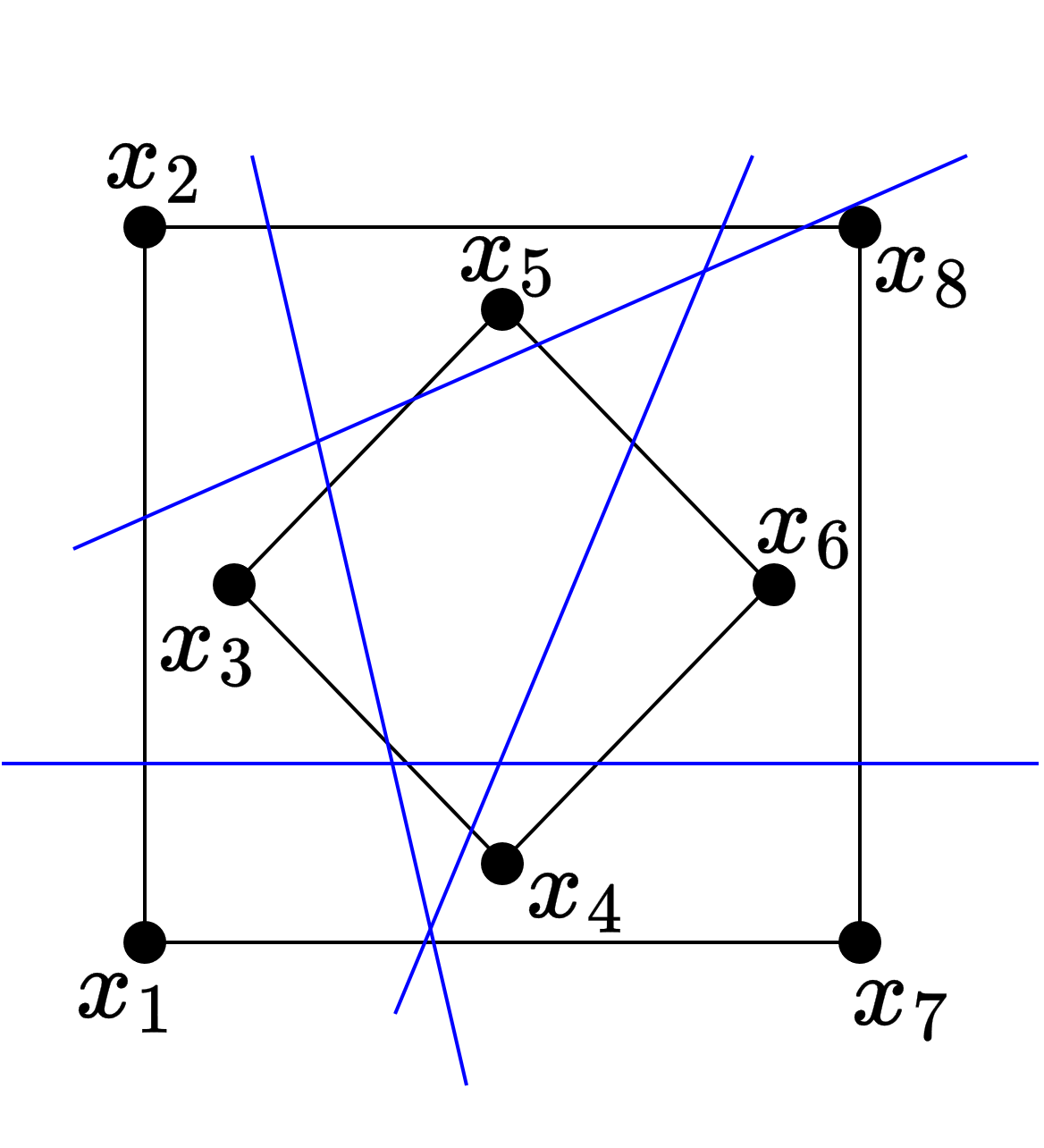}
    \end{subfigure}
    \caption{The 2D configuration of points that yields a lower bound of $SDdim(\mathcal{H}_2) \geq 4$. (a) represents the configuration of $8$ points and (b) represents some examples of linear separators on this set of points denoted as solid, blue lines.}
    \label{fig:linear-separators}
\end{figure}

To show that the labelling game can extend for a minimum of $4$ rounds, the adversary must be equipped with a strategy that regardless of how the learner labels the first $2$ points, there exists a configuration such that the labelling game can continue for an additional $2$ rounds.
Fig. \ref{fig:lower-bound-configurations} represents four base configurations the adversary can employ to push the game for an additional $2$ rounds.
If the labelling game were played on any on the four configurations in Fig. \ref{fig:lower-bound-configurations} as a starting point, then it will last for $2$ rounds.
This can be easily verified by noting that for any configuration that has $2$ unlabelled points, any labelling of those points yields a linearly separable classification.
In the case of configurations that have $3$ or more unlabelled points, the learner is allowed to label any one point on its third move.
Depending on the point and label selected by the learner, the adversary will label a particular sequence of points appropriately and leave one point unlabelled for the learner.
On the learner's fourth move, regardless of the label of the last unlabelled point, a linearly separable classification always exists.

It is important to note that this observation also holds under any rotation, reflection, or complements of the labels.
For any set of linear separators consistent on the base configuration, applying the same modifications on each consistent linear separator as applied to the configuration will still yield a set of consistent linear separators.

For example, assume that the learner has labelled the following two points after two rounds of the labelling game: $(x_5, 1)$ and $(x_7, 0)$.
Due to the symmetric nature of the $8$-point configuration, the adversary can choose to take the complement of configuration (b) in Fig. \ref{fig:lower-bound-configurations} and flip the entire configuration across the y-axis to get $x_5$ labelled as $0$ and $x_7$ labelled as $1$.
Then, the adversary can label the necessary points accordingly, and the labelling game will have continued for $4$ rounds.

The adversary's strategy allows for the labelling game to continue for $4$ rounds regardless of the learner's selection of points and labels; as a result, $M_{sd}(\mathcal{H}_2) = SDdim(\mathcal{H}_2) \geq 4$. 

\begin{figure}
     \centering
     \begin{subfigure}
         \centering
         \includegraphics[width=0.40\textwidth]{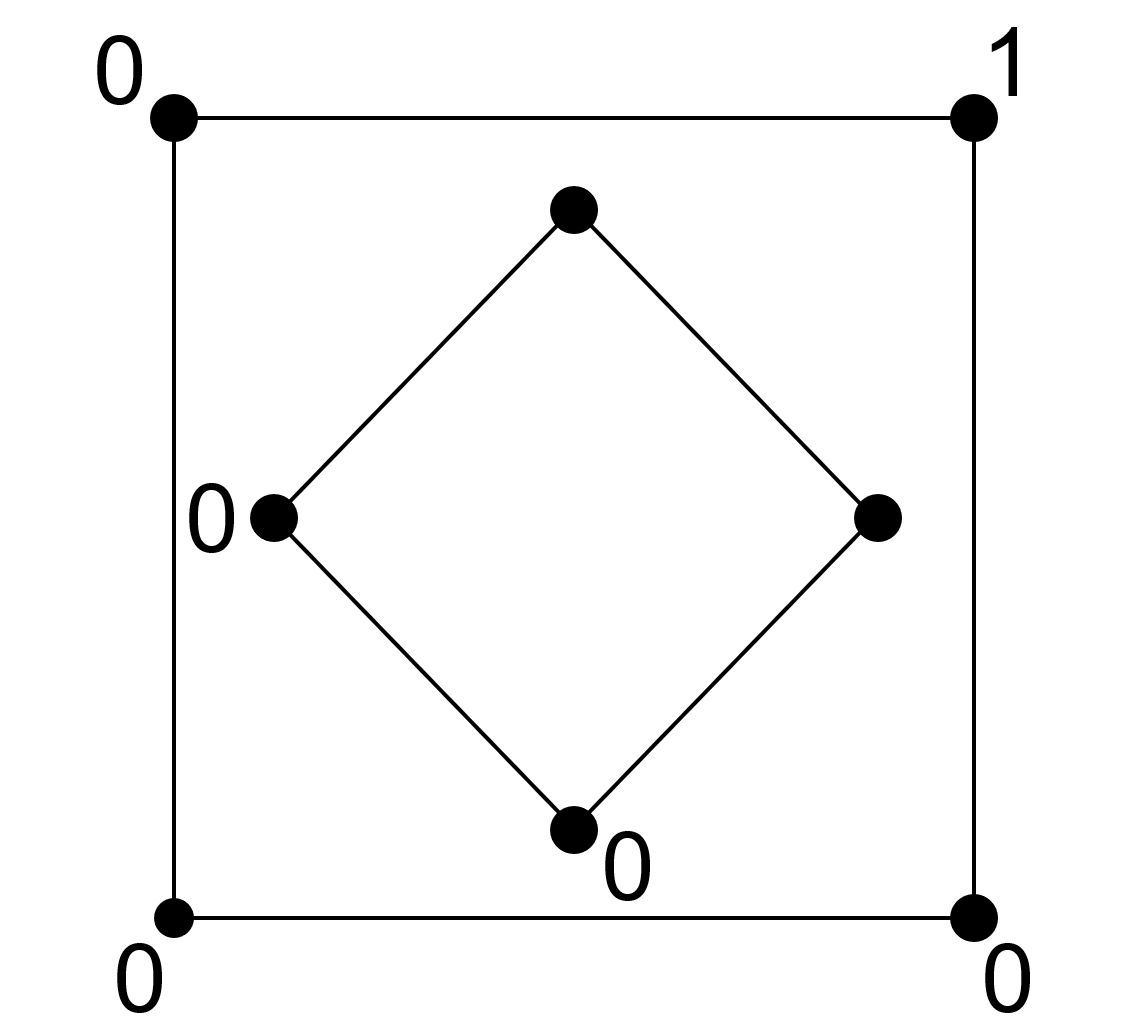}
         \label{fig:(a)}
     \end{subfigure}
     \begin{subfigure}
         \centering
         \includegraphics[width=0.41\textwidth]{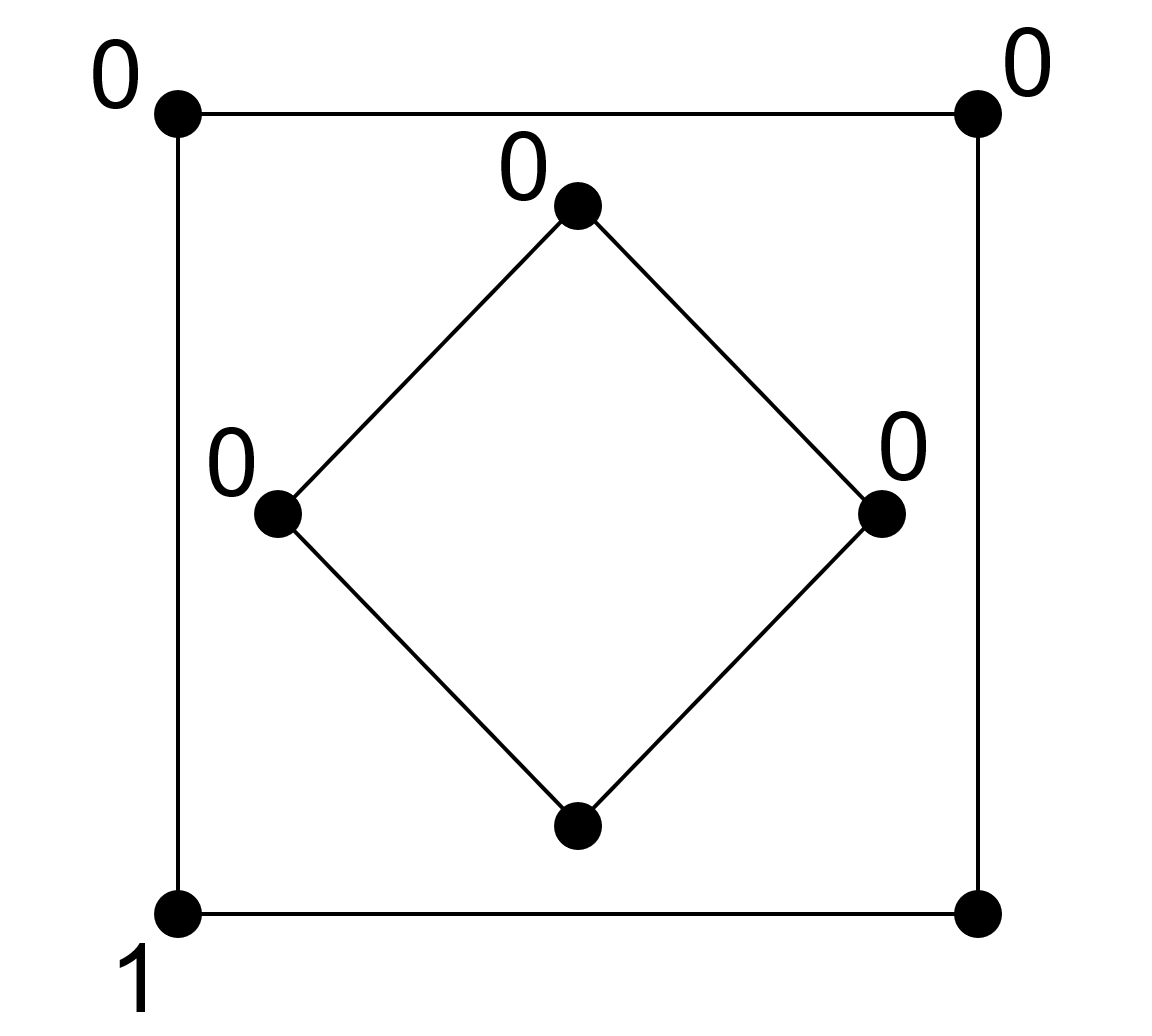}
         \label{fig:(b)}
     \end{subfigure}
     \begin{subfigure}
         \centering
         \includegraphics[width=0.40\textwidth]{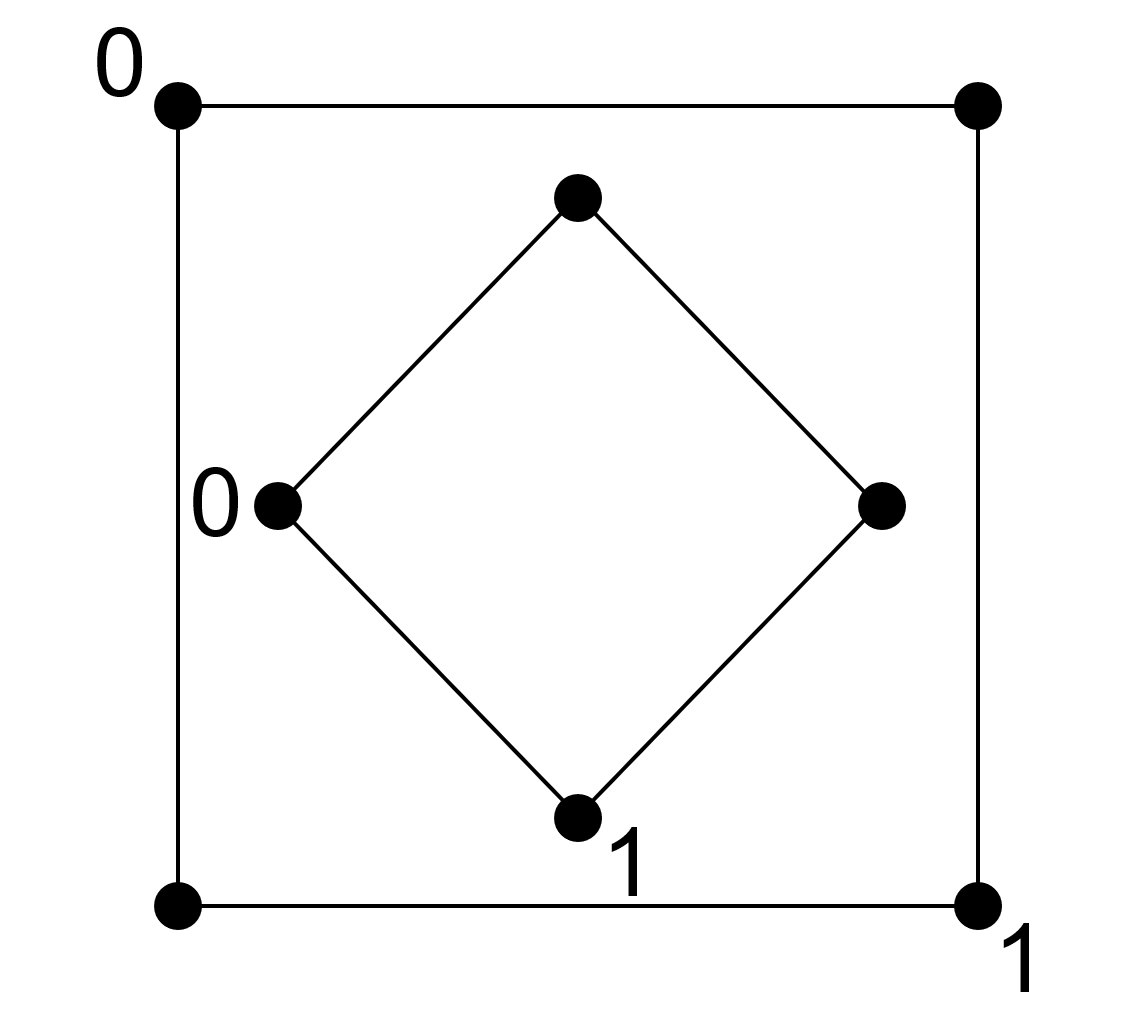}
         \label{fig:(c)}
     \end{subfigure}
    \begin{subfigure}
        \centering
        \includegraphics[width=0.37\textwidth]{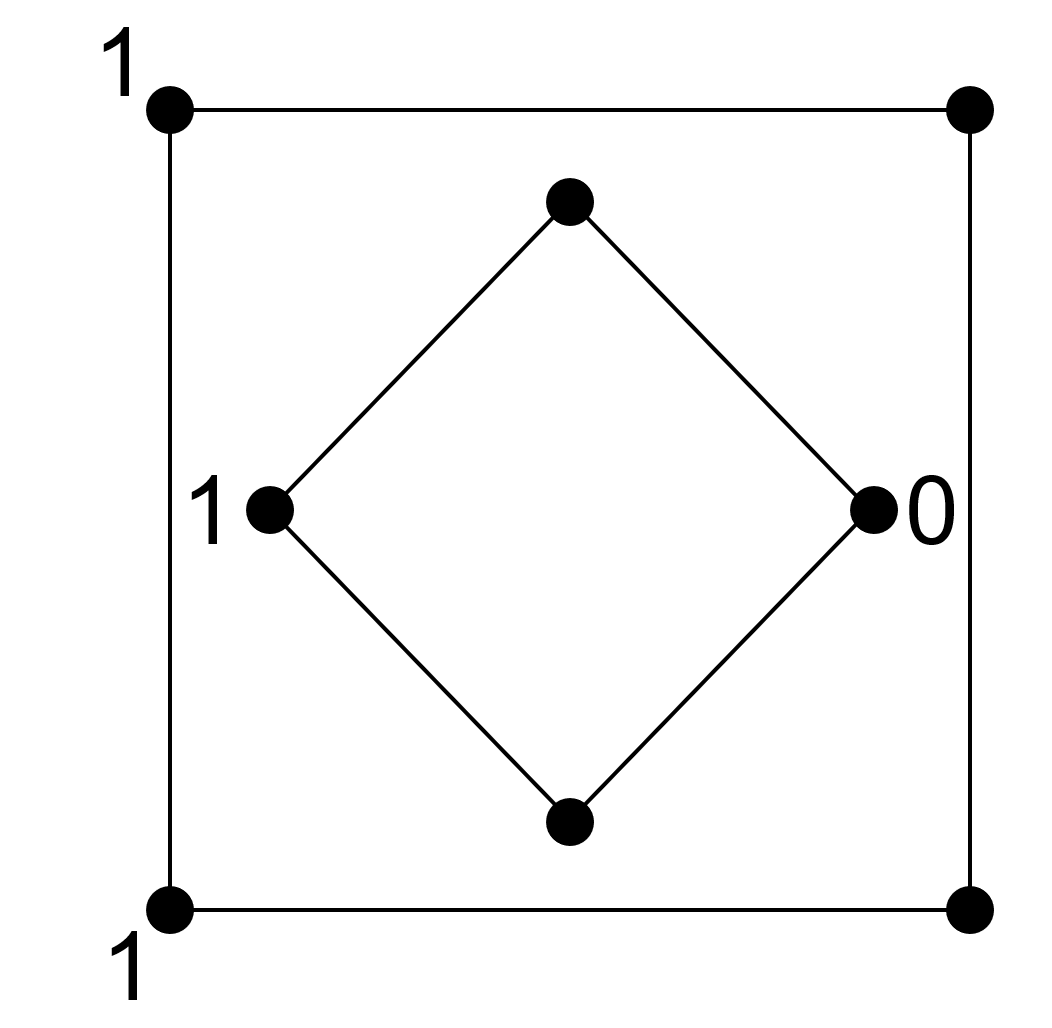}
        \label{fig:(d)}
    \end{subfigure}
     
    \caption{The set of configurations from which the adversary can choose to implement to guarantee $4$ rounds in the labelling game. After two rounds of the labelling game where only the learner labels points, the adversary can label any set of points to fulfill one of the four configurations to guarantee $4$ rounds in the labelling game.}
    \label{fig:lower-bound-configurations}
\end{figure}

\end{proof}

\begin{theorem}
    In $\mathbb{R}^d$ for $d \geq 3$, $M_{sd}(\mathcal{H}_d) \geq 4 \lfloor \frac{d}{3} \rfloor$.
    \label{thm:R_d_linear_separators}
\end{theorem}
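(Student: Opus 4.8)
The plan is to prove the bound by a direct-sum construction that stacks $m = \lfloor d/3 \rfloor$ independent copies of the $8$-point planar gadget from Theorem~\ref{thm:two-d-linear-separators}, one copy per block of three coordinates, and then to argue that the labelling game on the union decomposes into $m$ parallel copies of the $2$-dimensional game whose round counts simply add. Concretely, I would partition the $d$ coordinates into $m$ disjoint groups $G_1, \dots, G_m$, each of size $3$ (discarding up to two leftover coordinates). In group $G_i = \{a_i, b_i, c_i\}$, embed a copy $S_i$ of the planar configuration of Theorem~\ref{thm:two-d-linear-separators} into the $(a_i, b_i)$-plane, set the $c_i$-coordinate of every point of $S_i$ equal to a fixed nonzero offset $t_i$, and set all remaining coordinates to $0$. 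Let $S = \bigcup_{i=1}^m S_i$, a set of $8m$ points in $\mathbb{R}^d$.

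The crux of the argument is an \emph{independence lemma}: the restrictions of linear separators in $\mathcal{H}_d$ to the blocks are mutually unconstrained. First I would establish the easy direction, that any $h_{\mathbf{w},b} \in \mathcal{H}_d$ restricts on each $S_i$ to an affine function of $(x_{a_i}, x_{b_i})$ alone (the other coordinates are constant on $S_i$), hence induces a labelling of $S_i$ realizable by $\mathcal{H}_2$. For the reverse and more important direction, given any target affine functions $f_i(x_{a_i}, x_{b_i}) = \alpha_i x_{a_i} + \beta_i x_{b_i} + \gamma_i$, one per block, I would exhibit a single $\mathbf{w} \in \mathbb{R}^d$ and intercept $b \in \mathbb{R}$ realizing all of them at once: set $w_{a_i} = \alpha_i$ and $w_{b_i} = \beta_i$, fix $b$ arbitrarily, and use the third coordinate of each block to match the constants by taking $w_{c_i} = (\gamma_i - b)/t_i$. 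This is exactly where three coordinates per block are required: with only two coordinates per block the global intercept $b$ would be shared across all blocks, coupling their constant terms and destroying independence.

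With independence established, the adversary strategy for the full game is to run the $2$-dimensional adversary strategy of Theorem~\ref{thm:two-d-linear-separators} independently within each block $S_i$, responding to the learner's moves in block $i$ exactly as in the planar game (and labelling nothing in blocks the learner has not yet touched). By the independence lemma, the global labelling stays realizable in $\mathcal{H}_d$ precisely when each per-block labelling stays realizable in $\mathcal{H}_2$, so this composed strategy never lets the configuration become unrealizable. Since each round removes exactly one learner-labelled point, and in each block the planar adversary forces the learner to label at least four points before that block is exhausted (the adversary deliberately leaves those points unlabelled), the learner is forced into at least $4m$ moves overall, so the game lasts at least $4m$ rounds. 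Hence $SDdim(\mathcal{H}_d, S) \geq 4m = 4\lfloor d/3 \rfloor$, and Theorem~\ref{thm:SDdim=m_sd} yields $M_{sd}(\mathcal{H}_d) \geq SDdim(\mathcal{H}_d, S) \geq 4\lfloor d/3\rfloor$.

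I expect the main obstacle to be the careful bookkeeping of the parallel composition rather than the independence lemma itself. Two points need attention: (i) because the adversary moves before the learner in each round while the planar strategy is phrased as a response to the learner, I would formalize the composed strategy as reacting to the learner's cumulative history within each block, so that each block's sub-sequence of learner moves is a legal play of the planar game; and (ii) I must confirm that the additive lower bound on rounds genuinely transfers, i.e.\ that the learner cannot shorten the overall game below $4m$ by interleaving blocks or by abandoning a block early. This follows because the game terminates only when \emph{all} of $S$ is labelled, the adversary commits to labelling only four points per block, and independence prevents the learner from manufacturing a contradiction within any block in fewer than four of its own moves.
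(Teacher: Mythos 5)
Your proposal is correct and follows essentially the same route as the paper's proof: embed disjoint copies of the $8$-point planar gadget into $\lfloor d/3 \rfloor$ disjoint groups of three coordinates (third coordinate a nonzero constant, all others zero), treat the resulting labelling games as independent, and compose the per-block separators into a single $h \in \mathcal{H}_d$. Your explicit independence lemma, with $w_{c_i} = (\gamma_i - b)/t_i$ absorbing each block's constant term so the shared intercept $b$ does not couple the blocks, is exactly the justification the paper leaves implicit in its statement that a ``composite'' linear separator exists, and your bookkeeping of the interleaved rounds fills in the paper's terse claim that the blocks can be treated as independent games.
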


\begin{proof}
    Let $l = \lfloor \frac{d}{3} \rfloor$.
    Select a group of three distinct dimensions from $\mathbb{R}^d$ and construct the dataset represented in Fig. \ref{fig:linear-separators}(a) on two of those three dimensions while fixing the third to be $1$ and all the other dimensions set to $0$.
    Repeat this procedure $l - 1$ times on disjoint sets of three distinct dimensions.
    Essentially, the two-dimensional configuration is embedded multiple times within $\mathbb{R}^d$ on disjoint groups of three dimensions.
    From Theorem \ref{thm:two-d-linear-separators}, we know that any two-dimensional configuration of points has a lower bound of $M_{sd}(\mathcal{H}_2) \geq 4$.
    Since the datasets are disjoint, then each two-dimensional configuration of points can be treated as an independent labelling game. 
    Additionally, there exists a linear separator $h \in \mathcal{H}_d$ that is the composite of all the individual linear separators that solve their respective two-dimensional configurations so $M_{sd}(\mathcal{H}_d) = SDdim(\mathcal{H}_d) \geq 4 \lfloor \frac{d}{3} \rfloor$.
    
\end{proof}

\section{Agnostic Setting}
\label{section:agnostic}

Up till now, the focus of the paper has centered around $SDdim$ as the dimension that characterizes the mistake-bounds of realizable self-directed learning under the binary and multi-class settings. 
In this section, we now pivot our attention to understand self-directed learning in the agnostic setting under binary classification.
We present rather interesting results showing that one can compute upper and lower bounds on regret in agnostic self-directed learning without using $SDdim$.

We first define the agnostic setting for binary classification.
Given some concept class $H$, a finite multiset $S \subseteq \mathcal{X}$, $\mathcal{Y} = \{0, 1\}$, and the number of rounds $T = |S|$, the agnostic setting works as described below.
For each round $t=1,...,T:$

\begin{enumerate}
    \item The learner chooses some $x_t \in S$.
    \item Given the previous sequence of points and labels $(x_1, y_1), ..., (x_{t-1}, y_{t-1})$ and $x_t$, the learner makes a prediction $\hat{y}_t$.
    \item The environment determines the true label $y_t$ for $x_t$ without knowing $\hat{y}_t$ but it has access to the predictions of the previous rounds $(x_1, \hat{y}_1),...,(x_{t-1}, \hat{y}_{t-1})$.
    \item Update $S = S \setminus \{ x_t \}$.
\end{enumerate}

With the constraint of realizability removed, the target concept $f^*$ that is consistent on the sequence of points and true labels does not necessarily exist within $H$.
It's also important to notice a subtle distinction between the environment's role as compared to in the realizable setting.
The environment in the agnostic setting is not allowed to witness the prediction $\hat{y}$ before responding with the true label whereas the environment could base its true label from $\hat{y}$ in the realizable setting (the two scenarios are equivalent for deterministic learners, though not equivalent for randomized learners).
As is well known (\cite{cesa-bianchi:06,ben2009agnostic}), in the agnostic setting allowing the environment to base the label on the learner's predictions leads to non-vanishing regret because the constraint of realizability has been removed.
Additionally, we extend our analysis to allow the learner to randomize its predictions to prevent the environment from correctly ``guessing'' the learner's output.
We incorporate these modifications as part of our regret bounds in the agnostic self-directed setting.
For any concept class $H$, any self-directed learning algorithm $\mathcal{A}$, and a sequence of $T$ examples $x_1, x_2, ..., x_T$ chosen by $\mathcal{A}$ from the given multiset $S$, we define the regret, $R_{sd}(H, \mathcal{A})$, as 
\begin{align}
    R_{sd}(H, \mathcal{A}) = \max_{S \subseteq \mathcal{X}, |S| < \infty} \mathbb{E} \left[ \sum_{t=1}^T |\hat{y}_t - y_t| - \min_{h \in H} \sum_{t=1}^T |h(x_t) - y_t| \right].
    \label{eq:regret}
\end{align}
In Equation $\ref{eq:regret}$, $\hat{y}_t$ represents the prediction made by $\mathcal{A}$ on round $t$, $y_t$ is the true label on round $t$, and $S$ is a finite multiset.
The expectation $\mathbb{E}$ is taken over the labels $\hat{y}_t$ and $y_t$ that are dictated 
by the play between the learner and adversary.
\noindent In our analysis, we are concerned with the minimum expected regret over all algorithms:
\begin{align*}
    R_{sd}(H) = \min_{\mathcal{A}} R_{sd}(H, A) . 
\end{align*}

\noindent Specifically, we are interested in discovering the optimal expected regret bounds for any $H$.

In the adversarial online learning setting, \cite{ben2009agnostic} showed that the expected regret bound is $O\left(\sqrt{LD(H)T\mathrm{ln}(T)}\right)$ with a lower bound of $\Omega\left(\sqrt{LD(H)T}\right)$ where $LD(H)$ represents the Littlestone dimension of $H$ and $T$ is the number of rounds. \footnote{The result of \cite{alon:21} has shown that if $H$ satisfies a mild restriction (implying a certain induced game satisfies a minimax theorem), then the expected regret is $\Theta(\sqrt{LD(H)T})$.  It still remains open whether $\Theta(\sqrt{LD(H)T})$ is the optimal regret without any restrictions on the class $H$.}
It would seem natural to extend this result for the self-directed setting by upgrading the regret bounds with $SDdim(H)$, however, we obtain the stronger expected regret bound of $O\left(\sqrt{VC(H)T\mathrm{ln}(T)}\right)$ with a lower bound of $\Omega\left(\sqrt{VC(H)T}\right)$.

\subsection{Why $VC(H)$ Instead of $SDdim(H)$?}

At first sight, the natural approach would be to derive regret bounds based on $SDdim(H)$ to characterize regret bounds in the binary agnostic setting.
In the analysis of adversarial online learning in the binary agnostic setting, \citet{ben2009agnostic} bounded the number of experts based on the Littlestone dimension of the concept class.
In a similar fashion, one could build experts based on the SD-SOA and instantiate a set of experts dependent on $SDdim(H)$.
Would the optimal approach for agnostic self-directed learning follow the strategy used in agnostic adversarial online learning?

To answer this question, a quick revisit of agnostic adversarial online learning can illuminate the key difference.
In the adversarial setting, the full sequence of instances is unknown to the learner so each SOA-based expert simulates mistakes at different positions within the sequence. 
The SOA will make at most $LD(H)$ mistakes before it narrows down the version space consisting of the target concept(s). 
As a result, on $T$ rounds, there exists ${T \choose \leq LD(H)}$ different experts to simulate at most $LD(H)$ mistakes.

In the self-directed setting, the power exists to adaptively choose the sequence in real-time but the learner can also preselect any arbitrary sequence of instances.
This eliminates one of the biggest hurdles faced in the adversarial setting where both the points and their labels are unknown.
Then, one can assign an expert to each realizable classification of $H$ on a fixed sequence of instances.
Therefore, letting $\hat{T} = \{x_1, ..., x_T \}$ contain the $T$ selected points, the number of experts corresponds to the projection of $H$ on $\hat{T}$ which is bounded by ${T \choose \leq VC(H)}$. 

\subsection{Agnostic Self-Directed Learning Algorithm}

Below, we formally describe three algorithms we use for the agnostic setting. 
Our Algorithm \ref{alg:agnostic}, the agnostic self-directed learning algorithm, is inspired by the ideas from \cite{ben2009agnostic} that uses a learning with expert advice algorithm to make predictions.
Algorithm \ref{alg:weighted_majority} implements a multiplicative weights algorithm which is a form of learning with expert advice (\cite{littlestone:94, cesa-bianchi:97, cesa-bianchi:06})
Our last algorithm, Algorithm \ref{alg:expert}, describes the expert used within Algorithm \ref{alg:weighted_majority}.
In all our algorithms, $S$ is allowed to be a finite multiset that is a subset of $\mathcal{X}$ and the number of rounds $T$ is equivalent to $|S|$ so the self-directed learner can classify all the points.

\begin{algorithm}
\caption{Expert($x_t$)}
\label{alg:expert}
\begin{algorithmic}[1]
\REQUIRE Initialized with some concept $h \in H_S$
\STATE $y_{pred} = h(x_t)$
\RETURN $y_{pred}$
\end{algorithmic}
\end{algorithm}

\begin{algorithm}
\caption{Learning with Expert Advice($N$, $S$, learning rate $\eta$)}
\label{alg:weighted_majority}
\begin{algorithmic}[1]
\REQUIRE $\mathbf{w}^0 = (1, ..., 1) \in \mathbb{R}^N; Z_0 = N$, $N$ is the number of experts
\STATE $T = |S|$
\FOR{$t=1, 2, ..., T$}
    \STATE Let $x_t$ be the $t^{th}$ point from $S$
    \STATE Receive expert advice on $t^{th}$ round $(\mathrm{Expert}^t_1(x_t), ..., \mathrm{Expert}^t_N(x_t)) \in \{0, 1\}^N$
    \STATE True label $y_t$ determined but not given to learner
    \STATE Define $\hat{p}_t = \frac{1}{Z_{t-1}}\sum_{i:\mathrm{Expert}^t_i=1} w_i^{t-1}$
    \STATE Predict $\hat{y}_t = 1$ with probability $\hat{p}_t$
    \STATE Receive label $y_t$
    \STATE Update: $w^t_i = w^{t-1}_i \mathrm{exp}(-\eta |\mathrm{Expert}^t_i - y_t|); Z_t = \sum_{i=1}^N w^t_i$
\ENDFOR
\end{algorithmic}
\end{algorithm}

\begin{algorithm}
\caption{Agnostic Self-Directed Learning Algorithm($H$, $S$)}
\label{alg:agnostic}
\begin{algorithmic}[1]
\REQUIRE $H \neq \emptyset$
\STATE Let $H_S = \{(h(x_1), ..., h(x_{|S|})): h \in H \}$ be the projection of $H$ on $S$.
\FORALL{$h \in H_S$}
    \STATE Create an Expert initialized with $h$
\ENDFOR
\STATE Set $N = |H_S|$
\STATE Run Algorithm \ref{alg:weighted_majority} $(N, S, \eta)$
\end{algorithmic}
\end{algorithm}

\newpage
\subsection{Expected Regret Bound for Agnostic Self-Directed Learning}

To derive the expected regret bound, we will first draw upon a classic result in learning from expert advice \citep{cesa-bianchi:06} proving that the expected regret bound for Algorithm \ref{alg:weighted_majority} is at most $\sqrt{\frac{1}{2} \mathrm{ln}(N) T}$ when $\eta = \sqrt{8 \mathrm{ln}(N) /T}$.
Since Algorithm \ref{alg:agnostic} executes Algorithm \ref{alg:expert} on a set of experts that contain all possible realizable classifications on $T$ points, there exists an expert in Algorithm \ref{alg:expert} that makes at most as many mistakes as the best $h \in H$ makes. 
Combining the previous fact with expected regret bound for Algorithm \ref{alg:weighted_majority}, we obtain the following short theorem stating our result for the expected regret bound from  Algorithm \ref{alg:agnostic}.

\begin{theorem}
    For any concept class $H$, $\mathcal{X}$, and any sequence of $T$ examples, the expected regret in the agnostic self-directed case has 
    \begin{align*}
        R(H) = O\left(\sqrt{VC(H) \cdot T \mathrm{ln}(T)}\right).
    \end{align*}
\end{theorem}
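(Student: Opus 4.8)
The plan is to combine the self-directed learner's ability to fix its query sequence in advance with the classical regret guarantee for exponential weights, so that the entire problem reduces to counting experts via Sauer--Shelah. First I would fix any finite multiset $S \subseteq \mathcal{X}$ with $|S| = T$; since the learner is self-directed it may commit to querying the points of $S$ in a predetermined order, so the sequence $\hat{T} = \{x_1, \ldots, x_T\}$ is known before any labels arrive. Following Algorithm~\ref{alg:agnostic}, I instantiate one expert for each element of the projection $H_S = \{(h(x_1), \ldots, h(x_T)) : h \in H\}$, giving $N = |H_S|$ experts, where expert $i$ simply outputs the $t$-th coordinate of its associated classification on round $t$, and then run Algorithm~\ref{alg:weighted_majority} on these experts.

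The first key step is a \emph{no-regret-to-best-expert} reduction. Because every expert's prediction vector is literally a realizable labelling $(h(x_1), \ldots, h(x_T))$ for some $h \in H$, the cumulative loss of the best expert equals $\min_{h \in H} \sum_{t=1}^{T} |h(x_t) - y_t|$, which is exactly the comparator appearing in the regret definition of Equation~\ref{eq:regret}. Hence bounding the regret of Algorithm~\ref{alg:weighted_majority} against its own best expert automatically bounds the algorithm's regret against the best concept in $H$, and therefore upper bounds $R_{sd}(H)$.

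The second step invokes the standard analysis of learning with expert advice. With the randomized rounding in Algorithm~\ref{alg:weighted_majority} (predict $\hat{y}_t = 1$ with probability $\hat{p}_t$), the expected per-round loss $\mathbb{E}|\hat{y}_t - y_t|$ equals the weight-averaged expert loss $\frac{1}{Z_{t-1}}\sum_i w_i^{t-1}|\mathrm{Expert}^t_i - y_t|$, which is precisely the mixture loss controlled by the exponential-weights potential argument of \cite{ben2009agnostic} and the cited expert-advice result. This yields expected regret at most $\sqrt{\tfrac{1}{2}\ln(N)\,T}$ under the choice $\eta = \sqrt{8\ln(N)/T}$. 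Note that $\hat{p}_t$ depends only on the observed history and the fixed sequence, so the environment's inability to see $\hat{y}_t$ before committing to $y_t$ does not affect this expectation.

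Finally I would substitute the Sauer--Shelah bound $N = |H_S| \le \sum_{i=0}^{VC(H)} \binom{T}{i} \le \left(\tfrac{eT}{VC(H)}\right)^{VC(H)}$, so that $\ln N = O\!\left(VC(H)\ln T\right)$. Plugging this into $\sqrt{\tfrac{1}{2}\ln(N)\,T}$ gives $O\!\left(\sqrt{VC(H)\,T\ln T}\right)$, and since the estimate is uniform in the choice of $S$, taking the maximum over finite $S$ in Equation~\ref{eq:regret} preserves it. The main obstacle here is conceptual rather than technical: one must justify why preselecting the sequence is legitimate and why it collapses the expert count from the $\binom{T}{\le LD(H)}$ required in the adversarial analysis of \cite{ben2009agnostic} down to the projection bound $\binom{T}{\le VC(H)}$ — once that reduction is in place, the remainder is an off-the-shelf potential-function calculation that I would not grind through in detail.
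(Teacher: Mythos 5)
Your proposal is correct and follows essentially the same route as the paper's proof: preselect the query sequence, instantiate one expert per element of the projection $H_S$, invoke the standard exponential-weights regret bound $\sqrt{\tfrac{1}{2}\ln(N)\,T}$, and bound $N$ via Sauer's lemma to get $\ln N = O(VC(H)\ln T)$. The additional justifications you spell out (that the best expert's loss coincides with $\min_{h \in H}\sum_t |h(x_t)-y_t|$, and that the randomized prediction handles the environment not seeing $\hat{y}_t$) appear in the paper's surrounding discussion rather than inside its proof, but the substance is identical.
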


\begin{proof}
    We run Algorithm \ref{alg:agnostic} on $(H, S)$ where $S \subseteq \mathcal{X}$ is a finite multiset and $T = |S|$.
    Now, the number of experts $N$ is directly tied to the size of $H_S$, the projection of $H$ onto the set $S$.
    Then, we draw upon Sauer's Lemma to show that $|H_S| \leq \sum_{i=0}^d {T \choose i} \leq \left( \frac{eT}{VC(H)} \right)^{VC(H)}$.
    Then, we analyze the expected regret bound of Algorithm \ref{alg:weighted_majority} with the number of experts $N = \left( \frac{eT}{VC(H)} \right)^{VC(H)}$:
    \begin{align*}
        R(H) &= \min_{\mathcal{A}} \max_{S \subseteq \mathcal{X}, |S| < \infty} \mathbb{E} \left[ \sum_{t=1}^T |\hat{y}_t - y_t| - \min_{h \in H} \sum_{t=1}^T |h(x_t) - y_t| \right]\\
        & \leq \sqrt{\frac{1}{2}\mathrm{ln}\left( \left( \frac{eT}{VC(H)} \right)^{VC(H)} \right)T} \\
        &\leq \sqrt{\frac{1}{2} VC(H) \cdot T~\mathrm{ln}(eT) } \\
        & = O\left(\sqrt{VC(H) \cdot T \mathrm{ln}(T)}\right)
    \end{align*}
\end{proof}

\subsection{Lower Bound}

In this section, we use the techniques in Lemma $14$ from \cite{ben2009agnostic} to prove that no algorithm can achieve a regret worse than $\Omega(\sqrt{VC(H)T})$.

\begin{theorem}
    For any concept class $H$ with $VC(H) = d < \infty$, there exists an instance space $\mathcal{X}_d$ such that any self-directed learning algorithm will have a minimum expected regret of 
    \begin{align*}
        R(H) \geq \sqrt{\frac{VC(H)T}{8}}.
    \end{align*}
\end{theorem}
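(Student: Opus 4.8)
The plan is to build a hard instance on a shattered set and show that the self-directed learner's freedom to choose points adaptively and observe past labels buys it nothing against an \emph{oblivious} random adversary, so the lower bound reduces to anti-concentration of a binomial, exactly in the spirit of Lemma $14$ of \cite{ben2009agnostic}.

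First I would fix a set $Z = \{z_1, \dots, z_d\} \subseteq \mathcal{X}_d$ shattered by $H$ (which exists since $VC(H) = d$), and take the hard multiset $S$ to consist of each $z_i$ repeated $m := T/d$ times (assume $d \mid T$; otherwise round and absorb the resulting constant). The adversary I would use is oblivious: independently of the learner it draws the true labels $y_t \in \{0,1\}$ i.i.d.\ uniform. The key point is that, since the environment never sees $\hat{y}_t$ and the labels are mutually independent, the label at round $t$ is a fresh fair coin regardless of which point the learner queries or what it predicts; hence $\mathbb{E}|\hat{y}_t - y_t| = \tfrac12$ for every round, and the learner's total expected loss is exactly $T/2$, no matter how it exploits adaptivity or the history of observed labels. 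This neutralization of self-directedness is the conceptual heart of the argument.

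Next I would compute the comparator term. Because $Z$ is shattered, each $h(z_i)$ may be set to $0$ or $1$ independently, so $\min_{h \in H} \sum_t |h(x_t) - y_t| = \sum_{i=1}^d \min(n_i^0, n_i^1)$, where $n_i^0, n_i^1$ count the $0$- and $1$-labels among the $m$ occurrences of $z_i$. Writing $s_i = n_i^1 - n_i^0 = \sum_{j=1}^m \epsilon_{ij}$ with Rademacher $\epsilon_{ij}$, one has $\min(n_i^0, n_i^1) = \tfrac{m}{2} - \tfrac12|s_i|$, so the expected comparator loss equals $\tfrac{T}{2} - \tfrac12 \sum_{i=1}^d \mathbb{E}|s_i|$ and the expected regret against this adversary is exactly $\tfrac12 \sum_{i=1}^d \mathbb{E}|s_i|$.

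Finally I would invoke the sharp lower Khintchine (anti-concentration) bound $\mathbb{E}\big|\sum_{j=1}^m \epsilon_{ij}\big| \geq \sqrt{m/2}$, which yields regret $\geq \tfrac{d}{2}\sqrt{m/2} = \tfrac{d}{2}\sqrt{T/(2d)} = \sqrt{dT/8}$. To conclude for arbitrary randomized algorithms, I would apply Yao's principle: the bound just derived holds in expectation over the random labels against every deterministic learner, hence against every randomized learner (a mixture of deterministic ones), so $\max_{S} \mathbb{E}[\,\cdots\,] \geq \sqrt{dT/8}$ for every $\mathcal{A}$ and thus $R(H) \geq \sqrt{VC(H)\,T/8}$. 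The main obstacle is the neutralization step---carefully justifying that adaptivity and observing past labels cannot push the per-round expected loss below $\tfrac12$---combined with using the tight Khintchine constant $1/\sqrt2$ (rather than a loose anti-concentration estimate) in order to land the exact constant $\sqrt{1/8}$.
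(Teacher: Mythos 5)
Your proposal is correct and follows essentially the same route as the paper's proof: the same hard instance (a shattered set of size $d$ with $T/d$ copies of each point), the same oblivious adversary drawing labels i.i.d.\ from $Bernoulli(1/2)$ so the learner's expected loss is exactly $T/2$ regardless of adaptivity, the same per-block decomposition of the comparator, and the same sharp Khintchine bound with constant $1/\sqrt{2}$, landing on $\sqrt{VC(H)T/8}$. Your explicit appeal to Yao's principle for randomized learners is a small formal addition the paper leaves implicit (its expectation already averages over the oblivious label randomness), but it does not change the argument.
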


\begin{proof}
    For any $k \in \mathbb{N}$, let $T=kd$. 
    Let $\{m_1, m_2, ..., m_d \}$ be a set of points shattered by $H$.
    Let the instance space $\mathcal{X}_d = \{z_1, ..., z_k, ..., z_{(d-1)k+1}, ..., z_{dk}\}$ where the first $k$ points are copies of $m_1$, the second set of $k$ points are copies of $m_2$, and so on and so forth.
    For each $z_i \in \mathcal{X}_d$, sample a label $y_i$ i.i.d. from a $Bernoulli(1/2)$ distribution.
    Now, we analyze the expected number of mistakes any algorithm will make in $T$ rounds on the instance space $\mathcal{X}_d$. 

    \begin{align*}
        \mathbb{E}\left[ \sum_{t=1}^T |\hat{y}_t - y_t|  \right] = \sum_{t=1}^T \mathbb{E}[|\hat{y}_t - y_t|]
        = \sum_{t=1}^T \frac{1}{2} 
        = \frac{T}{2}
    \end{align*}

    To analyze the expected number of mistakes the best $h \in H$ will make, we will split $\mathcal{X}_d$ into blocks $T_1$, $T_2$, ..., $T_d$ where each $T_i$ will contain the $k$ copies corresponding to $m_i$.
    We analyze a single block, $T_i$, and then extend the analysis to all other blocks. 
    Let $r_i = \sum_{t \in T_i} y_t$.
    For the first block, we have that 
    \begin{align*}
        \min_{y_i \in \{0,1 \}} \sum_{t \in T_i} |y_i - y_t| = \begin{cases} 
          k-r_i & r_i \geq k/2 \\
          r_i & r_i < k/2
       \end{cases}
    \end{align*}

    \noindent If we incorporate the expected number of mistakes the algorithm will make on the rounds corresponding to selecting point $m_i$, then 

    \begin{align*}
        \mathbb{E}\left[ \sum_{t \in T_i} |\hat{y}_t - y_t | \right] - \mathbb{E} \left[ \min_{y_i \in \{0,1 \}} \sum_{t \in T_i} |y_i - y_t| \right] = \frac{k}{2} - \mathbb{E} \left[ \min_{y_i \in \{0,1 \}} \sum_{t \in T_i} |y_i - y_t| \right] = E[|r_i - k/2|]
    \end{align*}

    To analyze $E[|r_i - k/2|]$, one can notice that since $r_i$ is the sum of $k$ i.i.d. variables that are either $0$ or $1$ with a constant factor of $k/2$ subtracted, this is equivalent to adding $k$ i.i.d. variables that are either $-1$ or $1$ and dividing by $2$.
    With this observation, one can use Khinchine's inequality to derive a lower bound for the remaining expectation.
    Let each $a_i = 1$ and each $\sigma_i$ be an i.i.d. sign variable with $P(\sigma_i = 1) = P(\sigma = -1) = 1/2$ (\cite{cesa-bianchi:06}).

    \begin{align*}
        E[|r_i - k/2|] = \frac{1}{2}\mathbb{E}\left[ \left| \sum_{i=1}^k a_i \sigma_i \right| \right] \geq \frac{1}{2} \cdot \frac{1}{\sqrt{2}} \sqrt{\sum_{i=1}^k a^2_i} = \sqrt{\frac{k}{8}}
    \end{align*}

    Now, we simply extend this for each of the $d$ blocks and get the following result: 
    \begin{align*}
        \mathbb{E}\left[ \sum_{t=1}^T |\hat{y}_t - y_t| - \min_{h \in H} \sum_{t=1}^T |h(x_t) - y_t| \right] &= \mathbb{E}\left[ \sum_{t=1}^T |\hat{y}_t - y_t|  \right] - \mathbb{E}\left[ \min_{h \in H} \sum_{t=1}^T |h(x_t) - y_t| \right] \\
        &= \frac{T}{2} -  \mathbb{E}\left[ \min_{h \in H} \sum_{t=1}^T |h(x_t) - y_t| \right] \\
        &\geq d\sqrt{\frac{k}{8}} = \sqrt{\frac{dT}{8}}
    \end{align*}

    Here, $x_t$ refers to the point selected by the algorithm on the $t^{th}$ timestep. 
    Therefore, there exists an instance space $\mathcal{X}_d$ for any $H$ such that any self-directed learning algorithm will have a minimum expected regret of $\sqrt{\frac{VC(H)T}{8}}$.
    
\end{proof}



\bibliography{ALT_2024/arxiv_final_version}

\appendix


\newpage

\section{Proofs for Labelling Game and Self-Directed Trees Equivalency, Lemma $\ref{lemma:monotonicity}$, and Lemma $\ref{lemma:SDdim_point}$}
\label{appendix:section_four}

\subsection{Equivalency Between Labelling Game and Self-Directed Trees on $(H, S)$}

\begin{lemma}
    \label{lemma:k_tree_leq_sd_dim}
    For any finite $S \subseteq \mathcal{X}$, label space $\mathcal{Y}$, and concept class $H$, if there exists a realizable $k$-tree, then $SDdim(H, S) \geq k$. 
\end{lemma}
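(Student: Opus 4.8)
The plan is to read the hypothesized realizable $k$-tree $T \in \mathcal{T}^k(H,S)$ as a \emph{strategy for Player A} (the adversary) in the labelling game, and to show that this strategy forces the game to run for at least $k$ rounds while never producing an unrealizable sequence. Since $SDdim(H,S)$ is the minimax value of the game's payout, exhibiting one adversary strategy that guarantees a payout of at least $k$ against \emph{every} learner is exactly what is needed to conclude $SDdim(H,S) \geq k$. (This is the ``$\geq$'' half of the equivalence between the game value and the tree formulation; the reverse inequality would be handled by a companion lemma.)

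First I would fix the dictionary between $T$ and the game. The adversary's opening move (labelling a set $C \subseteq S$) is taken to be $O_{root}$: Player A commits each point appearing in $O_{root}$ to its designated label, which leaves exactly the root node $V_1$ in play. Thereafter the adversary keeps a pointer to a current node $V$ and, at each round, offers for every $x \in V$ the two labels $y_x, y'_x$ carried by the two outgoing edges at $x$ (this is precisely step $2$ of the multi-class game; in the binary case these two labels are just $0$ and $1$, so the learner is offered no restriction). When the learner selects a point $x \in V$ and one of the two offered labels $y$, this names a unique outgoing edge $e$ with $\omega(e) = ((x,y), O)$; the adversary responds by committing the points of $O$ to their prescribed labels and advances the pointer to the child reached by $e$. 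Thus each round corresponds to traversing one edge of $T$, and the learner is confined to moving along edges of the tree: every legal learner action is an edge, and conversely the adversary always has the matching response dictated by $\omega(e)$.

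Second I would verify the realizability invariant, which is where the assumption that every branch of $T$ is realized by some $h \in H$ does the work. Because both outgoing edges at each point of every node lead into subtrees all of whose branches are realizable, any label the learner might choose keeps the committed sequence of labels extendable to a full realizable branch, hence consistent with some $h \in H$; the adversary's own commitments in $O_{root}$ and in each $O$ are consistent for the same reason. Consequently the learner can never force an unrealizable sequence while the pointer stays inside $T$, so the payout is never $-1$ and the game terminates only when all of $S$ is labelled, i.e. when the learner has descended to a leaf. Since $T$ is a $k$-tree, every root-to-leaf descent has length at least $k$, and since each round advances the pointer by exactly one edge, the learner cannot finish in fewer than $k$ rounds however it plays; the adversary strategy therefore guarantees a payout of at least $k$, giving $SDdim(H,S) \geq k$.

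I expect the main obstacle to be the careful bookkeeping rather than any deep step: confirming that the adversary always has a legal tree-prescribed move (each node supplies exactly two labels per surviving point, and the learner's chosen pair $(x,y)$ always names an edge), and confirming that the realizability invariant is preserved under \emph{all} future plays and not merely along a single branch. The clean way to discharge both points rigorously is an induction on $k$ (equivalently on the depth of $T$): the base case $k=0$ is immediate, and in the inductive step, after the adversary plays $O_{root}$ and the learner selects an edge $e$ with $\omega(e)=((x,y),O)$, the subtree rooted at the corresponding child is itself a realizable $(k-1)$-tree over the restricted pair $\bigl(H_{\sigma}, V_1 \setminus (\{x\}\cup\{x': (x',y')\in O\})\bigr)$, where $\sigma$ collects the $(x',y')$ committed so far. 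One round of the game plus the inductive guarantee of at least $k-1$ further rounds then yields the desired bound of at least $k$.
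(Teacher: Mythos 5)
Your proposal is correct and follows essentially the same route as the paper: the paper's (much terser) proof likewise reads the realizable $k$-tree as an adversary strategy, with $O_{root}$ and the edge sets $O$ giving Player A's labellings, so that the game traces a root-to-leaf path, realizability of every branch rules out a payout of $-1$, and the minimal path length $k$ forces at least $k$ rounds. Your added bookkeeping (the explicit edge-to-move dictionary and the induction on $k$) just fills in details the paper leaves implicit.
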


\begin{proof}
Let tree $T$ be a realizable $k$-tree. 
The adversary can use the tree $T$ to setup the labelling game at every single round.
The labelling game sequence will then follow some root-to-leaf path within $T$.
Since $T$ is realizable, then any root-to-leaf path has the guarantee that there exists some $h \in H$ consistent with the entire sequence of points and labels regardless of the learner's choice.
Since the minimum path length in $T$ is $k$, then $SDdim(H, S) \geq k$.
\end{proof}

\begin{lemma}
    \label{lemma:sd_dim_leq_k_tree}
    For any finite $S \subseteq \mathcal{X}$, label space $\mathcal{Y}$, and concept class $H$, $SDdim(H, S)$ is upper bounded by the largest $k \in \mathbb{N} \cup \{0\}$ for which there exists a realizable $k$-tree. 
\end{lemma}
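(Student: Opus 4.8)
The statement I want to prove is the converse direction of Lemma~\ref{lemma:k_tree_leq_sd_dim}: that $SDdim(H,S)$ is at most the largest $k$ for which a realizable $k$-tree exists. Combined with Lemma~\ref{lemma:k_tree_leq_sd_dim}, this pins down $SDdim(H,S)$ exactly as the depth of the largest realizable $k$-tree, matching the formulation in Equation~(\ref{eq: 1}). The natural approach is to argue by contrapositive (or equivalently by extracting a tree from a good adversary strategy): I will show that if the labelling game on $(H,S)$ is guaranteed to last at least $k$ rounds against every learner, then one can construct a realizable $k$-tree from $S$ on $H$.

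First I would fix an optimal adversary strategy in the labelling game witnessing $SDdim(H,S) = k$, meaning a strategy that forces at least $k$ complete rounds while keeping every resulting sequence realizable by some $h \in H$. I then build the self-directed tree $T$ by recording the entire game dynamics. The root node $V_1$ corresponds to the points the adversary first leaves unlabelled, and $O_{root}$ records the adversary's opening labels on $S \setminus V_1$; this exactly matches the first bullet of the Self-Directed Tree definition. For each node $V$ representing a game state, the learner's two possible labellings of each candidate point $x \in V$ generate the two outgoing edges required by the tree definition (the $2\cdot|V|$ outgoing-edge condition), and the adversary's subsequent response — which points it labels after the learner's move — becomes the set $O_e$ on that edge. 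Because the adversary plays to keep every branch realizable, each root-to-leaf path corresponds to a legitimate realizable play of the game, so every branch is realized by some $h_b \in H$, placing $T$ in $\mathcal{T}^k(H,S)$.

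The key structural point is that the minimal root-to-leaf path length equals the minimal number of rounds the game lasts under this strategy, which is at least $k$; hence $T$ is a realizable $k$-tree. The main obstacle I anticipate is bookkeeping rather than conceptual: I must verify that the recorded sets satisfy all the edge and node consistency constraints in the Self-Directed Tree definition — in particular that $\{x\} \cup \{x' : \exists y',\,(x',y') \in O_e\} = V' \setminus V$, i.e. that exactly the points removed in a round are accounted for by the learner's chosen point plus the adversary's response on that edge. I would handle this by defining, at each node, $V$ to be precisely the set of points still unlabelled at the start of that round, so that the set difference between a parent and child is forced to coincide with the points labelled during the intervening round. The realizability of every branch follows directly from the invariant maintained by the optimal adversary strategy, and the depth bound follows because the strategy by assumption never terminates the game before round $k$.

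Finally I would reconcile the boundary cases: when $k = SDdim(H,S) = 0$ the trivial depth-$0$ tree (a single root with $O_{root}$ capturing a full realizable labelling of $S$) suffices, and when $H = \emptyset$ or $S = \emptyset$ the degenerate conventions established in Section~3 make both sides agree. This completes the upper bound, and together with Lemma~\ref{lemma:k_tree_leq_sd_dim} establishes the claimed equivalence between $SDdim(H,S)$ and the largest realizable $k$-tree.
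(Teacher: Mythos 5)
Your proposal is correct, but it takes a genuinely different route from the paper's proof. You argue from the adversary's side: fix an optimal adversary strategy attaining the value $SDdim(H,S)=k$ and unroll the induced game tree --- each node is the set of still-unlabelled points at the start of a learner turn, the $2\lvert V\rvert$ outgoing edges come from the two designated labels per point, and the $O$-sets record the adversary's labelling responses --- so that every root-to-leaf path is a complete play lasting at least $k$ rounds whose labelling is realizable (the guaranteed payout is $\geq k > -1$), yielding a realizable $k$-tree. The paper instead argues from the learner's side by induction on the maximal realizable tree depth $k$: against any adversary move, there must exist a point and label whose resulting version space admits only realizable trees of depth at most $k-1$ (otherwise all the learner's options would head deep subtrees and one could assemble a realizable $(k+1)$-tree, contradicting maximality of $k$), so the learner who selects that point and then plays optimally certifies that the minimax payout is at most $k$. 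The trade-off is this: your construction is more direct, needs no induction, and makes transparent that self-directed trees are precisely records of adversary strategies; but it silently assumes an optimal adversary strategy exists, i.e.\ that the minimax value is attained on the adversary's side. That attainment does hold here --- the game has finite horizon, perfect information, and integer payoffs bounded by $\lvert S\rvert$, so the suprema arising in backward induction are achieved even though the adversary's move set can be infinite when $\mathcal{Y}$ is infinite --- but since neither minimax-equals-maximin nor attainment is automatic for games with infinite action sets, your proof should include a sentence justifying it. The paper's induction sidesteps this entirely, needing only that some learner reply is good against each adversary move; note also that its contradiction step (assembling a $(k+1)$-tree from uniformly deep subtrees) is the exact mirror image of your unrolling. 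Your handling of the boundary cases matches the paper's conventions.
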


\begin{proof}
We prove this by induction on $k$ with $k = 0$ the base case. 
If the largest realizable $k$-tree on $(H, S)$ is a $0$-tree, then this implies that there are no points in the root node of the tree and all of $S$ is placed in $O_{root}$ with an appropriately chosen label.
If the adversary were to present any point to the learner with two chosen labels, there exists a selection of a point and label that produces an unrealizable sequence.
If this were not true, then one can build a $1$-tree but this violates the assumption so $SDdim(H, S) = 0$.

We now prove that for $k > 0$, if there exists a realizable $k$-tree on $(H, S)$ and $k$ is maximal, then $SDdim(H, S) \leq k$ assuming that it holds for $k - 1$.
In the labelling game, let the adversary select a subset of points $S' \subseteq S$ and two labels for each point to present to the learner. 
This implies that the adversary has selected a single label for each point in $S \setminus S'$.
The learner will compute the largest realizable $k$-tree given $S'$ as the root node with two edges for each point $x \in S'$ corresponding to the two labels chosen by the adversary and $O_{root}$ equivalent to the points in $S \setminus S'$ that the adversary has labelled. 
Then, let us define a learner which selects a point and label that produces at most a $k-1$-realizable tree on the subsequent version space.
Such a point must exist or it would imply that every point and label combination yields a subtree that is a $k$-tree which would imply the existence of a realizable $k+1$-tree violating the assumption.
Let that point and label be represented by $w$ and $y_w$ respectively.
Using the inductive step, then $SDdim(H_{(w, y_w)}, S \setminus \{w \}) \leq k - 1$.
If we suppose the learner plays optimally for the remaining points $S' \setminus \{w\}$,
relative to the corresponding set of consistent concepts $H_{(w,y_w)}$, 
we get that the payout of the game is at most $SDdim(H_{(w,y_w)}, S \setminus \{w\})+1$.
Since the minimax payout is $SDdim(H,S)$, we have that 
$SDdim(H, S) \leq SDdim(H_{(w, y_w)}, S \setminus \{w\}) + 1 \leq k$.
\end{proof}

\begin{lemma}
    For any finite $S \subseteq \mathcal{X}$, label space $\mathcal{Y}$, and concept class $H$, $SDdim(H, S)$ is equal to the size of the largest realizable $k$-tree.
\end{lemma}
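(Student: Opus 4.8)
The plan is to derive this final equality immediately from the two preceding lemmas, which together pin $SDdim(H,S)$ from both sides against the same combinatorial quantity. Let $k^*$ denote the largest $k \in \mathbb{N} \cup \{0\}$ for which there exists a realizable $k$-tree on $(H,S)$; this is exactly the size of the largest realizable $k$-tree, so the statement amounts to showing $SDdim(H,S) = k^*$. First I would invoke Lemma~\ref{lemma:k_tree_leq_sd_dim}: since a realizable $k^*$-tree exists by definition of $k^*$, that lemma gives $SDdim(H,S) \geq k^*$. Then I would invoke Lemma~\ref{lemma:sd_dim_leq_k_tree}, which states that $SDdim(H,S)$ is upper bounded by the largest $k$ admitting a realizable $k$-tree, i.e.\ $SDdim(H,S) \leq k^*$. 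Combining the two inequalities yields $SDdim(H,S) = k^*$, as desired.

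The only subtlety worth spelling out is the degenerate cases, to confirm the argument is not vacuous there. When $S = \emptyset$, the largest realizable tree has depth $0$ (no points to place in any root node), matching $SDdim(H,\emptyset) = 0$ for non-empty $H$; and when $H = \emptyset$ no sequence is realizable, so no realizable $k$-tree exists for any $k \geq 0$, consistent with the convention $SDdim(\emptyset,S) = -1$. I would note briefly that the two lemmas, as stated, already cover the regime $k \geq 0$ and that the boundary conventions agree, so no separate base-case analysis is needed beyond citing the definitions.

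I do not anticipate a genuine obstacle here, since the real content lives in the two earlier lemmas — in particular the inductive upper-bound argument of Lemma~\ref{lemma:sd_dim_leq_k_tree}, where one must produce a learner move that collapses the best achievable subtree depth by one. For this wrapper statement the main thing to be careful about is making sure the quantity ``largest $k$ for which a realizable $k$-tree exists'' in the two lemma statements is literally identified with ``size of the largest realizable $k$-tree'' in the theorem, which follows once we agree that the size (depth) of a $k$-tree is its minimal root-to-leaf path length $k$; I would state this identification explicitly so the two-line sandwich argument reads cleanly.

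\begin{proof}
Let $k^* = \max\{k \in \mathbb{N} \cup \{0\}: \text{a realizable } k\text{-tree on } (H,S) \text{ exists}\}$, which is precisely the size of the largest realizable $k$-tree. By Lemma~\ref{lemma:k_tree_leq_sd_dim}, the existence of a realizable $k^*$-tree gives $SDdim(H,S) \geq k^*$. By Lemma~\ref{lemma:sd_dim_leq_k_tree}, $SDdim(H,S)$ is upper bounded by the largest $k$ for which a realizable $k$-tree exists, so $SDdim(H,S) \leq k^*$. Hence $SDdim(H,S) = k^*$. The boundary conventions are consistent: for non-empty $H$ and $S = \emptyset$ the largest realizable tree has depth $0$ and $SDdim(H,\emptyset) = 0$, while for $H = \emptyset$ no realizable tree exists and $SDdim(\emptyset,S) = -1$.
\end{proof}
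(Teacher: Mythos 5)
Your proof is correct and takes essentially the same route as the paper's: a two-line sandwich combining Lemma~\ref{lemma:k_tree_leq_sd_dim} for the lower bound and Lemma~\ref{lemma:sd_dim_leq_k_tree} for the upper bound on the maximal $k$ admitting a realizable $k$-tree. Your added remarks on the degenerate conventions ($S = \emptyset$, $H = \emptyset$) are a harmless supplement the paper omits.
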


\begin{proof}
    Let tree $T$ be the largest realizable $k$-tree where $k$ is maximal.
    From Lemma \ref{lemma:k_tree_leq_sd_dim}, $SDdim(H, S) \geq k$.
    Lemma \ref{lemma:sd_dim_leq_k_tree} tells us that $SDdim(H, S) \leq k$.
    Therefore $SDdim(H, S) = k$.
\end{proof}

\subsection{Lemma $\ref{lemma:monotonicity}$}

\begin{proof}[Lemma~\ref{lemma:monotonicity}]
    We use a proof by contradiction to demonstrate the monotonic nature of $SDdim$.
    Assume that $SDdim(H_{\sigma}, S \setminus \{x_1, ..., x_n \}) > SDdim(H, S)$.
    Let tree $T$ represent some self-directed tree on $(H, S)$ with a minimum root-to-leaf depth of $SDdim(H, S)$.
    Let $O_{root} = \sigma$.
    The version space and current point set of the node are now equivalent to $(H_{\sigma}, S \setminus \{ x_1, ..., x_n \})$.
    One can now construct the remaining part of the self-directed tree $T$ with a minimum path length of $SDdim(H_{\sigma}, S \setminus \{x_1, ..., x_n \})$.
    Therefore, $SDdim(H, S) \geq SDdim(H_{\sigma}, S \setminus \{x_1, ..., x_n \})$.
\end{proof}

\subsection{Lemma $\ref{lemma:SDdim_point}$}

\begin{proof}[Lemma~\ref{lemma:SDdim_point}]
    Assume that $\forall v \in S$, $\exists y_v, y'_v \in \mathcal{Y}, y_v \neq y'_v$ such that $SDdim(H_{(v, y_v)}, S \setminus \{v \}) \geq SDdim(H, S)$ and $SDdim(H_{(v, y'_v)}, S \setminus \{v \}) \geq SDdim(H, S)$.
    Then, one can build a self-directed tree a $T$ that has minimum depth $SDdim(H, S) + 1$ by placing all of $S$ in the the root node of $T$ and for each point $v \in S$, select $y_v, y'_v$ as the two labels.
    Such a construction has the property that the version space on either $(v, y_v)$ or $(v, y'_v)$ has a self-directed tree of minimum depth at least $SDdim(H, S)$, then $T$ has minimum depth $\geq SDdim(H, S) +  1$.
    However, we assumed that the largest self-directed tree on $(H, S)$ has a minimal depth of $SDdim(H, S)$, so there must exist a point $x \in S$ such that at most one label $y_x \in \mathcal{Y}$ has the property that $SDdim(H_{(x, y_x)}, S \setminus \{x \}) \geq SDdim(H, S)$.
\end{proof}

\section{Details on $M_{best}$ vs. $M_{sd}$}
\label{appendix:m_best_proofs}

\subsection{Proof of Theorem $\ref{thm:m_best}$}

\begin{proof}[Theorem~\ref{thm:m_best}]
    If $M_{best}(H) = 1$, this implies that there are at least two different classifiers that labels some point $x \in \mathcal{X}$ differently. 
    From Eq. \ref{eq:all_mistake_bounds}, we know that $M_{sd}(H) \leq 1$.
    To calculate $M_{sd}(H)$, we can first compute $SDdim(H)$ by showing that there exists a finite $S \subseteq \mathcal{X}$ where $SDdim(H, S) = 1$.
    One can build a self-directed tree with $S = \{x\}$ in the root node, and let the two edges correspond to the two different realizable labels.
    Then, the minimal root-to-leaf path length is $1$ so $M_{sd}(H) = SDdim(H) = 1$. 

    To show the other direction, we first assume that $M_{sd}(H) = 1$.
    From Theorem \ref{thm:sddim(H,S)}, we know that if $M_{sd}(H) = 1$, then $M_{SD-SOA}(H) = 1$.
    After the SD-SOA makes its first mistake, any concept still consistent with the sequence of labels and points can be then returned as the true target concept since no more mistakes can be made.
    Therefore, if the best offline algorithm chose the same sequence of points and labels as the SD-SOA assuming the adversary never replied with a mistake, then a single mistake would yield the set of consistent concepts. 
    As a result, $M_{best}(H) = 1$.
\end{proof}

\subsection{Proof of Theorem $\ref{thm:m_best_learnability}$}

\begin{proof}[Theorem~\ref{thm:m_best_learnability}]
    Fix any $n \geq 3$.
    Let the set $\hat{P} = \{ P_1, P_2, ..., P_{(2^n)!} \}$ denote all the possible permutations of the numbers $\{1, ..., 2^n\}$.
    Each $P_i \in \hat{P}$ can be interpreted as a uniquely-ordered list of the index of elements in $\mathcal{X}_n$.
    For example, $P_i(k)$ denotes the index of the element in the $k^{th}$ position. 
    Define $\mathcal{Y}_n = \hat{P} \times \{0, 1\}$.

    To define the concept class $\mathcal{H}_n$, let $\mathcal{H}_n = \bigcup\{ H_1, H_2, ..., H_{(2^n)!} \}$ where each $H_i \in \mathcal{H}_n$ is defined as $H_i = \{h^i_1, h^i_2, ..., h^i_{2^n} \}$ containing $2^n$ classifiers.
    Then, for each $h^i_m \in H_{i}, \forall k, 1 \leq k \leq 2^n, h^i_m(x_k) = (P_i, \mathbbm{1}_{[P_i(k) \leq m]} )$.

    The idea behind this construction is to construct a threshold problem on every possible permutation of the instance space $\mathcal{X}_n$.
    For any label $(P, y) \in \mathcal{Y}_n$, the first coordinate $P$ indicates the ordering of the elements in $\mathcal{X}_n$ and the second coordinate $y$ highlights the threshold classifier label of either $0$ or $1$.
    We intentionally design the concept class $\mathcal{H}_n$ such that for any sequence of instances chosen by the $M_{best}$ learner before the prediction process starts, the adversary can force the learner into a threshold classification problem for which it has chosen a poor ordering.

    Now, we formally analyze the mistake-bound of $M_{best}(\mathcal{H}_n)$ as compared to $M_{sd}(\mathcal{H}_n)$.
    For calculating the self-directed complexity, we can directly compute $SDdim(H_n)$ by playing the labelling game.
    Let $S \subseteq \mathcal{X}_n$ be a set of points chosen by the adversary that is presented to the learner at the beginning of the labelling game.
    Remember that two distinct labels must be chosen for every point in $S$, so let $(P, y) \in \mathcal{Y}_n$, where $P \in \hat{P}$ and $y \in \{0, 1\}$, represent the selected label for the selected point $x \in S$.
    The label $P$ will automatically reveal the ordering of points underlying that threshold classification problem.
    As a result, player B will borrow its strategy for threshold classifiers (see Example \ref{example2}) on the remaining set of points.
    It selects the $x_k \in S \setminus \{x\}$ having \emph{largest} $P(k)$; 
    Player A must offer labels $(P,0), (P,1)$ for this $x_k$ (otherwise, offering some $(P',y)$ with $P' \neq P$ would lead to a game payout $-1$), 
    and player B can then choose label $(P,1)$, 
    at which point player A must label the remaining points $(P,1)$ 
    and thereby end the game (again, to avoid a payout of $-1$).
    The labelling game lasts $2$ rounds, so $M_{sd}(\mathcal{H}_n) = SDdim(\mathcal{H}_n) \leq 2$.
    The fact that $SDdim(\mathcal{H}_n) \geq 2$ as well will follow from 
    the fact that $M_{best}(\mathcal{H}_n) \geq 2$ (established below)
    together with Theorem~\ref{thm:m_best}.

    To calculate $M_{best}(\mathcal{H}_n)$, consider the case $S = \mathcal{X}_n$, and notice that the learner is allowed to choose any sequence of examples $\sigma$ where $\{\sigma[1], \sigma[2], ...\}$ represent the order of examples chosen before the prediction process starts. 
    For the first point in the sequence, $\sigma[1]$, let $(P', y') \in \mathcal{Y}_n$ be the label predicted by the learner.
    The adversary will instantly respond with a mistake by replying with the true label $(P, y)$ where $y \neq y'$, and $P(2^{n-1}) = \sigma[1]$, $P(2^{n-2}) = \sigma[2]$, $P(2^{n-1} + 2^{n-2}) = \sigma[3]$, and so on and so forth.
    In essence, the sequence $\sigma$ is treated as an array representation of a binary tree where the permutation $P$ corresponds to an in-order (i.e., breadth-first search order) traversal of this binary tree (similar to the binary tree construction in $M_{online}$ vs $M_{sd}$). 
    As a result, $\sigma$ corresponds to a series of queries that searches through the midpoints of intervals on $\mathcal{X}_n$ dictated by the ordering $P$, where the intervals become progressively smaller. 
    From here, we may follow a known argument for the mistake bound of learning threshold classifiers when examples are given as a breadth-first traversal of the corresponding Littlestone tree.
    Specifically, the learner has already made one mistake at the root $x_{\sigma[1]}$.
    Depending whether the value $y$ was $0$ or $1$, 
    the adversary will choose the left or right child ($x_{\sigma[2]}$ or $x_{\sigma[3]}$) respectively as the next focus, 
    and when the learner reaches this point and predicts a label $(P',y'')$, 
    the adversary will respond with $(P,y''')$ with $y''' \neq y''$, 
    which shifts the next focus point as the left or right child in the tree,
    based on whether $y'''$ is $0$ or $1$, respectively, and so on.
    Thus, the learner makes (at least) one mistake per depth in the tree, 
    namely on these ``focus'' points which form a root-to-leaf path.
    Since the binary tree is a perfect binary tree of depth $n$, 
    the learner makes at least $n$ mistakes in this process. 
    Thus, $M_{best}(\mathcal{H}_n) \geq n$.
\end{proof}

\subsection{Proof of Corollary \ref{corollary:m_best_vs_m_sd}}

\begin{proof}[Corollary~\ref{corollary:m_best_vs_m_sd}]
    From Theorem $\ref{thm:m_best_learnability}$, we can construct triplets $(\mathcal{X}_n, \mathcal{Y}_n, \mathcal{H}_n)$ for any $n \geq 3$ where $M_{best}(\mathcal{H}_n) \geq n$ and $M_{sd}(\mathcal{H}_n) = 2$ on that space.
The overarching idea in this proof is to combine each of these triplets as disjoint partitions of a bigger space, namely $(\mathcal{X}, \mathcal{Y}, \mathcal{H})$, such that solving the threshold problem on any one partition results in only one remaining classifier realizable on the sequence of points and labels.
The classifiers in $\mathcal{H}_n$ are designed to label points $x \notin \mathcal{X}_n$ a specific label such that only that classifier can be realizable on the remaining sequence of points and labels.
In both $M_{best}$ and $M_{sd}$, it is then in the adversary's advantage not to include points from different threshold problem spaces.
Since $M_{best}(\mathcal{H}, \mathcal{X}_n) \geq n$ and $M_{sd}(\mathcal{H}, \mathcal{X}_n) = 2$ for every $n \geq 3$, then $M_{best}(\mathcal{H}) = \infty$ while $M_{sd}(\mathcal{H}) = 2$.
In the remainder of the proof, we explain in technical detail how we produce such a construction.

For each $n \in \mathbb{N}$ where $n \geq 3$, let $\mathcal{X}_n = \{ (n, x_1), ..., (n, x_{2^n}) \}$ be a set containing $2^n$ unique coordinate pairs.
Then, we apply Theorem $\ref{thm:m_best_learnability}$ on each $\mathcal{X}_n$ to obtain a label set $\mathcal{Y}_n$ and concept class $\mathcal{H}_n$ such that $M_{best}(\mathcal{H}_n) \geq n$ and $M_{sd}(\mathcal{H}_n) = 2$ with respect to $\mathcal{X}_n$.  We specifically consider the definition $\mathcal{H}_n$ from the proof of Theorem~\ref{thm:m_best_learnability}, slightly modified as described below.
We let the instance space $\mathcal{X} = \cup_{n \in \mathbb{N}, n \geq 3} \mathcal{X}_n$ and $\mathcal{H} = \cup_{n \in \mathbb{N}, n \geq 3} \mathcal{H}_n$.
From Theorem $\ref{thm:m_best_learnability}$, each label space $\mathcal{Y}_n = \hat{P} \times \{0, 1\}$ where $\hat{P}$ is the set of all permutations over $\mathcal{X}_n$.
We append an extra set of labels, $\{n \} \times \hat{P} \times \{1,...,2^n \}$, to each $\mathcal{Y}_n$ so that $\mathcal{Y}_n = \hat{P} \times \{0, 1\} \cup ( \{ n\} \times \hat{P} \times \{1,...,2^n \})$.
Then, let $\mathcal{Y} = \cup_{n \in \mathbb{N}, n \geq 3} \mathcal{Y}_n$.

We will now define how the classifiers $h \in \mathcal{H}_n$ classify points $(k, x_j) \notin \mathcal{X}_n$.
As in the proof of Theorem $\ref{thm:m_best_learnability}$, each $\mathcal{H}_n = \bigcup \{H_1, H_2, ..., H_{(2^n)!} \}$ and for every $H_i \in \mathcal{H}_n$, each $H_i = \{h^i_1, h^i_2, ..., h^i_{2^n} \}$.
Additionally, for each $\hat{P}$ corresponding to $\mathcal{X}_n$, $\hat{P} = \{ P_1, P_2, ..., P_{(2^n)!} \}$ where each $P_i \in \hat{P}$ is a permutation of $\mathcal{X}_n$.
For each $h^i_m \in H_i$, and for any $(k, x_j) \notin \mathcal{X}_n$, we define $h^i_m ((k, x_j)) = (n, P_i, m)$ where $(n, P_i, m) \in \mathcal{Y}_n$.
The intention is that, for any $h \in \mathcal{H}_n$, 
for any point $x \notin \mathcal{X}_n$, the label $h(x)$ entirely reveals 
the identity of $h$.

We now shift towards calculating the mistake-bounds $M_{best}(H)$ and $M_{sd}(H)$.
To calculate $M_{best}(H)$, realize that $M_{best}(H) = \max_{S \subseteq \mathcal{X}, |S| < \infty} M_{best}(\mathcal{H}, S)$ and $M_{best}(\mathcal{H}, \mathcal{X}_n) \geq n$ for each $n \geq 3$ (established in the proof of Theorem~\ref{thm:m_best_learnability}).
As a result, $\lim_{n \rightarrow \infty} M_{best}(\mathcal{H}, \mathcal{X}_n) = \infty$ so $M_{best}(\mathcal{H}) = \infty$.
To calculate $M_{sd}(\mathcal{H})$, we turn to the labelling game.
If there exists an $n \in \mathbb{N}$ where the adversarially selected subset $S \subseteq \mathcal{X}_n$, then from Theorem $10$ we know that the labelling game lasts for at most $2$ rounds (noting that, if player A proposes any of the newly-added labels, player B can simply choose that label, and player A must then label the remaining points and end the game, or else face a payout of $-1$). 
If the adversary chooses some $S \subseteq \mathcal{X}$ such that $S \cap \mathcal{X}_n \neq \emptyset$ and $S \cap \mathcal{X}_{n'} \neq \emptyset$ where $n \neq n'$, then 
let $x \in S \cap \mathcal{X}_n$ and $x' \in S \cap \mathcal{X}_{n'}$.
If, after player A labels some points,
one of these remains unlabeled, then player B can choose to label it 
(in this case, it does not matter what label it chooses).
Then if, after another round of labeling by player A, 
the other of these two points is still not labeled, 
player B can choose to label the other point.
Regardless of which player labels the points, 
after at most $2$ rounds, both $x$ and $x'$ have been labeled 
by some $y$ and $y'$, respectively.
If neither of these labels is of the type 
in some $\{n''\} \times \hat{P} \times \{1,\ldots,2^{n''}\}$, 
then already we have a non-realizable labeling, 
and the payout will be $-1$.
Otherwise, at least one of these labels is of that type, 
and therefore there is only one $h \in \mathcal{H}$ 
consistent with that label.
Thus, either the labels already given are already inconsistent with this $h$ 
(yielding a payout of $-1$) 
or else player A must label all remaining points in agreement with this $h$ 
and thus end the game, 
to avoid a payout of $-1$ from player B choosing an inconsistent label 
on its next turn.
Altogether, there were at most $2$ rounds in this case as well.
Thus, $M_{sd}(\mathcal{H}) \leq 2$.  We also know $M_{sd}(\mathcal{H}) \geq 2$ due to $M_{best}(\mathcal{H}) > 1$ together with Theorem~\ref{thm:m_best}).
Therefore, we show a learnability gap on $(\mathcal{X}, \mathcal{Y}, \mathcal{H})$ where $M_{best}(\mathcal{H}) = \infty$ and $M_{sd}(\mathcal{H}) = 2$.
\end{proof}

\section{$k$-Interval Classifiers and Details on Other Learnability Gaps}
\label{appendix:learnability_gaps}

\subsection{$k$-Interval Classifiers}

Let $H^k$ be the class of $k-$interval classifiers over the nonnegative real line, $[ 0, \infty)$, which is set as the instance space $\mathcal{X}$. 
Formally speaking, $H^k = \{h_{(a_1, b_1),...,(a_k, b_k)}: (a_1, b_1),...,(a_k, b_k) \in [ 0, \infty)$ are disjoint, $ a_i < b_i, 1 \leq i \leq k \}$ and $h_{(a_1, b_1),...,(a_k, b_k)} = \mathbbm{1}_{x \in (a_1, b_1) \cup \cdot \cdot \cdot \cup (a_k, b_k)}$ which is $1$ if $x$ lies in any of the $k-$intervals and $0$ elsewhere. 
In the case of a $k-$interval classifier, we will show that the $M_{sd}(H^k) = 2k$ by proving $SDdim(H^k) = 2k$.
To find the self-directed complexity, we will play the labelling game on $H^k$.
Letting $S \subseteq \mathcal{X}$ be any finite subset of points selected by the adversary, the learner will pick the leftmost point and label it a $1$.
After the adversary has labelled some subset of points, the learner will pick the leftmost point and label it a $0$.
The learner will continue this strategy for $2k - 2$ rounds.
After $2k$ rounds, the learner has had the opportunity to secure the location of the $k-$intervals since the alternating labelling scheme determines the boundaries of the intervals.
With the $k-$intervals set after $2k$ rounds, the labelling game must end because if the adversary were to leave any point unlabelled, the learner can create an unrealizable sequence. 
Additionally, we know that $VC(H^k) = 2k$, so $M_{sd}(H^k) = SDdim(H^k) = 2k$.

\subsection{Different Mistake-Bound Models for Online and Offline Learning}

In many ways, the study of self-directed learning was motivated to understand the potential benefits \cite{goldman1994power} one could gain by enhancing the power of the learner in the online learning scenario. 
On the concept class $H$, the term $M(\mathcal{A}, S, \sigma, f^*)$ denotes the number of mistakes algorithm $\mathcal{A}$ will make on the sequence $\sigma$ generated from $S$ given the target concept $f^* \in H$.
In adversarial online learning, the learner doesn't have the ability to choose its instances, so the mistake-bound of online learning is defined in the following way,

\begin{align*}
    M_{online}(\mathcal{A}, H) = \max_{S \subseteq \mathcal{X}} \max_{f^* \in H} \max_{\sigma} M(\mathcal{A}, S, \sigma, f^*) 
\end{align*}

\noindent and it naturally follows that 

\begin{align*}
    M_{online}(H) = \min_{\mathcal{A}} M(\mathcal{A}, H).
\end{align*}

The notion of \textit{offline} learning was introduced by \cite{ben1997online} in an effort to understand how the uncertainty of not knowing the future instances affect the mistake-bound of a learning model. 
From offline learning, two different paradigms were introduced: the \textit{worst sequence} and \textit{best sequence} offline models.
For the worst sequence offline model, an adversary chooses a full sequence of instances before the learner starts making predictions.
In the \textit{best sequence} offline model, the learner is allowed to choose the sequence of instances (without looking at the true labels) before the prediction process starts. 
In both cases, the full sequence of instances is known by the learner before the prediction process starts. 
From a technical standpoint, we now define the mistake-bound learning models for each of the offline models.
Denote by $\mathcal{A}[\sigma]$ an offline learning algorithm whose input is a sequence $\sigma$ which is an ordering of some $S \subseteq \mathcal{X}$.
Let the mistake-bound of an algorithm $\mathcal{A}[\sigma]$ on sequence $\sigma$, given a target concept $f^* \in H$, and some subset $S \subseteq \mathcal{X}$ be

\begin{align*}
    M(H, S, \mathcal{A}[\sigma]) = \max_{f^* \in H} M(S, \mathcal{A}[\sigma], f^*).
\end{align*}

\noindent Then, the worst sequence offline model is then predicated on finding the worst sequence $\sigma$ that maximizes the number of mistakes for even the best algorithm $\mathcal{A}$:

\begin{align*}
    M_{worst}(H) = \max_{S \subseteq \mathcal{X}} M(H, S) = \max_{S \subseteq \mathcal{X}} \max_{\sigma} \min_{\mathcal{A}} M(H, S, \mathcal{A}[\sigma]).
\end{align*}

\noindent To define the mistake-bound model for the best sequence, we note that the learner chooses the sequence so we take the minimum over all sequences $\sigma$ for the best algorithm $\mathcal{A}$:

\begin{align*}
    M_{best}(H) = \max_{S \subseteq \mathcal{X}} M(H, S) = \max_{S \subseteq \mathcal{X}} \min_{\sigma} \min_{\mathcal{A}} M(H, S, \mathcal{A}[\sigma]).
\end{align*}

\noindent As a result, we get the following ordering on the mistake-bounds for any concept class $H
$ and instance space $\mathcal{X}$:

\begin{align*}
    M_{online}(H) \geq M_{worst}(H) \geq M_{best}(H) \geq M_{sd}(H).
\end{align*}

\noindent For a more detailed description, refer to \cite{ben1995self,ben1997online}.

\subsection{$M_{online}$ vs $M_{sd}$} 
\label{appendix:m_online_vs_m_sd}

To describe the learnability gap between $M_{online}$ and $M_{sd}$, we use the concept class of threshold classifiers $H$ on $\mathcal{X} = [0, a)$ where $a \in \mathbb{R}_{>0}$. 
From Example $2$, we know that $M_{sd}(H) = 2$, and we also show that $M_{online}(H) = \infty$.
To calculate $M_{online}(H)$, we note that Littlestone's result \citep{littlestone:88,ben1997online} shows us the equivalence $M_{online}(H) = LD(H)$ where $LD$ stands for the Littlestone dimension.
To compute $LD(H)$, we aim to construct the largest possible perfect binary mistake-tree.
Let the root of the tree correspond to $a/2$ which is the midpoint of the interval $[0, a)$.
Then the left child would contain the value $a/4$ which is the midpoint of the interval $[0, a/2)$.
The right child will have the midpoint of the interval $[a/2, a)$ which is $3a/4$.
Continuing in this fashion, one constructs a perfect binary tree where at each level, the midpoints correspond to increasingly smaller intervals.
Since any interval on the real line with different endpoints will always contain a midpoint, the depth of this mistake-tree is $\infty$ so $M_{online}(H) = LD(H) = \infty$.

\subsection{Query Learning vs $M_{sd}$} Query learning is a topic that has been widely studied \citep{angluin:87,angluin:04,hellerstein:96,chase:20} and is linked to self-directed learning due to their inherent similarities.
However, the subtleties between the two learning models do induce important differences.
Formally, we define $QC_{MQ}$ as the query complexity of the membership querying model, and on the class of singletons $H$ with $\mathcal{X} = \mathbb{N}$, we show that $QC_{MQ}(H) = \infty$ while $M_{sd}(H)=1$.

We know from Example $1$ that $M_{sd}(H)=1$, so we focus on computing $QC_{MQ}(H)$.
For any query $x$ made by the learner, the adversary knows that if it responds with a label of $1$, then only the classifier $\mathbbm{1}_{\{ x \}}$ is consistent with that classification scheme. 
Instead, the adversary will purposefully reply with a label of $0$ due to the fact that there will always be more than one concept consistent with such a labelling scheme. 
This holds $\forall x \in \mathbb{N}$ and since $|\mathbb{N}|$ is countably infinite, then $QC_{MQ}(H) = \infty$ while $M_{sd}(H) = 1$.

\subsection{$M_{worst}$ vs $M_{sd}$}

To describe the learnability gap between $M_{worst}$ and $M_{sd}$, we first explore the relationship between $M_{worst}$ and $M_{online}$.
In Theorem $6$ from \cite{ben1997online}, they proved a result for any concept class $H$, instance space $\mathcal{X}$, and $S \subseteq \mathcal{X}$, $M_{worst}(H, S) = \Omega(\sqrt{log M_{online}(H, S)})$.
From Appendix \ref{appendix:m_online_vs_m_sd}, we know that if $H$ represents the class of threshold classifiers, $M_{online}(H) = \infty$.
Using the result of Theorem $6$ from \cite{ben1997online}, we then show that $M_{worst}(H) = \infty$.
As a result, on the class of threshold classifiers $H$, $M_{worst}(H) = \infty$ while $M_{sd}(H) = 1$.

\subsection{$VC$ vs $M_{sd}$}
\label{appendix:vc_m_sd}

While the $VC(H)$ serves as a lower bound on $M_{sd}(H)$, there exists a notable learnability gap between $M_{sd}$ and $VC$-dimension.
We draw upon Lemma 10 from \cite{ben1995self} which produces an example of a family of concept classes where $VC$-dimension is fixed but $M_{sd}$ scales arbitrarily. 
While this example was studied under $S = \mathcal{X}$, it is actually the case that $M_{sd}(H) = M_{sd}(H, \mathcal{X})$ so giving the entire instance space produces the worst-case scenario in the self-directed case.
Below, we reproduce the concept class from their paper. 
\\

\noindent \textbf{Lemma 10 (restated from \cite{ben1995self}):} For every pair $(d, n)$ of natural numbers such that $d \geq 3$, there exists a concept class $H^d_n$ such that:

\begin{enumerate}
    \item $H^d_n$ is a class of subsets $\{1, ..., (3^n)d \}$.
    \item $|H^d_n| = 3^n 2^d$.
    \item $VC(H^d_n) = d$.
    \item For every point $x$ in the domain of $H^d_n$, and for every label of $x$, the set of concepts in $H^d_n$ which are consistent with this labeling of $x$ contains a copy of $H^d_{n-1}$.
\end{enumerate}

\noindent The concept class $H^d_n$ can be illustrated with the following diagram below:

\begin{center}
\begin{tabular}{ |c|c|c| } 
 \hline
    $H^d_{n-1}$ & $\begin{matrix}
    1 & \cdots & 1\\
    \vdots & \ddots & \vdots \\
    1 & \cdots & 1
    \end{matrix}$ & $\begin{matrix}
    0 & \cdots & 0\\
    \vdots & \ddots & \vdots \\
    0 & \cdots & 0
    \end{matrix}$ \\ 
 \hline
 $\begin{matrix}
    0 & \cdots & 0\\
    \vdots & \ddots & \vdots \\
    0 & \cdots & 0
    \end{matrix}$ & $H^d_{n-1}$ & $\begin{matrix}
    1 & \cdots & 1\\
    \vdots & \ddots & \vdots \\
    1 & \cdots & 1
    \end{matrix}$ \\ 
 \hline
 $\begin{matrix}
    1 & \cdots & 1\\
    \vdots & \ddots & \vdots \\
    1 & \cdots & 1
    \end{matrix}$ & $\begin{matrix}
    0 & \cdots & 0\\
    \vdots & \ddots & \vdots \\
    0 & \cdots & 0
    \end{matrix}$ & $H^d_{n-1}$ \\ 
 \hline
\end{tabular}
\end{center}

\noindent To interpret this matrix, one can let the rows correspond to the functions in the concept class and the columns represent the elements in the instance space.
The value in some cell $(i, j)$ then corresponds to the classification of element $j$ by classifier $i$. 
$H^d_0$ is defined as a $2^d \times d$ matrix that contains all the functions that shatter a set of size $d$ which correspond to $\{ 0, 1 \}$-valued vectors of length $d$.

On this concept class, $H^d_0$ is the largest block that is shattered so $VC(H^d_n) = d$.
According to Corollary 11 in \cite{ben1995self}, the $M_{sd}(H^d_n) = n + d$.
One can also easily calculate the self-directed complexity by computing $SDdim(H^d_n)$ through the labelling game.
Assume $S = \mathcal{X}$.
Notice that for any point labelled by the self-directed learner, either label contains the full subset of some $H^d_{n-1}$ block.
It is then in the adversary's advantage to label points that are in not in the full subset of the remaining $H^d_{n-1}$ block. 
The learner now has to choose some remaining point which lies in that $H^d_{n-1}$ block. 
Similarly, either label of this point contains a full $H^d_{n-2}$ block.
This process keeps repeating until the set of target concepts is narrowed down to some $H^d_0$ block for which it takes $d$ additional rounds of the labelling game before termination. 
This analysis of the labelling game leads us to see that $SDdim(H^d_{n}) = n + d$ so $M_{sd}(H^d_n) = n + d$.

\subsection{$TD$ vs $M_{sd}$}

In this section we discuss the learnability gap between the minimum teaching dimension and $M_{sd}$.
Let $H$ be some concept class, and let $\mathcal{X}$ be the corresponding instance space. 
A teaching set for some target concept $h \in H$ is defined as some subset $S \subseteq \mathcal{X}$ such that $\forall h' \in H, h' \neq h, \exists x \in S$ such that $h'(x) \neq h(x)$. 
The minimum teaching set for some $h \in H$ is defined as the smallest integer $k \in \mathbb{N}$ such that there exists a teaching set $S$ with $|S|=k$.
Formally speaking, we let $TD(H, h)$ be defined as the size of the smallest teaching set for concept $h$.
Then, the teaching dimension, $TD(H)$, is defined as $TD(H) = \max_{h \in H} TD(H, h)$.

While the teaching dimension acts as a lower bound for the self-directed complexity, one can demonstrate a learnability gap between the two where the teaching dimension is constant but the self-directed complexity is arbitrarily large.
Let $d \geq 3$, $n \geq 0$, $H = H^d_n$ and $\mathcal{X}$ correspond to the concept class and instance space described in Lemma 10 by \cite{ben1995self} illustrated in Appendix \ref{appendix:vc_m_sd}.
To calculate $TD(H)$, note that each $h \in H$ lies in some group of $H^d_0$ classifiers shattered by some set of $d$ points. 
Take some $h \in H^d_0$ and let $S$ correspond to the $d$ points shattered by $H^d_0$.
Now, it's clear that $\forall h' \in H^d_0, h' \neq h$, $\exists x \in S$ such that $h'(x) \neq h(x)$. 
If we want to extend the concept class to $H^d_n$, we must be careful with the selected $h$.
If $h$ corresponds to either $\{1\}^d$ or $\{0\}^d$ on the $d$ points, then it will coincide with another column block that is either all 0's or 1's.
In this scenario, one additional point is needed to differentiate the classifiers. 
Therefore, it follows that $TD(H) = d + 1$ whereas $M_{sd}(H) = n + d$.

\section{Proof of Theorem $\ref{thm:vc_one}$}
\label{appendix:vc_one}

Let $\mathcal{X}$ be any set of either finite or infinite cardinality.
The concept class $H$ will contain functions $h: \mathcal{X} \rightarrow \{0, 1\}$ that are identified with the set $h^{-1}(1) = \{x \in \mathcal{X}: h(x) = 1\}$.
On $\mathcal{X}$, we define a partially ordered set or a tree as $(\mathcal{X}, \preceq)$ such that for any $y \in \mathcal{X}$, the initial segment $\{x \in \mathcal{X}: x \preceq y \}$ is well-ordered.
From \citet{ben-david:15}, we state the following two statements which are equivalent:
\begin{enumerate}
    \item $VCdim(H) = 1$.
    \item There exists some tree ordering over $\mathcal{X}$ such that for any $h \in H$, for some $y \in \mathcal{X}$, if $h(y)=1$, then $h^{-1}(1) = \{x \in \mathcal{X}: x \preceq y \}$
\end{enumerate}

\noindent \citet{goldman1994power} proved that if a concept class $H$ had $VC(H) = 1$, then $M_{sd}(H) = 1$.
We showcase a simple proof for the same result using our dimension.\\

\begin{proof}[Theorem~\ref{thm:vc_one}]
Since $M_{sd}(H) \geq VC(H)$ holds for any $H$ and $M_{sd}(H) = 1$ implies that there exists a point that can be either labeled $0$ or $1$, then $VC(H) = 1$.
We now prove the other direction.
Let some $H$ with $VC(H)=1$ be represented as partial tree ordering over $\mathcal{X}$ under the relation $\preceq$.
Now, we demonstrate a strategy to show that $M_{sd}(H) = 1$ by playing the labelling game.
The game initializes with the adversary picking some $S \subseteq \mathcal{X}$ which can be represented under a partial tree ordering as shown by \citet{ben-david:15}. 
Then, the learner intentionally chooses the greatest element, call it $x$, by picking some leaf node in the tree and labelling it a $1$.
This implies that the initial segment $I_x$, containing all elements that are $\preceq x$, must correspond to some $h \in H$.
More specifically, for some $h \in H$, $I_x = h^{-1}(1)$.
Now, the adversary realizes that if any point in $I_x$ is labelled $0$, this would immediately lead to an unrealizable sequence.
Additionally, if any other point not in $I_x$ is labelled a $1$, then this would produce an unrealizable sequence because no such $h \in H$ is consistent with such a labelling scheme.
As a result, the adversary can opt to label all the remaining points or let the learner produce an unrealizable sequence with its next selection.
In either case, the payout of the game is $SDdim(H) = 1$, so $M_{sd}(H) = 1$ when $VC(H) = 1$.
\end{proof}

\end{document}